\newtheorem{theorem}{Theorem}
\newtheorem{proposition}{Proposition}
\newtheorem{corollary}{Corollary}
\newtheorem{lemma}[theorem]{Lemma}
\theoremstyle{definition}
\newtheorem{definition}{Definition}
\theoremstyle{remark}
\newcommand{\calT}{\mathcal{T}}
\newcommand{\calM}{\mathcal{M}}
\newcommand{\calX}{\mathcal{X}}
\newcommand{\calY}{\mathcal{Y}}
\newcommand{\calF}{\mathcal{F}}
\newcommand{\calD}{\mathcal{D}}
\newcommand{\calI}{\mathcal{I}}
\newcommand{\quant}{Q}
\newcommand{\abs}[1]{\left\lvert#1\right\rvert}
\newcommand{\norm}[2]{\left\lVert#2\right\rVert_{#1}}
\DeclareMathOperator*{\argmax}{arg\,max}
\newcommand{\Real}{\mathbb{R}}
\newcommand{\indicator}[1]{\mathbbm{1}\curlybrack{#1}}
\newcommand{\roundbrack}[1]{\left( #1 \right)}
\newcommand{\curlybrack}[1]{\left\lbrace #1 \right\rbrace}
\newcommand{\squarebrack}[1]{\left\lbrack #1 \right\rbrack}
\newcommand{\Prob}{\mathbb{P}}
\newcommand{\Exp}[2]{\mathbb{E}_{#1}\left\lbrack#2\right\rbrack}
\title{Distribution-free uncertainty quantification\\ for classification under label shift}
\author{Aleksandr Podkopaev$^{1,2}$, Aaditya Ramdas$^{1,2}$\\ 
	Department of Statistics \& Data Science$^1$\\
	Machine Learning Department$^2$\\
	Carnegie Mellon University\\
\texttt{\{podkopaev,aramdas\}@cmu.edu}}
\date{\today}
\begin{document}
\maketitle

\begin{abstract}
  Trustworthy deployment of ML models requires a proper measure of uncertainty, especially in safety-critical applications. We focus on uncertainty quantification (UQ) for classification problems via two avenues --- prediction sets using conformal prediction and calibration of probabilistic predictors by post-hoc binning --- since these possess distribution-free guarantees for  i.i.d. data. Two common ways of generalizing beyond the i.i.d. setting include handling \emph{covariate} and \emph{label} shift. Within the context of distribution-free UQ, the former has already received attention, but not the latter. It is known that label shift hurts prediction, and we first argue that it also hurts UQ, by showing degradation in coverage and calibration. Piggybacking on recent progress in addressing label shift (for better prediction), we examine the right way to achieve UQ by reweighting the aforementioned conformal and calibration procedures whenever some unlabeled data from the target distribution is available. We examine these techniques theoretically in a distribution-free framework and demonstrate their excellent practical performance. 
\end{abstract}

\tableofcontents

\section{Introduction}
It is common in classification to assume access to labeled data $\curlybrack{(X_i,Y_i)}_{i=1}^n$ where $X_i\in \calX$, $Y_i\in\calY=\curlybrack{1,\dots,K}$ denote the covariates, or features, and the labels respectively, and the pairs $(X_i,Y_i)$, $i =1,\dots,n$ are sampled i.i.d. from some unknown joint distribution $P$ over $\calX\times\calY$. Such dataset is used to learn a predictor $f$, a mapping from $\calX$ to rankings or distributions over $\calY$, by optimizing some loss/risk. However, accurate point prediction alone can be insufficient in certain applications, e.g., medical diagnosis, where trustworthy deployment of a model requires a valid measure of uncertainty associated with corresponding predictions. 

Common prediction models are mappings of the form $f:\calX\to \Delta_K$, where $\Delta_K$ refers to the probability simplex in $\mathbb{R}^K$, and a prediction on a new (test) point $X\in\calX$ is performed by picking the top-ranked class according to $f(X)$. One hopes that the output vector $f(X)$ reflects the true conditional probabilities of classes given the observed input, but this won't be true without additional distributional and modeling assumptions, that are typically strong and unverifiable in practice. In this work, we focus on two categories of post-processing procedures ---  calibration via post-hoc binning and conformal prediction --- that use held-out data (referred to as \emph{calibration} dataset) and a trained model to construct a corresponding \emph{wrapper} that provably quantifies predictive uncertainty when no distributional assumptions are made about the data generating mechanism. (This generality comes at a certain price which we discuss further.)

We work in the context of \emph{distribution-free} uncertainty quantification and, in particular, focus on producing prediction sets (Section~\ref{sec:conf_pred_sets}) and calibrated probabilities (Section~\ref{sec:calibration}), which are complementary approaches for classifier UQ. While the former aims to produce a set of labels that contains the truth with high probability, the latter aims to amend the output of a probabilistic predictor so that it has a rigorous frequentist interpretation. It is useful to view the task through the lens of how actionable the corresponding notion is in a given setup. For example, in a binary classification setup with only 4 possible prediction sets $\curlybrack{\emptyset,\curlybrack{1},\curlybrack{2},\curlybrack{1,2}}$, if we were to observe prediction sets $\curlybrack{1,2}$ for large fraction of data points, one might end up quite disappointed. Thus, calibration could be a better way of quantifying uncertainty in the binary case. However, mathematical guarantees on calibration degrade with growing number of classes, but the aforementioned prediction sets become an attractive option with more labels. To summarize, neither of two notions provide a complete answer to the question of UQ for classification on their own, but together they represent two of the more principled distribution-free approaches towards UQ that are practically efficient and theoretically grounded.

In real-world applications, the \emph{target} distribution (generating test data) might not be the same as the \emph{source} distribution (generating training data) which can both hurt a model's generalization and lead to violation of the assumptions under which even assumption-lean UQ is valid. As meaningful reasoning about uncertainty on the target domain is hopeless without any additional information about the type of distribution shift, one may hope that it is possible to make simplifying assumptions which would allow us to perform appropriate corrections and construct procedures with non-trivial guarantees. Let $P,Q$ stand for the source and target distributions defined on $\calX\times\calY$, with $p,q$ being the PDFs or PMFs associated with $P$ and $Q$ respectively. Two common assumptions about the type of shift include \emph{covariate shift}~\citep{shimodaira2000improving}: $q(x)\neq p(x)$ but $q(y\mid x)=p(y\mid x)$, and \emph{label shift}~\citep{saerens2002adjusting}: $q(y)\neq p(y)$ but $q(x\mid y)=p(x\mid y)$. Both assumptions allow for a tractable interpretation when viewing the data generating process as a causal or anti-causal model respectively. For example, label shift is a reasonable assumption in medical applications where diseases $(Y)$ cause symptoms $(X)$: it is intuitive that some sort of correction might be required when a predictor trained in ordinary conditions is deployed during extreme ones, e.g., during a pandemic. 

Classic approaches for handling the aforementioned shifts make an assumption that the target support is contained in the source support, so that the covariate or label likelihood ratios (or \emph{importance weights}) $q(x)/p(x)$ or $q(y)/p(y)$ are well-defined. In applications, true weights are never known exactly, so the construction of consistent estimators has received a lot of attention in the ML community. For label shift dominant approaches that are still computationally feasible in modern high-dimensional regimes, and that perform estimation using labeled data only from the source distribution, include: (a) Black Box Shift Estimation (BBSE)~\citep{lipton2018label} and related Regularized Learning under Label Shift (RLLS)~\citep{azizzadenesheli2018label}, (b) Maximum Likelihood Label Shift (MLLS) and its variants ~\citep{saerens2002adjusting,alexandari2020mle}.

Within the context of distribution-free UQ, covariate shift has recently received attention. Focusing on regression, \citet{tibs2019conf} generalize construction of conformal prediction intervals to handle the case of known covariate likelihood ratio, and empirically demonstrate that the modified procedure works reasonably well with a plug-in estimator for the importance weights. For binary classification, \citet{gupta2020df_calib} propose a way of calibrating probabilistic predictors under covariate shift, and quantify miscalibration of the resulting estimator.

In this work, we close an existing gap for quantifying predictive uncertainty under label shift. Building on recent results about distribution-free calibration and (split-)conformal prediction, we adapt both to handling label shift through an appropriate form of reweighting. While typical application of those frameworks requires labeled data from the target to provide guarantees, we show that under reasonable assumptions one can still reason about uncertainty on the target even if only unlabeled data is available. In contrast to covariate shift where we observe $X$ and need the covariate likelihood ratio of $X$ to reweight, under label shift we observe $X$ but need the likelihood ratio of $Y$ to reweight. We also consider an alternative way of addressing label shift by performing label-conditional conformal classification~\citep{vovk2005algorithmic,vovk2016criteria,sadinle2019label,guan2019prediction}. 

\section{Conformal classification}\label{sec:conf_pred_sets}
We begin with the notion of prediction sets as a way of  quantifying predictive uncertainty. Formally, we wish to construct an uncertainty set function $C: \calX\to 2^\calY$, such that for a new (test) data point we can guarantee that:
\begin{equation}
\label{eq:def_pred_set}
    \Prob \roundbrack{Y_{n+1}\in C(X_{n+1})}\geq 1-\alpha.
\end{equation}
Conformal prediction~\citep{vovk2005algorithmic} has received attention recently both in regression~\citep{lei2018df_pred_inference,romano2019cqr, barber2021predictive} and classification~\citep{cauchois2020knowing,romano2020classification,angelopoulos2021classification} settings. It does not require making any distributional assumptions, which comes at the price of provably providing only \emph{marginal} guarantees as stated in~\eqref{eq:def_pred_set} which should be contrasted with possibly the ultimate goal of obtaining prediction sets with guarantees conditional on a given input.

Since conditional guarantees often require making restrictive and unverifiable assumptions, we instead focus on procedures that might provably provide marginal coverage guarantees but still tend to demonstrate good conditional coverage empirically. Being flexible, conformal prediction allows to proceed with both probabilistic and scoring classifiers. Within this framework, one usually defines a non-conformity score, a higher value of which on a given data point indicates that it is more `atypical'. For example, even if a classifier outputs only the ranking of predicted classes, a rank of the true class defines a valid non-conformity score. 
Keeping in mind that our techniques extend to other types of classifiers, we nevertheless focus on probabilistic predictors in this work which are also dominant in modern machine learning.
\subsection{Exchangeable conformal}\label{subsec:exch_conformal}

Consider a sequence of candidate nested prediction sets $\curlybrack{\calF_\tau(x)}_{\tau\in\calT}$: $\calF_{\tau_1}(x)\subseteq \calF_{\tau_2}(x)\subseteq\calY$ for any $\tau_1\leq \tau_2\in\calT$, with $\calF_{\inf\calT}=\emptyset$ and $\calF_{\sup\calT}=\calY$~\citep{gupta2019nested}. 
For any point $(x,y)\in\calX\times\calY$ define
\begin{equation}
\label{eq:nested_set_radius}
    r(x,y):= \inf\curlybrack{\tau\in\calT: y\in \calF_\tau(x)},
\end{equation}
as the smallest radius of the set in a sequence $\curlybrack{\calF_\tau(x)}_{\tau\in\calT}$ that captures $y$. Within split-conformal framework, available dataset is split at random into two parts: the first is used to construct a nested sequence and the second is used to select the smallest $\tau^\star$ that guarantees validity. 


If the true class-posterior distribution $\pi_y(x) = \Prob\squarebrack{Y=y\mid X=x}$ is known, the optimal prediction set for any $x\in\calX$ with conditional coverage guarantee is based on the corresponding density level sets~\citep{vovk2005algorithmic,lei2013distribution,gupta2019nested,sadinle2019label}: one should pick the largest $\tau_\alpha(x)$ and include all labels with probabilities $\pi_y(x)$ exceeding $\tau_\alpha(x)$ so that the corresponding total probability mass is at least $1-\alpha$. When ties are present, such procedure can yield conservative sets, e.g., if for some $x\in\calX$ all classes are equally probable in a 10-class problem, then $\tau_\alpha(x)=0.1$ and the proposed set would simply be $\calY$. For the discussion that follows we assume that there are no ties or that they are broken as formally discussed in Appendix~\ref{appsubsec:tie_break}. Then, to construct the optimal prediction set, one should start with an empty one and keep including labels as long as the total probability mass of labels included before is less than $1-\alpha$. Formally,
\begin{align}
    C^{\mathrm{oracle}}_\alpha(x) & :=\curlybrack{y\in\calY: \rho_y(x;\pi)< 1-\alpha}, \label{eq:oracle_nonrand_pred_set_equiv_def}\\
    \text{where} \quad  \rho_y(x;\pi) & := \sum_{y'=1}^K \pi_{y'}(x) \indicator{\pi_{y'}(x)> \pi_y(x)} \nonumber
\end{align}
is the total probability mass of labels that are more likely than $y\in\calY$. Notice that for any $x\in \calX$ and the corresponding most likely label $y^\star$ it holds that  $\rho_{y^\star}(x;\pi)=0$. When an estimator $\widehat{\pi}$ of the true conditional distribution is used,
split-conformal framework provides a way of updating the threshold $1-\alpha$ in~\eqref{eq:oracle_nonrand_pred_set_equiv_def} in order to retain coverage guarantees. However, naive conformalization of the nested sequence suggested by the form~\eqref{eq:oracle_nonrand_pred_set_equiv_def} yields prediction sets with correct marginal coverage but typically inferior conditional coverage in practice. Due to that reason and a desire of consistency, i.e., recovering the oracle prediction sets from the conformal ones in the limit, we instead use a randomized version of~\eqref{eq:oracle_nonrand_pred_set_equiv_def} defined as
\begin{equation}\label{eq:oracle_rand_ps}
    \widetilde{C}^{\mathrm{oracle}}_\alpha(x)=\curlybrack{y: \rho_y(x;\pi)+u\cdot \pi_y(x) \leq 1-\alpha},
\end{equation}
where $u$ is a realization of $\mathrm{Unif}\roundbrack{[0,1]}$, sampled independently of anything else~\citep{vovk2005algorithmic,romano2020classification}. Note that replacing strict inequality by a non-strict does not expand the prediction set as equality happens with zero probability and that induced randomization can result in exclusion only of a single label from the set $C^{\mathrm{oracle}}_\alpha(x)$. 
The form of the optimal prediction sets~\eqref{eq:oracle_rand_ps} suggests to consider the following nested sequence:
\begin{equation}\label{eq:nested_ps_seq}
    \calF_{\tau}(x,u;\widehat{\pi}) = \curlybrack{y\in\calY: \rho_y(x;\widehat{\pi})+u\cdot \widehat{\pi}_y(x)\leq \tau},
\end{equation}
for $\tau\in\calT=[0,1]$. Then for any triple $(X,Y,U)$ the corresponding radius~\eqref{eq:nested_set_radius}, or score, is given by
\begin{align}
    r(X,Y,U;\widehat{\pi}) & = \inf\curlybrack{\tau\in \calT:\rho_Y(X;\widehat{\pi})+U\cdot \widehat{\pi}_Y(X)\leq \tau} \nonumber \\
    & = \rho_Y(X;\widehat{\pi})+U\cdot \widehat{\pi}_Y(X). \label{eq:conf_score_eqiv}
\end{align}

Adapting to label shift can be performed with other non-conformity scores proposed recently for conformal classification~\citep{cauchois2020knowing,angelopoulos2021classification}, and we further discuss the subtleties behind our choice in Appendix~\ref{appsubsec:randomization}. Assume that the dataset is split at random into two parts: training $\curlybrack{\roundbrack{X_i,Y_i}}_{i\in \calI_1}$ and calibration $\curlybrack{\roundbrack{X_i,Y_i}}_{i\in \calI_2}$, where for simplicity the calibration data points are indexed as $\calI_2= \curlybrack{1,\dots,n}$. When the data are exchangeable, the non-conformity scores $r_i=r(X_i,Y_i, U_i;\widehat{\pi})\in [0,1]$, $i\in\calI_2\cup\curlybrack{n+1}$ are exchangeable as well, which in turn implies that the prediction set
\begin{align}
    \calF_{\tau^\star} \roundbrack{x,u;\widehat{\pi}} & = \curlybrack{y\in\calY:  \rho_y(x;\widehat{\pi})+u\cdot \widehat{\pi}_y(x)\leq \tau^\star}, \nonumber\\
    \tau^\star & = \quant_{1-\alpha} \roundbrack{\curlybrack{r_i}_{i\in\calI_2}\cup \curlybrack{1}}, \label{eq:exchan_conf_pred_set}
\end{align}
does attain the right coverage guarantee\footnote{$\quant_\beta \roundbrack{F} := \inf \curlybrack{z: F(z)\geq \beta}$ is $\beta$-quantile of a distribution $F$. For a multiset $\curlybrack{z_1,\dots,z_m}$ we write $\quant_\beta \roundbrack{\curlybrack{z_1,\dots,z_m}} := \quant_\beta \roundbrack{ \frac{1}{m}\sum_{i=1}^m \delta_{z_i}}$, where $\delta_a$ is a point-mass distribution at $a$, to denote quantiles of the corresponding empirical distribution.}. This is a classic result in conformal prediction and represents a simple fact about quantiles of exchangeable random variables, stated next for completeness. 
\begin{theorem}\label{thm:guarantee_exch}
If $\curlybrack{(X_i,Y_i)}_{i=1}^{n+1}$ are exchangeable, then:
\begin{equation*}
    \Prob(Y_{n+1}\in \calF_{\tau^\star} \roundbrack{X_{n+1},U_{n+1};\widehat{\pi}} \mid \curlybrack{\roundbrack{X_i,Y_i}}_{i\in \calI_1})\geq 1-\alpha.
\end{equation*}
Further, if the non-conformity scores are almost surely distinct, then the above probability is upper bounded by $1-\alpha + 1/(n+1)$.
\end{theorem}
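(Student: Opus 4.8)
The plan is to reduce the coverage statement to a standard fact about ranks of exchangeable scores. Throughout I condition on the training fold $\curlybrack{(X_i,Y_i)}_{i\in\calI_1}$, which fixes the estimator $\widehat{\pi}$; all probabilities below are understood conditionally. Since $\widehat{\pi}$ is now a fixed function and the randomizers $\curlybrack{U_i}_{i\in\calI_2\cup\curlybrack{n+1}}$ are i.i.d. $\mathrm{Unif}([0,1])$ drawn independently of the data, the scores $r_i=r(X_i,Y_i,U_i;\widehat{\pi})$ for $i\in\calI_2\cup\curlybrack{n+1}$ are exchangeable, being a common fixed measurable transformation applied to the exchangeable tuples $(X_i,Y_i,U_i)$. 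The first step is to rewrite the coverage event: by the definition of the score in~\eqref{eq:conf_score_eqiv} and of the nested sets in~\eqref{eq:exchan_conf_pred_set}, we have $Y_{n+1}\in\calF_{\tau^\star}(X_{n+1},U_{n+1};\widehat{\pi})$ if and only if $r_{n+1}\leq\tau^\star$, where $\tau^\star=\quant_{1-\alpha}(\curlybrack{r_i}_{i\in\calI_2}\cup\curlybrack{1})$. Hence it suffices to control $\Prob(r_{n+1}\leq\tau^\star)$.

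For the lower bound, I would use that $r_{n+1}\in[0,1]$ surely, so $r_{n+1}\leq 1$; since the $(1-\alpha)$-quantile of an empirical distribution is monotone in its atoms, replacing the appended value $1$ by the (no larger) value $r_{n+1}$ cannot increase the quantile, i.e.
\begin{equation*}
\quant_{1-\alpha}\roundbrack{\curlybrack{r_i}_{i\in\calI_2}\cup\curlybrack{r_{n+1}}}\leq \quant_{1-\alpha}\roundbrack{\curlybrack{r_i}_{i\in\calI_2}\cup\curlybrack{1}}=\tau^\star.
\end{equation*}
Consequently $\curlybrack{r_{n+1}\leq \quant_{1-\alpha}(\curlybrack{r_i}_{i\in\calI_2}\cup\curlybrack{r_{n+1}})}\subseteq\curlybrack{r_{n+1}\leq\tau^\star}$, so it is enough to prove $\Prob(r_{n+1}\leq \quant_{1-\alpha}(\curlybrack{r_1,\dots,r_{n+1}}))\geq 1-\alpha$. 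Writing $k=\ceil{(1-\alpha)(n+1)}$, the empirical $(1-\alpha)$-quantile of the $n+1$ scores is their $k$-th smallest value, and the event $r_{n+1}\leq \quant_{1-\alpha}(\curlybrack{r_1,\dots,r_{n+1}})$ is implied by the rank of $r_{n+1}$ among $\curlybrack{r_1,\dots,r_{n+1}}$ being at most $k$. By exchangeability this rank is sub-uniform on $\curlybrack{1,\dots,n+1}$ (exactly uniform absent ties), whence the probability is at least $k/(n+1)\geq 1-\alpha$.

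For the upper bound under a.s. distinct scores, set $k=\ceil{(1-\alpha)(n+1)}$ as before. If $\alpha<1/(n+1)$ then $k=n+1$, $\tau^\star=1$, the set is all of $\calY$, and the claimed bound $1-\alpha+1/(n+1)\geq 1$ holds trivially; so assume $k\leq n$. Because every $r_i\leq 1$, the appended atom $1$ is maximal, so the $k$-th smallest of $\curlybrack{r_i}_{i\in\calI_2}\cup\curlybrack{1}$ coincides with the $k$-th order statistic $r_{(k)}$ of the calibration scores, i.e. $\tau^\star=r_{(k)}$. With the scores distinct, $r_{n+1}\leq r_{(k)}$ holds if and only if at most $k-1$ calibration scores fall strictly below $r_{n+1}$, which is exactly the event that the rank of $r_{n+1}$ among $\curlybrack{r_1,\dots,r_{n+1}}$ is at most $k$. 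Distinctness makes this rank exactly uniform, so $\Prob(r_{n+1}\leq\tau^\star)=k/(n+1)\leq 1-\alpha+1/(n+1)$, using $\ceil{(1-\alpha)(n+1)}\leq(1-\alpha)(n+1)+1$.

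The routine parts are the monotonicity of the empirical quantile and the uniform-rank lemma for exchangeable variables; the part demanding the most care is the bookkeeping around the appended constant $1$ and the ceiling $k$ --- in particular verifying that appending $1$ (rather than the unknown $r_{n+1}$) simultaneously yields the conservative lower bound and, under distinctness, the tight upper bound, and correctly handling the boundary case $\alpha<1/(n+1)$. Establishing that exchangeability of the tuples survives both the conditioning on $\calI_1$ and the injection of the independent uniforms is conceptually the crux, though technically short.
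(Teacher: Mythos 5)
Your proof is correct and takes essentially the same route as the paper's: reduce the coverage event to $r_{n+1}\leq\tau^\star$, note that the scores $\curlybrack{r_i}_{i\in\calI_2\cup\curlybrack{n+1}}$ are exchangeable conditional on the training fold, and apply the quantile-of-exchangeable-variables fact. The only difference is that the paper invokes this fact as a black box (Lemma~\ref{lem:quanile_lemma}, from Tibshirani et al.) while you re-derive it via the rank argument; your bookkeeping with the appended atom $1$, the ceiling $k=\ceil{(1-\alpha)(n+1)}$, and the boundary case $\alpha<1/(n+1)$ is all sound.
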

The proof is given in Appendix~\ref{appsubsec:conformal_proofs}. Notice that the randomized sequence~\eqref{eq:nested_ps_seq} might yield empty, and thus non-actionable prediction sets, which is the consequence of deploying randomization only. Substituting the condition in~\eqref{eq:exchan_conf_pred_set} with $\indicator{\rho_y(x;\widehat{\pi})>0}\cdot \roundbrack{\rho_y(x;\widehat{\pi})+u\cdot \widehat{\pi}_y(x)}$ ensures that
the prediction set always includes the most likely label. Such a construction trivially inherits the coverage guarantee stated in Theorem~\ref{thm:guarantee_exch}, and we refer the reader to Appendix~\ref{appsubsec:randomization} for further details.


\subsection{Label-shifted conformal}\label{subsec:conf_label_shift} 
To illustrate the necessity of accounting for label shift we consider the following toy classification task with 3 classes $\calY=\curlybrack{1,2,3}$ where class proportions are given as $p = \roundbrack{0.1, 0.6, 0.3}$ and $q = \roundbrack{0.3, 0.2, 0.5}$, and for each data point the covariates are sampled according to $X\mid Y=y\sim \mathcal{N}(\mu_y,\Sigma)$ where $\mu_1 = \roundbrack{-2;0}^\top$, $\mu_2 = \roundbrack{2;0}^\top$, $\mu_3 = \roundbrack{0;2\sqrt{3}}^\top$, $ \Sigma= \mathrm{diag}(4,4)$. First, we perform the standard routine for constructing split-conformal prediction sets for a single draw of data from the source and target distributions using the Bayes-optimal rule as an underlying predictor. We illustrate a single draw of the test data on Figure~\ref{subfig:conf_toy_1_data_target_test} and the resulting prediction sets on Figure~\ref{subfig:conf_toy_1_sets_target_test}. Next, we repeat the simulation 1000 times and track empirical coverage on the test set. Results on Figure~\ref{subfig:conf_toy_1_coverage_fixed} demonstrate the necessity of correcting for label shift as the classic conformal prediction sets introduced in Section~\ref{subsec:exch_conformal} fail to achieve the correct marginal coverage. 
\begin{figure*}
    \centering
    \begin{subfigure}{0.45\textwidth}
        \centering
       \includegraphics[width=\textwidth]{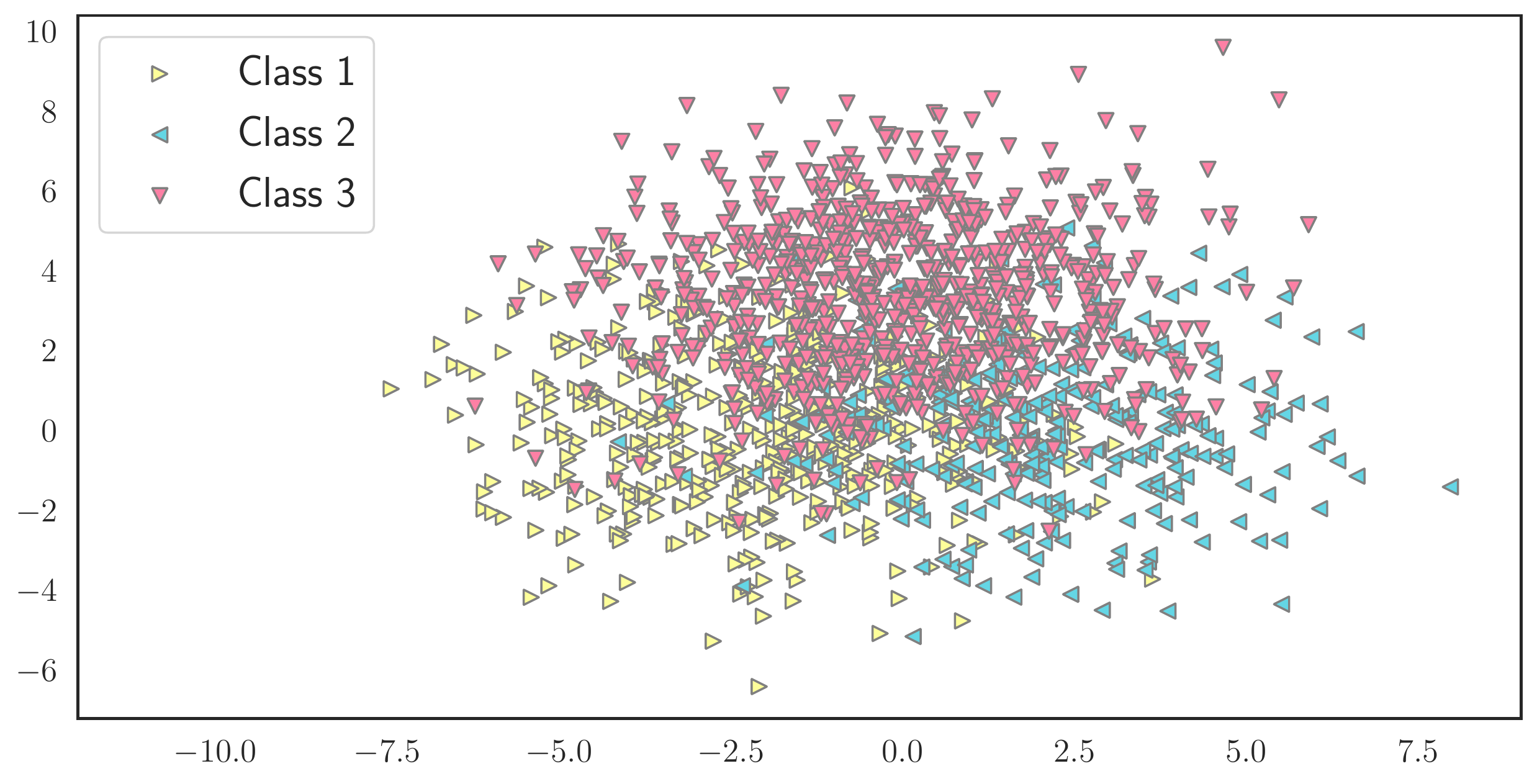}
    \caption{}
    \label{subfig:conf_toy_1_data_target_test}
    \end{subfigure}%
    ~ 
    \begin{subfigure}{0.45\textwidth}
        \includegraphics[width=\textwidth]{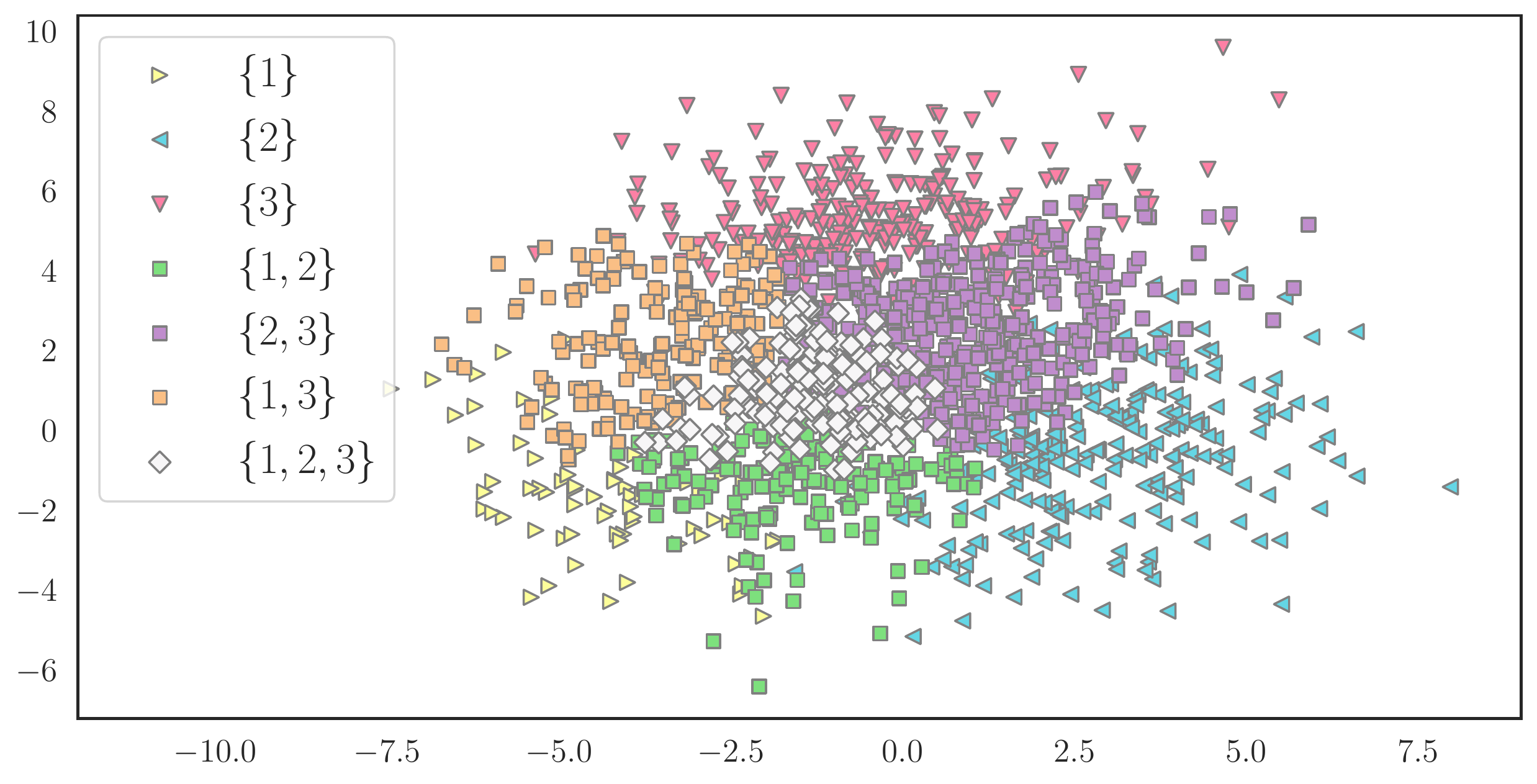}
    \caption{}
    \label{subfig:conf_toy_1_sets_target_test}
    \end{subfigure}
    ~
    \begin{subfigure}{0.45\textwidth}
        \centering
        \includegraphics[width=\textwidth]{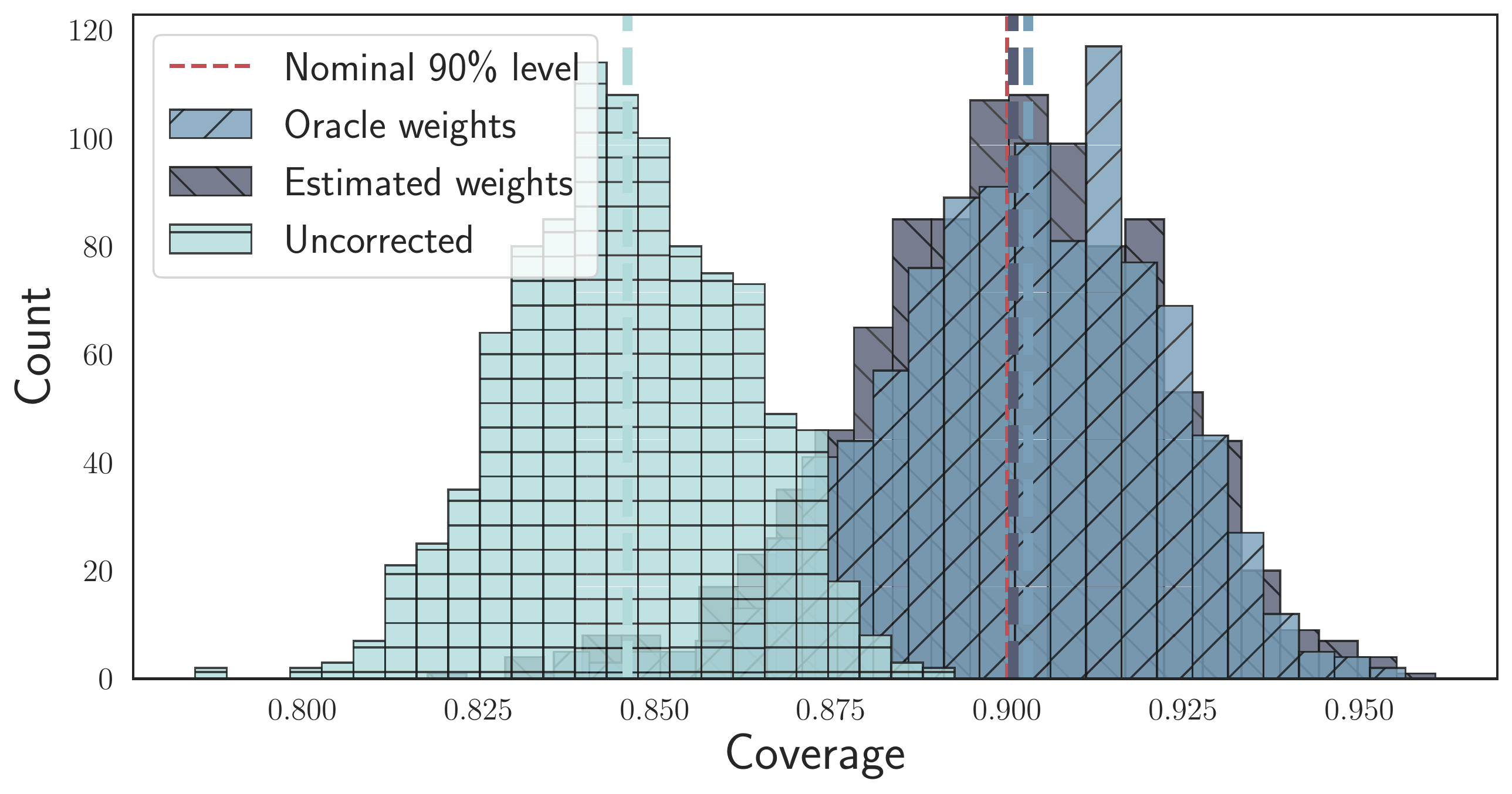}
    \caption{}
    \label{subfig:conf_toy_1_coverage_fixed}
    \end{subfigure}%
    ~ 
    \begin{subfigure}{0.45\textwidth}
        \centering
        \includegraphics[width=\textwidth]{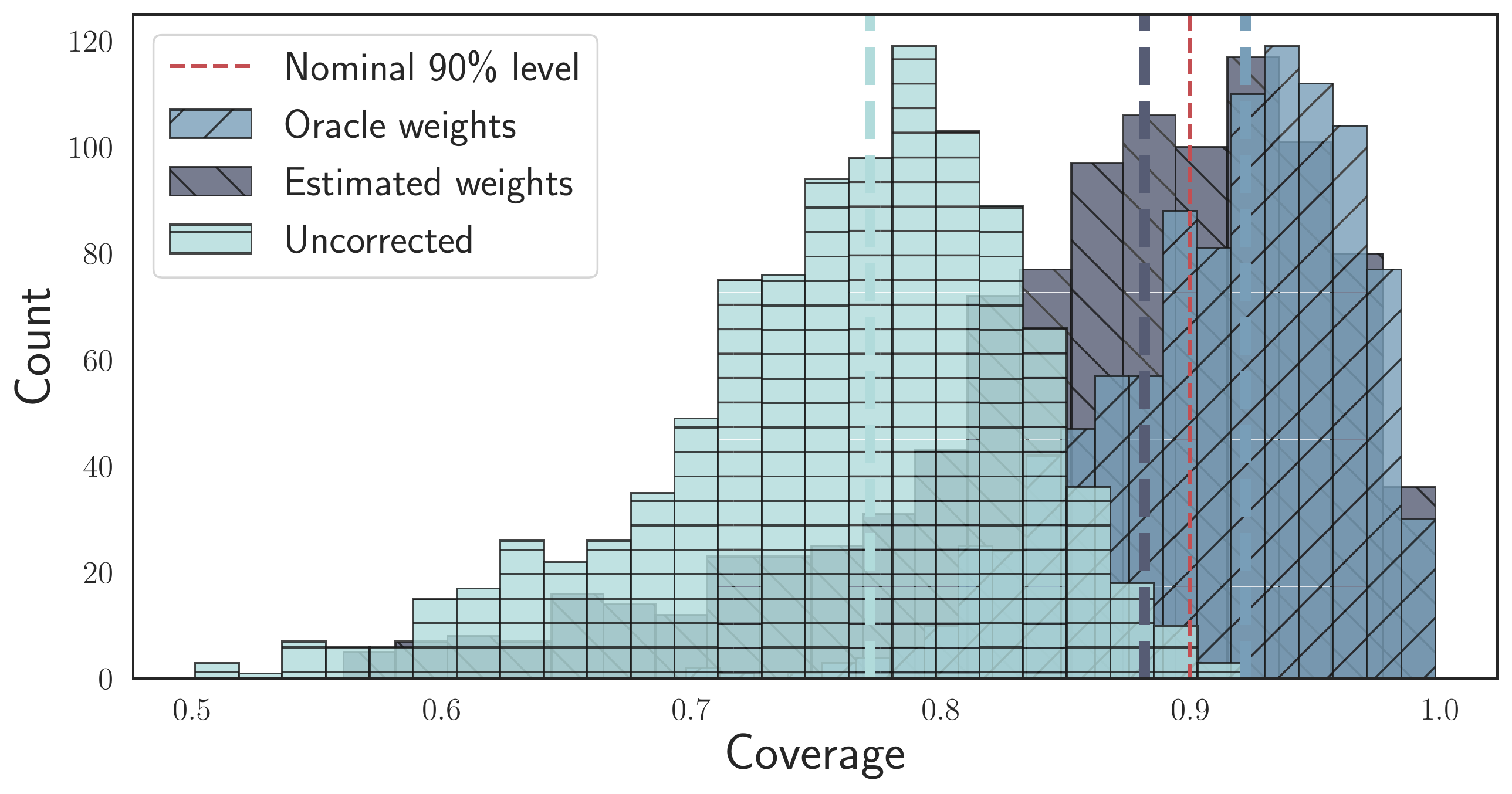}
    \caption{}
    \label{subfig:conf_toy_1_coverage_real}
    \end{subfigure}
    \caption{(\subref{subfig:conf_toy_1_data_target_test}) Test data sample for the toy simulation in Section~\ref{subsec:conf_label_shift}. (\subref{subfig:conf_toy_1_sets_target_test}) Corresponding conformal prediction sets when label shift is accounted for with oracle importance weights. (\subref{subfig:conf_toy_1_coverage_fixed}) Empirical coverage on shifted data for the toy simulation in Section~\ref{subsec:conf_label_shift}. (\subref{subfig:conf_toy_1_coverage_real}): Empirical coverage on the \texttt{wine quality} dataset. Dashed vertical lines describe the median coverage values, which are significantly worse when label shift is not accounted for, while using estimated weights mimics the oracle reasonably well.}
    \label{fig:conf_toy_example_1}
\end{figure*}

Assume that the true likelihood ratios $w(y)=q(y)/p(y)$ are known for all $y\in\calY$. In order to obtain provably valid prediction sets, we consider instead:
\begin{align}\label{eq:weighted_pred_set}
\calF^{(w)}_{\tau^\star} \roundbrack{x,u;\widehat{\pi}} & = \curlybrack{y\in\calY: \rho_y\roundbrack{x;\widehat{\pi}} +u\cdot \widehat{\pi}_y(x)\leq \tau_{w}^\star(y)}, \nonumber \\
\tau^\star_{w}(y) & = \quant_{1-\alpha}\roundbrack{\sum_{i=1}^{n} \tilde{p}_i^w(y)\delta_{r_i}+ \tilde{p}_{n+1}^w(y)\delta_1},\\
\text{where} \quad    \tilde{p}_i^w(y) &  = \frac{w(Y_{i})}{\sum_{j=1}^{n} w(Y_j)+w(y)},\quad i=1,\dots, n, \nonumber\\
    \tilde{p}_{n+1}^w(y) & = \frac{w(y)}{\sum_{j=1}^{n} w(Y_j)+w(y)}. \label{eq:threshold_weighted_case}
\end{align}
In addition to the fact that the empirical distribution used to calibrate the threshold in~\eqref{eq:weighted_pred_set} is different from the one used in exchangeable setting~\eqref{eq:exchan_conf_pred_set}, notice that the thresholds themselves now vary depending on the class label. The formal guarantee for the prediction set~\eqref{eq:weighted_pred_set} is stated next.
\begin{theorem}\label{thm:oracle_imp_weights_ps}
For any $\alpha\in (0,1)$, if the true likelihood ratios $w(y)=q(y)/p(y)$ are known for all $y\in\calY$, it holds that
\begin{equation*}
    \Prob (Y_{n+1}\in \calF^{(w)}_{\tau^\star} \roundbrack{X_{n+1}, U_{n+1};\widehat{\pi}} | \curlybrack{(X_i,Y_i)}_{i\in\calI_1})\geq 1-\alpha.
\end{equation*}
\end{theorem}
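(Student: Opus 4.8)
The plan is to reduce this to the weighted split-conformal guarantee of \citet{tibs2019conf}, using the fact that under label shift the \emph{full} target-to-source likelihood ratio collapses to a function of $y$ alone. Write $Z_i = (X_i,Y_i,U_i)$ and condition throughout on the training fold $\calI_1$, so that $\widehat{\pi}$ is a fixed map and each score $r_i = \rho_{Y_i}(X_i;\widehat{\pi}) + U_i\,\widehat{\pi}_{Y_i}(X_i)$ is a deterministic function of $Z_i$. First I would record the structural identity: since $q(x\mid y)=p(x\mid y)$ and the $U_i$ are i.i.d.\ uniform and independent of everything, $\dd Q/\dd P(x,y,u) = q(y)/p(y) = w(y)$. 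Hence $Z_1,\dots,Z_n\sim P$ together with $Z_{n+1}\sim Q$ are weighted exchangeable in the sense of \citet{tibs2019conf} with weight $w(y)$, provided $p(y)>0$ whenever $q(y)>0$ so that $w$ is well defined.

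Next I would invoke the weighted-exchangeability quantile argument. Conditioning on the unordered multiset $\{Z_1,\dots,Z_{n+1}\}$, the symmetric source-density factor $\prod_{k}p(z_k)$ cancels and only the test slot retains the factor $w(\cdot)$, so the probability that the test point equals a given element $z_j$ of the multiset is $w(y_j)/\sum_k w(y_k)$. As each score is a fixed function of its $Z$, this shows that, conditionally on the multiset, $r_{n+1}$ is a single draw from the weighted empirical law $\widehat{F} := \sum_{j=1}^{n+1}\frac{w(Y_j)}{\sum_k w(Y_k)}\,\delta_{r_j}$. The elementary quantile fact that $\Prob(V\le \quant_{1-\alpha}(\widehat{F}))\ge 1-\alpha$ for $V\sim\widehat{F}$ then gives $\Prob(r_{n+1}\le \quant_{1-\alpha}(\widehat{F})\mid \{Z_1,\dots,Z_{n+1}\})\ge 1-\alpha$.

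The step genuinely specific to label shift --- and the one I expect to be the main obstacle --- is connecting $\quant_{1-\alpha}(\widehat{F})$ to the actual threshold $\tau^\star_w(\cdot)$, because $w(Y_{n+1})$ is \emph{not} observed and the construction therefore indexes the threshold by a candidate label. The resolution is to observe that the coverage event $Y_{n+1}\in\calF^{(w)}_{\tau^\star}(X_{n+1},U_{n+1};\widehat{\pi})$ is exactly $r_{n+1}\le \tau^\star_w(Y_{n+1})$, i.e.\ the threshold evaluated at the \emph{true} label; and with $y=Y_{n+1}$ substituted, the weights $\tilde{p}^w_i(Y_{n+1})$ of~\eqref{eq:threshold_weighted_case} coincide term-by-term with the symmetric weights $w(Y_j)/\sum_k w(Y_k)$ defining $\widehat{F}$. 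Thus $\tau^\star_w(Y_{n+1})$ is the $(1-\alpha)$-quantile of the same weighted distribution as $\widehat{F}$, except that its test atom sits at $\delta_1$ rather than at $\delta_{r_{n+1}}$.

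Finally I would close with a monotonicity step: since every score lies in $[0,1]$, relocating the test atom's mass from $r_{n+1}\le 1$ to the point $1$ only lowers the weighted CDF at each level below $1$, hence only raises its $(1-\alpha)$-quantile, so $\quant_{1-\alpha}(\widehat{F})\le \tau^\star_w(Y_{n+1})$. Combined with the conditional bound of the second paragraph this yields $\Prob(r_{n+1}\le \tau^\star_w(Y_{n+1})\mid \{Z_1,\dots,Z_{n+1}\},\calI_1)\ge 1-\alpha$, and averaging over the multiset and over $\calI_1$ gives the claimed marginal coverage. Everything except the reduction in the third paragraph is exactly the covariate-shift argument with $w(x)$ replaced by $w(y)$; the care is entirely in verifying that plugging the unobserved true label into the candidate-indexed threshold reproduces the symmetric weighted quantile.
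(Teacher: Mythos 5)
Your proposal is correct and follows essentially the same route as the paper: establish weighted exchangeability of $(X_i,Y_i,U_i)$ with weight functions $\omega_i\equiv 1$ for $i\le n$ and $\omega_{n+1}=w(y)$, observe that the permutation weights collapse to $w(Y_i)/\sum_j w(Y_j)$ so that the candidate-indexed threshold evaluated at the true label $Y_{n+1}$ matches the weighted empirical quantile, and conclude via the weighted quantile lemma of \citet{tibs2019conf}. The only difference is that you inline the proof of that lemma (conditioning on the unordered multiset plus the $\delta_1$-relocation monotonicity step) where the paper invokes it as a black box.
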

The proof is given in Appendix~\ref{appsubsec:conformal_proofs}. It relies on the concept of \emph{weighted exchangeability} introduced by \citet{tibs2019conf} to handle covariate shift in regression, and we adapt those ideas here to correct for label shift in classification. Returning to the example considered in the beginning of this section, Figure~\ref{subfig:conf_toy_1_coverage_fixed} illustrates that calibrating the threshold $\tau$ as in~\eqref{eq:weighted_pred_set} with either oracle or estimated importance weights allows to achieve the target marginal coverage. Here we use BBSE~\citep{lipton2018label} to estimate the importance weights; more details are provided in Appendix~\ref{appsec:imp_weight_estimation}. 

Next, we perform a similar experiment with the \texttt{wine quality} dataset~\citep{data_wine}. We refer the reader to Appendix~\ref{appsubsec:conformal_real_data} for details regarding data pre-processing and modeling steps. The source and target class proportions are taken to be $p = (0.1,0.4,0.5)$ and $q=(0.4,0.5,0.1)$ and the data are resampled accordingly. Using a shallow multilayer perceptron as an underlying predictor and BBSE for importance weights estimation, at each iteration we repeat the routine for random splits of the original dataset and compare empirical coverage for different conformal prediction sets. Marginal coverage results given in Figure~\ref{subfig:conf_toy_1_coverage_real} support the idea that both shift-corrected conformal prediction sets demonstrate superior coverage performance compared with uncorrected ones. While conformal sets with oracle importance weights closely match the nominal coverage level, sets that proceed with estimated ones have a slightly downgraded performance. Arising basically due to an imperfect classification model and an imperfect importance weight estimation procedure, it highlights an important issue we discuss next.

While (weighted) exchangeability arguments yield a coverage guarantee in case of known importance weights, in practice one only has access to a corresponding estimator. Dominant methods, which we briefly touch upon in Appendix~\ref{appsec:imp_weight_estimation}, estimate importance weights using a separate labeled dataset from the source distribution and unlabeled dataset from the target. Under reasonable assumptions, such as identifiability and boundedness of the true importance weights, these estimators are known to be consistent as the size of both samples grows. For succinctness, we write $k=\abs{\calD_{\text{est}}}$ to denote the \emph{total} size of the datasets used for constructing an estimator $\widehat{w}_k$ of the importance weights $w$.

\begin{corollary}\label{cor:asympt_est_imp_weights}
Fix $\alpha\in (0,1)$. Assume that $\widehat{w}_k$ is a consistent estimator of $w$. Further, assume that for the true $w$ and all $y\in\calY$, the discrete distribution in~\eqref{eq:weighted_pred_set} does not have a jump at level $1-\alpha$. Then:
\begin{equation*}
    \lim_{k\rightarrow\infty} \Prob\roundbrack{Y_{n+1}\in \calF^{(\widehat{w}_k)}_{\tau^\star} \roundbrack{X_{n+1}, U_{n+1};\widehat{\pi}} } \geq 1-\alpha.
\end{equation*}
\end{corollary}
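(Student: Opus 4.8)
The plan is to decouple the randomness in the estimator $\widehat{w}_k$ from the calibration and test data, reduce the claim to a statement about the coverage of the oracle-weighted procedure viewed as a function of a \emph{deterministic} weight vector, and then combine consistency with a continuity argument that is made possible by the no-jump assumption.

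First I would exploit that $\widehat{w}_k$ is constructed from $\calD_{\text{est}}$, which is independent of the training set $\{(X_i,Y_i)\}_{i\in\calI_1}$, the calibration set $\{(X_i,Y_i,U_i)\}_{i=1}^n$, and the test triple $(X_{n+1},Y_{n+1},U_{n+1})$. For a fixed deterministic weight function $w'$, I define the coverage functional
\[
g(w') \;:=\; \Prob\roundbrack{r_{n+1}\le \tau^\star_{w'}(Y_{n+1})},
\]
where $r_{n+1}=r(X_{n+1},Y_{n+1},U_{n+1};\widehat{\pi})$, the threshold $\tau^\star_{w'}$ is formed from the calibration scores $\{r_i\}$ exactly as in \eqref{eq:threshold_weighted_case} but with $w$ replaced by $w'$, and the probability is over all data except $\calD_{\text{est}}$. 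By independence, $\Prob\roundbrack{Y_{n+1}\in\calF^{(\widehat{w}_k)}_{\tau^\star}(X_{n+1},U_{n+1};\widehat{\pi})}=\E[g(\widehat{w}_k)]$, the outer expectation being over $\widehat{w}_k$. Integrating the conditional bound of Theorem~\ref{thm:oracle_imp_weights_ps} over $\calI_1$ gives $g(w)\ge 1-\alpha$ at the true weights, so it suffices to prove $\E[g(\widehat{w}_k)]\to g(w)$.

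The heart of the argument, and the step I expect to be the main obstacle, is continuity of $g$ at $w$. For a fixed realization of the calibration and test data, the masses $\tilde{p}_i^{w'}(y)$ in \eqref{eq:threshold_weighted_case} are smooth functions of $w'$ wherever the denominators are positive, which holds in a neighborhood of $w$ since $w$ has strictly positive entries; hence the weighted empirical measure converges weakly to its value at $w$ as $w'\to w$. The difficulty is that $\tau^\star_{w'}(y)$ is the $(1-\alpha)$-quantile of a \emph{discrete} measure whose atoms $\{r_i,1\}$ are fixed while the masses move, so the quantile map is generically discontinuous: a small perturbation of the weights can push the selecting atom across the $1-\alpha$ level. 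This is precisely what the no-jump assumption rules out. Because at $w$ the cumulative mass crosses $1-\alpha$ strictly (no partial sum equals $1-\alpha$), the index of the selecting atom is stable, so $\tau^\star_{w'}(Y_{n+1})=\tau^\star_w(Y_{n+1})$ for all $w'$ in a (data-dependent) neighborhood of $w$. The coverage indicator $\indicator{r_{n+1}\le \tau^\star_{w'}(Y_{n+1})}$ is therefore eventually constant in $w'$ for almost every realization, hence converges pointwise to $\indicator{r_{n+1}\le \tau^\star_w(Y_{n+1})}$; being bounded, dominated convergence gives $g(w')\to g(w)$.

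Finally I would assemble the pieces: consistency means $\widehat{w}_k\to w$ in probability, so continuity of $g$ at $w$ and the continuous mapping theorem yield $g(\widehat{w}_k)\to g(w)$ in probability, and since $g$ takes values in $[0,1]$, bounded convergence upgrades this to $\E[g(\widehat{w}_k)]\to g(w)\ge 1-\alpha$, which is the desired conclusion. Beyond the quantile-continuity step itself, the one secondary point I would want to check carefully is that the exceptional event on which some partial sum of weights equals $1-\alpha$ exactly has probability zero under the no-jump hypothesis, so that the pointwise convergence of the indicators indeed holds almost surely and dominated convergence applies.
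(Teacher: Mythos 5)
Your proof is correct and lands on the same two pillars as the paper's --- the oracle coverage bound from Theorem~\ref{thm:oracle_imp_weights_ps} and continuity of the weighted quantile at the true weights --- but the execution is genuinely different. The paper argues directly on the miscoverage probability: it fixes $\varepsilon>0$, splits the event $\curlybrack{r_{n+1}>\tau^\star_{\widehat{w}_k}(Y_{n+1})}$ according to whether $r_{n+1}+\varepsilon$ exceeds the oracle threshold, bounds the two pieces by $\Prob\roundbrack{r_{n+1}>\tau^\star_{w}(Y_{n+1})-\varepsilon}$ and $\Prob\roundbrack{\abs{\tau^\star_{\widehat{w}_k}(Y_{n+1})-\tau^\star_{w}(Y_{n+1})}\geq\varepsilon}$, sends $k\to\infty$ and then $\varepsilon\to 0$, and disposes of the second piece by asserting that consistency of $\widehat{w}_k$ implies consistency of the plug-in threshold via the continuous mapping theorem --- the no-jump assumption being exactly what licenses that assertion. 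You instead condition on $\calD_{\text{est}}$, introduce the coverage functional $g$, and prove continuity of $g$ at $w$ by showing that the no-jump condition makes the weighted quantile \emph{locally constant} in the weights for almost every realization, after which bounded convergence finishes. Your local-constancy observation is a sharper and more transparent account of where the no-jump hypothesis actually enters than the paper's bare appeal to the continuous mapping theorem, and your conditioning step makes explicit the independence of the weight estimator from the calibration and test data, which the paper uses only implicitly. Two small points to tighten: your claim that $w$ has strictly positive entries is not guaranteed (the label shift assumptions only force $w(y)>0$ for $y$ in the support of $Q$), but since $Y_{n+1}\sim q$ the relevant denominator satisfies $\sum_{j}w(Y_j)+w(Y_{n+1})\geq w(Y_{n+1})>0$ almost surely, so the argument survives; and, as you yourself flag, the corollary's no-jump hypothesis must be read as holding almost surely over the calibration data for the pointwise convergence of the indicators to hold --- this is the same reading the paper's proof requires.
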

The proof is given in Appendix~\ref{appsubsec:conformal_proofs}. To demonstrate why presence of a jump might cause problems, consider a simplified example. Let $Z\sim\mathrm{Ber}(p)$ for which the quantile corresponding to any given level $\alpha$ is given by
\begin{equation*}
\quant_\alpha \roundbrack{(1-p) \cdot \delta_0+p\cdot \delta_1}=\indicator{p>1-\alpha},
\end{equation*}
Assume that we are given a sample of coin tosses $Z_1$, $\dots$, $Z_n$ with the same bias parameter $p$. Even though the sample average $\overline{Z}_n$ is a consistent estimator of $p$, it nonetheless does not imply that the corresponding plug-in quantile estimator is consistent as the continuous mapping theorem cannot be invoked due to a discontinuity at $p=1-\alpha$. Indeed, let
\begin{equation*}
\begin{aligned}
\widehat{q}_n := \quant_\alpha \roundbrack{\roundbrack{1-\overline{Z}_n} \cdot \delta_0+\overline{Z}_n\cdot \delta_1}  = \indicator{\overline{Z}_n>1-\alpha},
\end{aligned}
\end{equation*}
and observe that $\widehat{q}_n\sim \mathrm{Ber}\roundbrack{\Prob\roundbrack{\overline{Z}_n>1-\alpha}}$. Then by the normal approximation it follows that:
\begin{equation*}
\Prob\roundbrack{\overline{Z}_n>1-\alpha}\approx 1-\Phi\roundbrack{\sqrt{n}\frac{(1-\alpha)-p}{\sqrt{p(1-p)}}}.
\end{equation*}
If $p>1-\alpha$, we can conclude that $\widehat{q}_n$ converges in probability to 1, and thus the estimator is consistent (similarly for $p<1-\alpha$). In case of equality, $\widehat{q}_n$ converges to $\mathrm{Ber}(1/2)$, and thus the estimator will not be consistent. Still, for a more general setting of the distribution defined in~\eqref{eq:weighted_pred_set} it is reasonable to expect the assumption regarding absence of jumps to be satisfied as also confirmed by our conducted empirical study.

\paragraph{Label-conditional conformal prediction.} Observing multiple points sharing the same label in a dataset makes it possible to apply the split-conformal framework in a way that makes the resulting prediction sets inherently robust to label shift~\citep{vovk2005algorithmic,vovk2016criteria,sadinle2019label,guan2019prediction}. Assume that a set of significance levels for each class $\{\alpha_y\}_{y\in\calY}$ has been chosen (e.g., $\alpha_y=\alpha$ for all $y$). By further splitting the calibration set $\calI_2$ into $\abs{\calY}=K$ groups depending on the corresponding labels, $\calI_{2,y}:=\curlybrack{i\in\calI_2: Y_i=y}$, one can consider prediction sets of the following form:
\begin{align}
    \calF_{\tau^\star_c}^{c}\roundbrack{x,u;\widehat{\pi}} & = \curlybrack{y\in\calY: \rho_y(x;\widehat{\pi})+u\cdot \widehat{\pi}_y(x)\leq \tau_{c}^\star(y)}, \nonumber \\
    \tau_{c}^\star(y) & = Q_{1-\alpha_y}\roundbrack{\curlybrack{r_i}_{i\in\calI_{2,y}}\cup \curlybrack{1}}.\label{eq:pred_sets_label_cond}
\end{align}
In other words, we separately apply split-conformal prediction framework for each label; this is like performing a separate hypothesis test for each label to determine whether there is sufficient evidence to exclude the label from the prediction set.
To elaborate, the label shift assumption states that conditional distribution of $X$ given $Y=y$ for all $y\in\calY$ does not change between source and target distributions. Thus for a test point $(X_{n+1},Y_{n+1})$ the corresponding non-conformity score $r(X_{n+1},Y_{n+1},U_{n+1};\widehat{\pi})$ together with $\curlybrack{r_i}_{i\in\calI_{2,Y_{n+1}}}$ forms a collection of exchangeable random variables, which implies label-conditional validity, that is:
\begin{equation*}
    \Prob\roundbrack{Y_{n+1} \notin \calF_{\tau_{c}^\star}^{c}\roundbrack{X_{n+1},U_{n+1};\widehat{\pi}} \mid Y_{n+1}=y} \leq \alpha_y,
\end{equation*}
for all $y\in\calY$. When $\alpha_y=\alpha$ for all $y$, one can marginalize over $y$ using \emph{any} distribution (shifted or not), to yield $\Prob\roundbrack{Y_{n+1} \notin\calF_{\tau_{c}^\star}^{c}\roundbrack{X_{n+1},U_{n+1};\widehat{\pi}}} \leq \alpha$. Thus, the label-conditional conformal framework yields a stronger guarantee than the standard (marginal) conformal and, it is automatically robust to changes in class proportions, retaining validity under label shift. The price to pay for the stronger conditional guarantee is larger prediction sets: for example, when the classes are not well-separated, label-conditional conformal can be expected to yield larger prediction sets; see Appendix~\ref{appsubsec:lcc_marg_conf} for a careful empirical study. It should also be noted that the label-conditional conformal framework requires splitting available calibration data into $K$ parts that could result in large losses of statistical efficiency when the number of classes $K$ is large. On the other hand, such construction allows to tackle label shift in a way that does not require importance weights estimation, and thus get exact finite-sample guarantee instead of asymptotic one established in Corollary~\ref{cor:asympt_est_imp_weights}. Thus, we view the label-conditional conformal framework as a complementary approach, perhaps worth utilizing when the amount of calibration data is larger relative to the number of labels.

\section{Calibration}\label{sec:calibration}
While prediction sets describe a construction on top of the output of a predictor, calibration quantifies whether the output itself admits a rigorous frequentist interpretation. In contrast to the binary setting where there is usually no confusion about a definition of a calibrated predictor, there is one in the multiclass setting. First, we state a definition of a canonically calibrated predictor.
\begin{definition}[Calibration]\label{def:calibration}
A probabilistic predictor $f:\calX\to \Delta_K$ is said to be calibrated if
\begin{equation*}
    \Prob\roundbrack{Y= y \mid f(X)} = f_y(X), \quad y\in\calY,
\end{equation*}
where $f_y(x)$ denotes the $y$-th coordinate of $f(x)$.
\end{definition}
Observe that canonical calibration requires the whole output vector to reflect the true conditional probabilities. Two extreme examples of canonically calibrated predictors include: (a) $f^{\text{Marg}}$: $f^{\text{Marg}}_y(x)=p(y)$, (b) $f^{\text{Bayes}}$: $f^{\text{Bayes}}_y(x)=\pi_y(x)$. In words, the former predictor outputs marginal probabilities of classes and the latter outputs the true class-posterior probabilities. In terms of classification efficiency, however, the first one is useless, while the second minimizes the classification risk, or the probability of incorrectly classifying a new point. Minimizing classification risk with respect to zero-one loss is computationally infeasible, and thus one refers instead to minimizing so-called \emph{surrogate} losses, e.g., cross-entropy loss, with possibly added regularization terms. As a result, one obtains prediction models that are not calibrated out-of-the-box without making strong distributional and modeling assumptions, and thus aims to achieve it by performing post-processing using held-out data. While this topic has attracted a lot attention from practitioners recently, less results have been established on the theoretical side providing formal guarantees for common procedures that target improving model's calibration. Recognized approaches include Platt scaling~\citep{platt99probabilisticoutputs}, temperature scaling~\citep{guo2017nn_calibration}, histogram binning~\citep{zadrozny2001obtaining}, isotonic regression~\citep{zadrozny2002transforming} and others. 

Model miscalibration is usually assessed using either reliability curves or related one-dimensional summary statistics. It is known that popular metrics, such as Expected Calibration Error (ECE), are not reliable since plug-in estimates can be biased if binning, or discretization, of the output of the resulting model is not performed~\citep{kumar2019calibration, vaicenavicius2019calibration}. \citet{gupta2020df_calib} establish the necessity of binning for obtaining distribution-free calibration guarantees in a binary classification setup. Binning represents coarsening of the sample space and is defined as the partitioning of the probability simplex into non-overlapping bins: $\Delta_K = B_1 \cup \cdots \cup B_M$,  $B_i \cap B_j = \emptyset$, $i\neq j$. Then a predictor $f$ induces a partition of the sample space:
\begin{equation*}
    \calX_m := \curlybrack{x\in\calX: f(x)\in B_m},\quad m\in \calM := \curlybrack{1,\dots, M}.
\end{equation*}
Since provable guarantees for canonical calibration require binning of the probability simplex, it is clear that the task becomes prohibitive with growing number of classes as each bin has to be supplied with sufficiently many data points during the calibration step for the resulting guarantees to be meaningful. One solution is given by either referring to other notions of UQ, such as the aforementioned prediction sets, or by relaxing the notion of calibration in the multiclass setting. One of well-known relaxations is class-wise, or marginal, calibration~\citep{zadrozny2002transforming,vaicenavicius2019calibration,kull2019beyond}.
\begin{definition}[Class-wise calibration]\label{def:marg_calib}
A probabilistic predictor $f:\calX\to \Delta_K$ is said to be class-wise calibrated if
\begin{equation}
\label{eq:def_marg_calibration}
    \Prob\roundbrack{Y= y \mid f_y(X)} = f_y(X), \quad y\in\calY.
\end{equation}
\end{definition}
\citet{vaicenavicius2019calibration} illustrate the difference with the canonical calibration through useful examples. In the binary setting, the two notions are equivalent with class-wise calibration being a weaker requirement for larger number of classes. It is achieved by reducing the original multiclass problem to $K$ one-vs-all binary problems with the standard post-processing routine applied consequently to each one. We focus on canonical calibration for multiclass problems as per Definition~\ref{def:calibration} and explicitly mention important implications for the binary setting, and thus marginal calibration. 
\subsection{Calibration for i.i.d. data}\label{subsec:calib_iid}
First, we assume that the binning scheme has been chosen and use $g:\calX \to \calM$ to denote the bin-mapping function: $g(x)=m$ if and only if $f(x)\in B_m$. The calibration set $\calD_{\text{cal}}=\curlybrack{(X_i,Y_i)}_{i=1}^n$ is used for estimating
\begin{equation}\label{eq:bin_cond_prob_source}
    \pi^P_{y,m} := \Prob\roundbrack{Y=y\mid f(X)\in B_m}, \quad y\in\calY,
\end{equation}
for all bins $m\in\calM$. The superscript here highlights that probabilities correspond to the source distribution $P$ and the notation will become convenient when we talk about label shift setting. With finite data one can only estimate~\eqref{eq:bin_cond_prob_source} with quantifiable measures of error, and thus provably satisfy the calibration requirement only approximately:
\begin{equation}
\label{eq:approx_calib_source}
    \Prob\roundbrack{Y=y \mid \widehat{\pi}_{y,g(X)}^{P}} \approx \widehat{\pi}_{y,g(X)}^{P}.
\end{equation}
Let $N_{m} = \abs{ \curlybrack{(X_i,Y_i)\in \calD_{\text{cal}}: f(X_i)\in B_m}}$ denote the number of calibration points that fall into bin $m\in\calM$. Note that $\curlybrack{N_m}_{m\in\calM}$ are random and satisfy $\sum_{m=1}^M N_m=n$. Empirical frequencies of class labels $y\in\calY$ in each bin $m\in\calM$:
\begin{equation}
    \widehat{\pi}_{y,m}^{P} := \frac{1}{N_{m}}\sum_{i=1}^{n} \indicator{Y_i=y,  f(X_i)\in B_m},
\end{equation}
are natural candidates to satisfy the approximate calibration condition~\eqref{eq:approx_calib_source}. For convenience, let $\pi_{m}^{P} := (\pi_{1,m}^{P}, \dots, \pi_{K,m}^{P})^\top$ denote a vector with coordinates representing bin-conditional class probabilities and let $h:\calX\to \Delta_K$ denote the \emph{recalibrated} predictor, i.e., the function that maps any feature vector to the corresponding vector of \emph{calibrated} probability estimates: $h(x)=\widehat{\pi}_{g(x)}$.
\begin{theorem}\label{thm:calib_guar_source}
Fix $\alpha\in (0,1)$. With probability at least $1-\alpha$, $\norm{1}{\widehat{\pi}^{P}_{m}-\pi^{P}_{m}} \leq \varepsilon_m$, simultaneously for all $m\in\calM$, where
\begin{equation*}
     \varepsilon_m:= \frac{2}{\sqrt{N_m}} \sqrt{\frac{1}{2}\ln \roundbrack{\frac{M2^K}{\alpha}}}.
\end{equation*}
As a consequence, with probability at least $1-\alpha$,
\begin{equation*}
     \sum_{y=1}^K\abs{\Prob\roundbrack{Y=y\mid h(X)=z} - z_{y}} \leq \max_{m\in\calM} \varepsilon_m,
\end{equation*}
simultaneously for all $z$ in the range of $h$.
\end{theorem}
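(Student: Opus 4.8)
The plan is to split the argument exactly as the statement is split: first establish the simultaneous $\ell_1$ concentration of the per-bin empirical label frequencies, then transfer this to the stated calibration bound.

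For the concentration step, the key reduction I would use is the identity
\[
\norm{1}{\widehat{\pi}^P_m - \pi^P_m} = 2\max_{A\subseteq\calY}\roundbrack{\widehat{\pi}^P_m(A) - \pi^P_m(A)},
\]
where $\widehat{\pi}^P_m(A) := \sum_{y\in A}\widehat{\pi}^P_{y,m}$ and likewise for $\pi^P_m(A)$; this holds because the signed differences sum to zero over $\calY$, so the positive part equals half the $\ell_1$ norm and is attained at $A = \curlybrack{y : \widehat{\pi}^P_{y,m} > \pi^P_{y,m}}$. This reduces controlling a multinomial deviation to controlling $2^K$ Bernoulli averages. I would condition on the bin-membership configuration (equivalently on the counts $\curlybrack{N_m}_{m\in\calM}$): given that exactly $N_m$ of the i.i.d. pairs land in bin $m$, their labels are conditionally i.i.d. with distribution $\pi^P_m$, so for each fixed $A$ the quantity $\widehat{\pi}^P_m(A)$ is an average of $N_m$ i.i.d. indicators $\indicator{Y_i\in A}$ with mean $\pi^P_m(A)$. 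One-sided Hoeffding then gives $\Prob\roundbrack{\widehat{\pi}^P_m(A) - \pi^P_m(A) \geq \varepsilon_m/2 \mid \curlybrack{N_m}} \leq \expText{-N_m\varepsilon_m^2/2}$, and plugging in the prescribed $\varepsilon_m$ makes the right-hand side exactly $\alpha/(M2^K)$ --- crucially independent of $N_m$, which is what makes the bin-wise choice of $\varepsilon_m$ admissible. A union bound over the $M$ bins and $2^K$ subsets yields conditional failure probability at most $\alpha$, hence also the unconditional bound after averaging over $\curlybrack{N_m}$; on the complementary event the displayed identity gives $\norm{1}{\widehat{\pi}^P_m - \pi^P_m}\leq\varepsilon_m$ for all $m$ simultaneously. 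Empty bins ($N_m=0$) are handled trivially, since $\varepsilon_m=\infty$ there by convention.

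For the consequence I would argue on the good event just established. Writing $h(X)=\widehat{\pi}^P_{g(X)}$, fix $z$ in the range of $h$ and let $\calM_z := \curlybrack{m : \widehat{\pi}^P_m = z}$ be the bins whose empirical vector equals $z$. For a fresh pair drawn from $P$ and independent of the calibration data, the event $\curlybrack{h(X)=z}$ is exactly $\curlybrack{g(X)\in\calM_z}$, so $\Prob\roundbrack{Y=y\mid h(X)=z}$ is the convex combination $\sum_{m\in\calM_z}\lambda_m\,\pi^P_{y,m}$ with weights $\lambda_m := \Prob\roundbrack{g(X)=m\mid h(X)=z}$, using that $\Prob\roundbrack{Y=y\mid g(X)=m}=\pi^P_{y,m}$ by definition. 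Since each $\pi^P_m$ is within $\varepsilon_m$ of $\widehat{\pi}^P_m = z$ in $\ell_1$, convexity of the norm gives
\[
\sum_{y=1}^K\abs{\Prob\roundbrack{Y=y\mid h(X)=z} - z_y} = \norm{1}{\sum_{m\in\calM_z}\lambda_m\roundbrack{\pi^P_m - z}} \leq \sum_{m\in\calM_z}\lambda_m\,\varepsilon_m \leq \max_{m\in\calM}\varepsilon_m,
\]
which is the claimed bound, uniform over $z$. In the generic case where all $\widehat{\pi}^P_m$ are distinct this collapses to the single-bin statement $\sum_y\abs{\pi^P_{y,m}-\widehat{\pi}^P_{y,m}}\leq\varepsilon_m$.

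The main obstacle, and the only genuinely delicate point, is the concentration step: getting the conditioning on the random bin counts right so that Hoeffding applies with a deterministic sample size, and recognizing that the prescribed $\varepsilon_m$ is precisely tuned so that each union-bound term equals $\alpha/(M2^K)$ regardless of $N_m$. The subset identity --- turning $\ell_1$ control into $2^K$ one-sided Bernoulli tail bounds --- is what produces the $2^K$ factor inside the logarithm; once it is in place the rest is routine. The consequence step is essentially bookkeeping, with the only subtlety being bins that collapse to the same output vector, handled by the convexity argument above.
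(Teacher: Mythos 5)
Your proof is correct and follows essentially the same route as the paper's: the paper conditions on the bin configuration and invokes the Bretagnolle--Huber--Carol multinomial inequality (its Theorem~5), and your subset-maximization identity plus one-sided Hoeffding plus a union bound over the $M$ bins and $2^K$ subsets is precisely the standard proof of that inequality, unpacked inline with the same constants. Your second step --- writing $\Prob\roundbrack{Y=y\mid h(X)=z}$ as a convex combination of the $\pi^{P}_{m}$ over the bins collapsing to $z$ and applying convexity of the $\ell_1$ norm --- is the paper's tower-property/Jensen argument in explicit form.
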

The proof is given in Appendix~\ref{appsubsec:calibration_proofs}. In words, Theorem~\ref{thm:calib_guar_source} states that as long as the least-populated bin contains sufficiently many points, the output of the recalibrated predictor will approximately satisfy condition~\eqref{eq:approx_calib_source}. The first part of Theorem~\ref{thm:calib_guar_source} justifies use of empirical frequencies in place of unknown population quantities using the language of the confidence intervals. In the binary setting, the fact that it yields the desired calibration guarantee, has been formally established by~\citet{gupta2020df_calib}, and the second part of the theorem states a corresponding result for canonical calibration in the multiclass setting. 

A natural question is whether one can guarantee that each bin is supplied with a sufficient number of calibration data points in order to obtain meaningful bounds. We note that in the binary setting, one way to provably spread the calibration data evenly across bins is uniform-mass, or equal frequency, binning~\citep{kumar2019calibration, gupta2020df_calib,gupta2021binning}. 

\subsection{Label-shifted calibration}\label{subsec:calib_label_shift}
For illustrating the necessity of accounting for label shift we consider the following binary classification problem: $\calY=\curlybrack{0,1}$ with class probabilities given as $p(0)=p(1) =1/2$ and $q(0) =0.2$, $q(1)=0.8$, i.e., while on the source domain classes are equally balanced, on the target class 1 becomes dominant. For each data point, conditionally on the corresponding label, the covariates are sampled according to $X\mid Y=y\sim\mathcal{N}(\mu_y,\Sigma)$, where
\begin{equation*}
    \mu_0 = \begin{pmatrix}
    -1 \\ 0
    \end{pmatrix}, \quad \mu_1 = \begin{pmatrix}
    1 \\ 0
    \end{pmatrix}, \quad \Sigma = \begin{pmatrix}
    0.75 & 0.25 \\
    0.25& 0.75
    \end{pmatrix}.
\end{equation*}
Similarly to the toy example from Section~\ref{subsec:conf_label_shift}, here the class-posterior probabilities, and thus the Bayes-optimal rules have a closed form for both source and target domains. Not only do they minimize the probability of misclassifying a new point from the corresponding domain but also they are calibrated\footnote{Recall that in the binary setting, canonical and class-wise calibration are equivalent.}. For the source distribution a perfect probabilistic predictor is given by
\begin{equation}\label{eq:cal_bayes_opt}
    \pi^P_1(x) = \frac{p(1)\cdot \varphi(x;\mu_1,\Sigma)}{p(0)\cdot \varphi(x;\mu_0,\Sigma)+p(1)\cdot \varphi(x;\mu_1,\Sigma)},
\end{equation}
where $\varphi(x;\mu_i,\Sigma)$, $i=0,1$ denotes the PDF of a Gaussian random vector with the corresponding parameters. As illustrated on Figure~\ref{subfig:data_bayes_source}, even though the Bayes-optimal rule is calibrated on the source, a correction is required to obtain a calibrated classifier under label shift. We sample points from the target distribution and highlight those that fall inside the area $S=\curlybrack{x\in\Real^2: \pi_1^P(x)\in [0.4;0.6]}$ with boundary given by the black dashed lines. When the shift is present, predictor~\eqref{eq:cal_bayes_opt} is no longer calibrated, since otherwise one should expect roughly half of the test data points inside $S$ to be labeled as class $1$ (red squares) and half as class 0 (blue circles),
which clearly does not happen.
\begin{figure*}
    \centering
    \begin{subfigure}{0.45\textwidth}
        \centering
        \includegraphics[width=\textwidth]{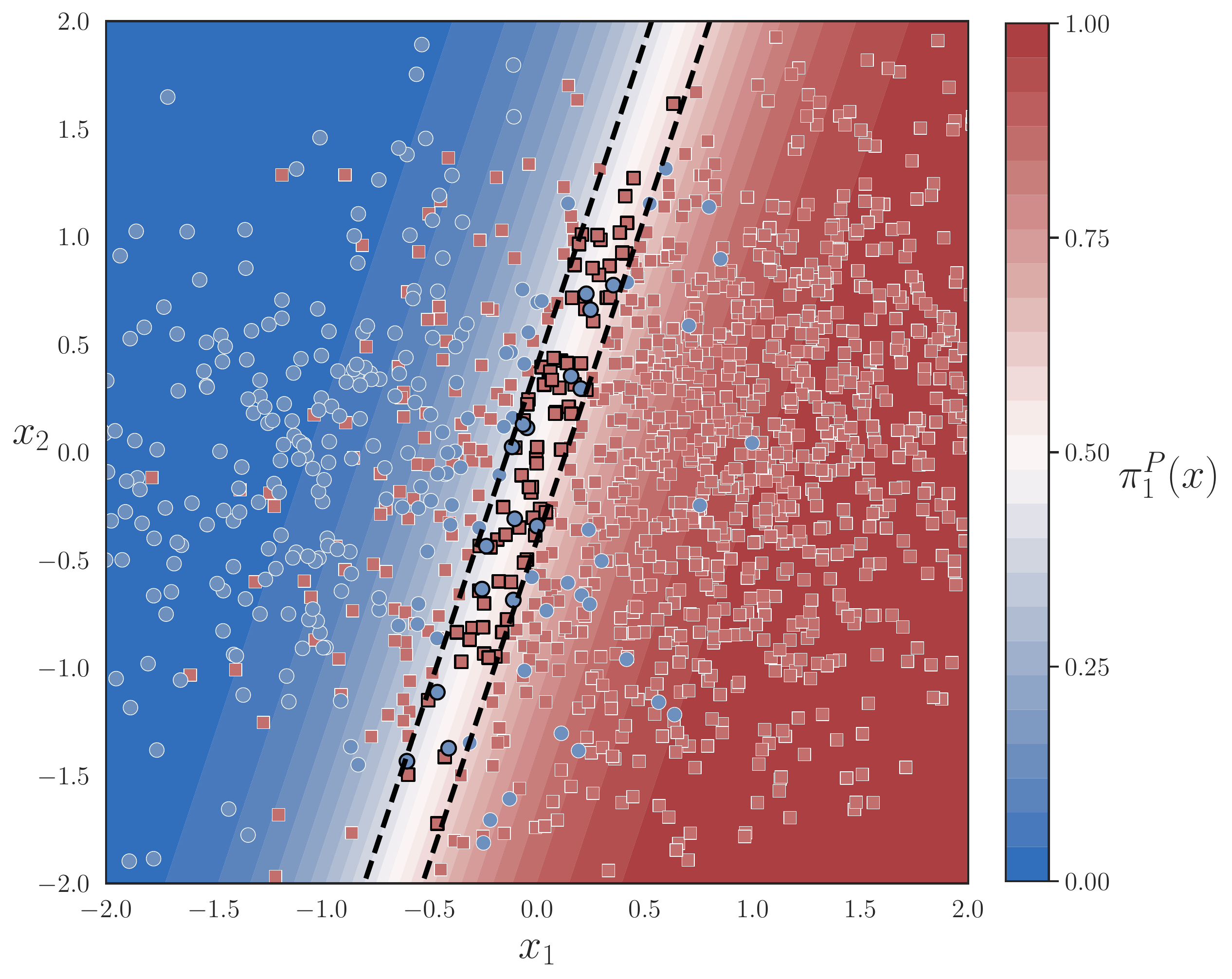}
    \caption{}
    \label{subfig:data_bayes_source}
    \end{subfigure}%
    ~ 
     \begin{subfigure}{0.45\textwidth}
         \centering
         \includegraphics[width=0.825\textwidth]{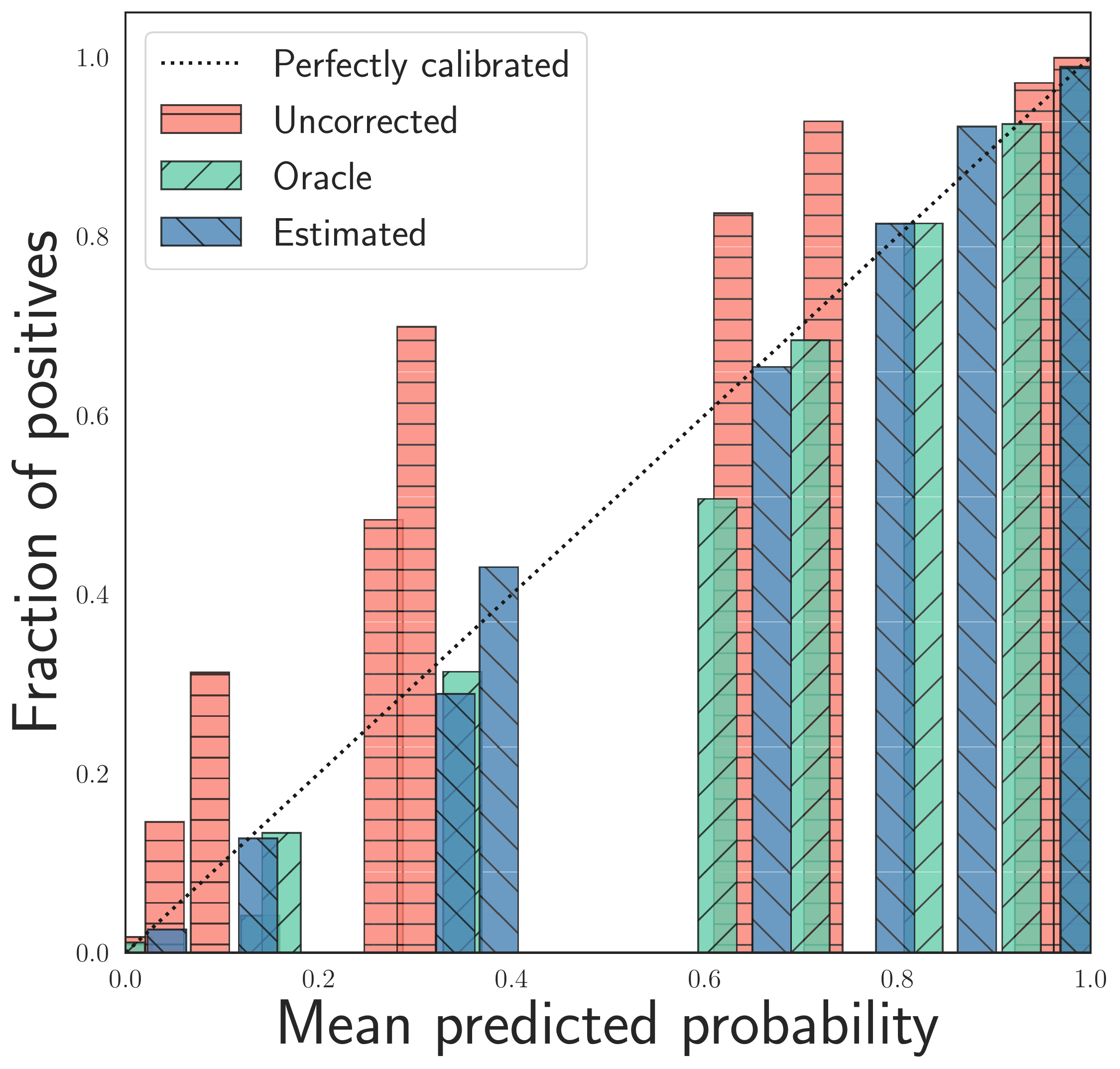}
     \caption{}
     \label{subfig:cal_toy_example_rel_curves}
     \end{subfigure}
    \caption{(\subref{subfig:data_bayes_source})
    Sampled points from the target distribution plotted against the true source class-posterior probabilities. (\subref{subfig:cal_toy_example_rel_curves}) Reliability curves for Fisher's LDA calibrated via binning with/without taking label shift into account. The deviation of uncorrected  probabilities from the diagonal line (perfect calibration) reflects the need to correct for label shift; recalibration based on estimated weights is almost identical to using oracle weights, both of which result in near-perfect calibration.}
    \label{fig:calibration_toy_example}
\end{figure*}

If both the true class-posterior distribution $\pi^P_y(x)$ and the true label likelihood ratios $w$ are known, then the form of the adjustment of the  probabilistic classifier under label shift is a simple implication of the Bayes rule~\citep{saerens2002adjusting}:
\begin{equation}\label{eq:oracle_pred_target}
    \pi_{y}^{Q}(x) = \frac{ w(y)\cdot\pi_{y}^{P}(x)}{\sum_{k=1}^K w(k)\cdot \pi_{k}^{P}(x)}.
\end{equation}
While in the oracle setting predictor~\eqref{eq:oracle_pred_target} is indeed calibrated on the target, in practice neither $\pi_y^P(x)$ nor $w$ are known. Using corresponding plug-in estimators in~\eqref{eq:oracle_pred_target} would guarantee calibration of the resulting predictor only asymptotically and under restricting modeling assumptions, and thus to obtain the distribution-free guarantees the output of the original predictor has to be discretized, or binned as in the i.i.d. setting. Relationship~\eqref{eq:oracle_pred_target} does clearly continue to hold as formally stated next.
\begin{proposition}\label{prop:calibration_label_shift}
Under label shift, for any class label $y\in\calY$ and any bin $B_m$, $m\in \calM$ it holds that:
\begin{equation*}
    \pi_{y,m}^{Q} = \frac{ w(y)\cdot\pi_{y,m}^{P}}{\sum_{k=1}^K w(k)\cdot \pi_{k,m}^{P}}.
\end{equation*}
\end{proposition}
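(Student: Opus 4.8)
The plan is to prove this by a direct application of Bayes' rule to the bin-conditional probabilities, where the only structural input is the label-shift invariance of the class-conditional law of $X$. First I would note that the bin event $\curlybrack{f(X)\in B_m}$ coincides with $\curlybrack{X\in\calX_m}$ for $\calX_m=\curlybrack{x\in\calX:f(x)\in B_m}$, so it is a measurable event depending on $X$ alone. Writing $\Prob_P,\Prob_Q$ for probabilities computed under the source $P$ and target $Q$ respectively, by definition $\pi_{y,m}^{Q}=\Prob_Q\roundbrack{Y=y\mid X\in\calX_m}$, and I would expand this through Bayes' rule as the ratio of the joint probability $\Prob_Q\roundbrack{Y=y,\,X\in\calX_m}$ to the marginal $\Prob_Q\roundbrack{X\in\calX_m}$ (implicitly assuming $\Prob_P\roundbrack{X\in\calX_m}>0$ so all conditionings are well-defined).

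Next I would factor the joint as $\Prob_Q\roundbrack{Y=y,\,X\in\calX_m}=q(y)\cdot\Prob_Q\roundbrack{X\in\calX_m\mid Y=y}$ and invoke label shift: since $q(x\mid y)=p(x\mid y)$, the class-conditional probability of the $X$-event $\curlybrack{X\in\calX_m}$ is identical under $P$ and $Q$, i.e. $\Prob_Q\roundbrack{X\in\calX_m\mid Y=y}=\Prob_P\roundbrack{X\in\calX_m\mid Y=y}$. Rewriting the latter back through Bayes' rule on the source side gives $\Prob_P\roundbrack{X\in\calX_m\mid Y=y}=\pi_{y,m}^{P}\cdot\Prob_P\roundbrack{X\in\calX_m}/p(y)$. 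Substituting and using $w(y)=q(y)/p(y)$ then yields $\Prob_Q\roundbrack{Y=y,\,X\in\calX_m}=w(y)\,\pi_{y,m}^{P}\,\Prob_P\roundbrack{X\in\calX_m}$.

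Finally I would obtain the denominator by summing this identity over all labels, $\Prob_Q\roundbrack{X\in\calX_m}=\Prob_P\roundbrack{X\in\calX_m}\sum_{k=1}^K w(k)\,\pi_{k,m}^{P}$, and form the ratio; the common factor $\Prob_P\roundbrack{X\in\calX_m}$ cancels from numerator and denominator, leaving exactly $\pi_{y,m}^{Q}=w(y)\,\pi_{y,m}^{P}/\sum_{k=1}^K w(k)\,\pi_{k,m}^{P}$. The only real subtlety — and the step I would be most careful about — is the transfer of the label-shift identity to the bin event: one must observe that the conditional probability $\Prob\roundbrack{f(X)\in B_m\mid Y=y}$ is a functional of the class-conditional law of $X$ given $Y=y$, which is precisely the object that label shift holds fixed, so that no assumption on the predictor $f$ or on the binning scheme is required. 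Everything else is routine Bayes-rule bookkeeping.
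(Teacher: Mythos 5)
Your proposal is correct and follows essentially the same route as the paper's proof: two applications of Bayes' rule bridged by the label-shift invariance of $\Prob\roundbrack{f(X)\in B_m\mid Y=y}$, with the unknown ratio of bin masses eliminated at the end (you do this by summing the joint identity over labels; the paper equivalently normalizes the conditional probabilities to sum to one). The subtlety you flag --- that the bin event is a functional of the class-conditional law of $X$ alone --- is exactly the content of step (b) in the paper's argument.
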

In Section~\ref{subsec:calib_iid} we justified the use of empirical frequencies of class labels $\curlybrack{\widehat{\pi}^P_m}_{m\in\calM}$ for achieving canonical calibration of a predictor on the source domain and, as it has been noted in Section~\ref{subsec:conf_label_shift}, there are estimators of the importance weights which are known to be provably consistent under reasonable assumptions. Thus, with an estimator $\widehat{w}$ at hand, Proposition~\ref{prop:calibration_label_shift} suggests an appropriate correction to provably obtain asymptotically calibrated predictors on the target:
\begin{equation}\label{eq:weighted_emp_freqs}
    \widehat{\pi}_{y,m}^{(\widehat{w})} = \frac{\widehat{w}(y)\cdot \widehat{\pi}^{P}_{y,m}}{\sum_{k=1}^K \widehat{w}(k)\cdot \widehat{\pi}^P_{k,m}}, \quad  y\in\calY,
\end{equation}
for all bins $m\in\calM$. Theorem~\ref{thm:calib_guar_source} quantifies the error when the empirical label frequencies are used as estimators for the true unknown bin-conditional class probabilities on the source domain. However, different bounds on $\varepsilon_m$ could be available depending on chosen binning scheme, and thus we instead quantify how this estimation error on the source domain translates into the estimation error on the target for the cases when the importance weights are known and when they are rather estimated. As we shall see, the performance depends on the ratio of the largest to the smallest nonzero importance weight. Define the \emph{condition number}:
\begin{equation*}
    \kappa := \frac{\sup_k w(k)}{\inf_{k:w(k)\neq 0}w(k)},
\end{equation*}
with $\kappa=1$ corresponding to label shift not being present. Next, we quantify the miscalibration of the predictor~\eqref{eq:weighted_emp_freqs}.
\begin{theorem}\label{thm:calib_est_imp_weights}
Let $\widehat{w}$ be an estimator of $w$ and let $\widehat{\pi}^{(\widehat{w})}_{y,m}$ denote the reweighted empirical frequencies~\eqref{eq:weighted_emp_freqs} for all labels $y\in\calY$ and bins $m\in\calM$. For any bin $m\in\calM$, it holds that:
\begin{equation}\label{eq:label_shift_error}
    \norm{1}{\widehat{\pi}^{(\widehat{w})}_{m}-\pi^{Q}_{m}}\leq \underbrace{2\kappa\cdot \norm{1}{\widehat{\pi}^{P}_{m}-\pi^{P}_{m}}}_{(a)}+\underbrace{\frac{2 \norm{\infty}{\widehat{w}-w}}{\inf_{l:w(l)\neq 0}w(l)}}_{(b)}.
\end{equation}
\end{theorem}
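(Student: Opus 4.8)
The plan is to interpolate through an intermediate reweighting that uses the \emph{true} weights $w$ together with the \emph{empirical} source frequencies $\widehat{\pi}^P_m$, and then to split the error with a single triangle inequality. Concretely, define
\begin{equation*}
\widetilde{\pi}^{(w)}_{y,m} := \frac{w(y)\,\widehat{\pi}^{P}_{y,m}}{\sum_{k=1}^K w(k)\,\widehat{\pi}^{P}_{k,m}},\qquad y\in\calY,
\end{equation*}
so that $\widetilde{\pi}^{(w)}_m$ differs from $\pi^{Q}_m$ (whose form is given by Proposition~\ref{prop:calibration_label_shift}) only in using $\widehat{\pi}^P_m$ in place of $\pi^P_m$, and differs from $\widehat{\pi}^{(\widehat{w})}_m$ only in using $w$ in place of $\widehat{w}$. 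The triangle inequality then yields
\begin{equation*}
\norm{1}{\widehat{\pi}^{(\widehat{w})}_m-\pi^{Q}_m} \leq \norm{1}{\widetilde{\pi}^{(w)}_m-\pi^{Q}_m}+\norm{1}{\widehat{\pi}^{(\widehat{w})}_m-\widetilde{\pi}^{(w)}_m},
\end{equation*}
and I would bound the first summand by term $(a)$ and the second by term $(b)$.

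Both bounds run through the same elementary normalization-perturbation estimate: for nonnegative vectors $u,v$ with positive total mass, writing $\bar{u}:=u/\norm{1}{u}$ and $\bar{v}:=v/\norm{1}{v}$,
\begin{equation*}
\norm{1}{\bar{u}-\bar{v}} \leq \frac{2\,\norm{1}{u-v}}{\norm{1}{u}}.
\end{equation*}
This follows by decomposing $\bar{u}-\bar{v}=(u-v)/\norm{1}{u}+v\,(\norm{1}{v}-\norm{1}{u})/(\norm{1}{u}\,\norm{1}{v})$ and invoking the reverse triangle inequality $\abs{\norm{1}{u}-\norm{1}{v}}\leq\norm{1}{u-v}$.

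For term $(a)$ I would apply the estimate with $u_y=w(y)\,\pi^P_{y,m}$ and $v_y=w(y)\,\widehat{\pi}^{P}_{y,m}$, whose normalizations are exactly $\pi^{Q}_m$ and $\widetilde{\pi}^{(w)}_m$. The numerator obeys $\norm{1}{u-v}=\sum_y w(y)\abs{\pi^P_{y,m}-\widehat{\pi}^P_{y,m}}\leq \sup_k w(k)\cdot\norm{1}{\widehat{\pi}^P_m-\pi^P_m}$, while the normalizing denominator is $\norm{1}{u}=\sum_k w(k)\,\pi^P_{k,m}\geq \inf_{k:w(k)\neq 0}w(k)$; the ratio produces the factor $\kappa=\sup_k w(k)/\inf_{k:w(k)\neq 0}w(k)$, i.e.\ term $(a)$. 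Term $(b)$ is the mirror image, with $u_y=w(y)\,\widehat{\pi}^P_{y,m}$ and $v_y=\widehat{w}(y)\,\widehat{\pi}^P_{y,m}$: now $\norm{1}{u-v}=\sum_y\abs{w(y)-\widehat{w}(y)}\,\widehat{\pi}^P_{y,m}\leq\norm{\infty}{\widehat{w}-w}$, using that $\widehat{\pi}^P_m$ is a probability vector (its coordinates sum to one), and again $\norm{1}{u}\geq\inf_{l:w(l)\neq 0}w(l)$, which gives term $(b)$.

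The main obstacle is getting the denominator lower bound right: the clean constant $\inf_{l:w(l)\neq 0}w(l)$ in both $\norm{1}{u}=\sum_k w(k)\,\pi^P_{k,m}$ and its empirical analogue requires that the source frequency vectors place their mass on the support of $w$ (so that the surviving probabilities sum to one), which is the relevant regime; were nonzero source mass allowed on labels with $w(k)=0$, the bound would degrade by the fraction of in-support mass. The remaining ingredients --- the factor $2$ in the normalization lemma and the two numerator estimates by $\sup_k w(k)$ and $\norm{\infty}{\widehat{w}-w}$ --- are routine, and each holds uniformly in $m\in\calM$, yielding the stated per-bin inequality.
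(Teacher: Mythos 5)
Your proposal is correct and follows essentially the same route as the paper's proof: the same interpolation through the intermediate quantity that uses true weights with empirical source frequencies, the same single triangle inequality, and the same two estimates (numerators controlled by $\sup_k w(k)$ and $\norm{\infty}{\widehat{w}-w}$, denominators lower-bounded by $\inf_{l:w(l)\neq 0}w(l)$ using that the frequency vectors sum to one). The only difference is cosmetic --- you package the repeated add-and-subtract computation into a single normalization-perturbation lemma, which the paper instead carries out explicitly twice --- and you correctly flag the same implicit in-support assumption that the paper's denominator bound also relies on.
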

The proof is given in Appendix~\ref{appsubsec:calibration_proofs}. In words, the calibration error on the target decomposes into two terms where (a) is controlled by the calibration error on the source and (b) is controlled by the importance weights estimation error. Further, under reasonable assumptions common procedures, such as BBSE and RLLS, construct estimators of the importance weights which are not only known to be consistent but also have quantifiable error~\citep{lipton2018label,azizzadenesheli2018label}. Similarly, any proper binning scheme that provably controls number of calibration points in each bin, e.g., uniform-mass binning in the binary setting~\citep{kumar2019calibration}, yields finite-sample guarantees for the calibration error on the source~\citep{gupta2020df_calib}. Thus,  finite-sample guarantees for the miscalibration of the resulting predictor on the target domain trivially follow by virtue of Theorem~\ref{thm:calib_est_imp_weights} via invoking simple probabilistic arguments. 

Within the same binary classification setup from the beginning of Section~\ref{subsec:calib_label_shift}, we also compare calibration via uniform-mass binning with and without accounting for label shift but this time we use Fisher's LDA as an underlying classifier, which differs from the Bayes-optimal rule by using estimators of the corresponding means and covariance matrices in~\eqref{eq:cal_bayes_opt}. Results illustrated on Figure~\ref{subfig:cal_toy_example_rel_curves} via the reliability curves indicate that shift-corrected binning with either true or estimated importance weights yields a calibrated predictor on the target domain while uncorrected fails to do so as expected. To complete the empirical study, Appendix~\ref{appsubsec:calib_real_data} further examines calibration with and without accounting for label shift on the \texttt{wine~quality} dataset from Section~\ref{subsec:conf_label_shift}.
\section{Discussion}
For safety-critical applications model's prediction must be supported with a proper measure of uncertainty. As various ad-hoc procedures provide valid inference only under assumptions that are either unrealistic or unverifiable, it is essential to understand whether non-trivial guarantees can be obtained in an assumption-lean manner. Guided by this principle, we analyzed distribution-free uncertainty quantification for classification via two complementary notions: prediction sets and calibration.

We focused on a less studied --- but still highly relevant to real-world scenarios --- setting of label shift. While it is evident that label shift does hurt model's calibration, the corresponding impact on prediction sets is less obvious. In the extreme example of almost perfectly separable data, prediction sets are usually expected to contain the most likely label only, and thus coverage is not expected to suffer much no matter how the class proportions change for the test data. Still, as we illustrated, in less idealized settings, a correction for label shift is necessary. By adapting conformal prediction sets and calibration via binning to label shift, we close an existing gap for distribution-free uncertainty quantification under two standard ways of generalizing beyond the classic i.i.d. setting. Importantly, those adaptations do not require labeled data from the target domain which can be useful in applications where the labeling process is expensive. We note that handling label shift should be expected to be an easier task rather than handling another common setting --- covariate shift --- as the latter typically involves estimating a high-dimensional, and usually continuous, likelihood ratio. 

With theoretical results available for calibration in the binary setting, and thus class-wise (coordinatewise) calibration in a more general multiclass setting, establishing meaningful guarantees for ``full'' canonical calibration in the latter setting remains an intriguing future research direction. One particular example is related to the question of the importance weights estimation under label shift. While approaches based on confusion matrices, e.g., BBSE and RLLS, provably yield consistent estimators under relatively mild assumptions, alternative approaches, such as MLLS with preceding ad-hoc calibration on the source domain, tend to perform better empirically~\citep{alexandari2020mle}. Theoretical foundations for MLLS developed recently by~\citet{garg2020unified} require the underlying predictor to be canonically calibrated which is itself, unfortunately, hard to guarantee provably which creates a (somewhat circular) gap between theory and practice.

\paragraph{Acknowledgements} The authors would like to thanks Chirag Gupta and the anonymous UAI 2021 reviewers for comments on an initial version of this paper.

\bibliographystyle{plainnat}
\addcontentsline{toc}{section}{References}
\bibliography{refs}

\newpage

\newpage
\appendix

\section{Importance weights estimation under label shift}\label{appsec:imp_weight_estimation}

Below we provide details about importance weights estimation procedures which are relevant mainly to Sections~\ref{subsec:conf_label_shift} and \ref{subsec:calib_label_shift} of the paper. Estimation of the importance weights is performed using a held-out labeled set from the source distribution and an unlabeled set from the target distribution. Procedures, such as BBSE~\citep{lipton2018label} or RLLS~\citep{azizzadenesheli2018label}, are based on estimation of the confusion matrix and yield consistent importance weights estimators with quantifiable estimation error under relatively mild assumptions. First, given a black-box predictor $f: \calX \to \Delta_K$, define the corresponding expected confusion matrix $C_P(f)\in \Real^{\abs{\calY}\times\abs{\calY}}$:
\begin{equation*}
    \squarebrack{C_P(f)}_{ij}:= \Exp{P}{\indicator{\argmax_k f_k(X)=i}\cdot \indicator{Y=j}}.
\end{equation*}
We assume that
\begin{enumerate}[itemsep=0cm]    
    \item[] (A1) for every label $y\in \calY$, it holds that $q(y)>0\Longrightarrow p(y)>0$,
	\item[] (A2) expected confusion matrix $C_P(f)$ is full-rank.
\end{enumerate}
Assumption (A1) states that target label distribution is absolutely continuous with respect to the source. Indeed, reasoning properly about a class in the target domain which is not represented in the source domain is not possible. Assumption (A2) simply represents an identifiability condition. \citet{lipton2018label} show that under label shift assumption: $\Prob_Q\roundbrack{f(X)=i} = \sum_{j\in\calY} \squarebrack{C_P(f)}_{ij} w(j)$, or in matrix-vector notation:
\begin{equation*}
\label{eq:label_shift_linear_system}
    \mu = C_P(f) w.
\end{equation*}
where $\mu\in\Real^{\abs{\calY}}: \mu_i =\Prob_Q\roundbrack{f(X)=i}$. BBSE is a simple plug-in procedure, which yields the following estimator of the importance weights: 
\begin{align*}
\widehat{w} &=~ \widehat{C}^{-1}\ \widehat{\mu}, \\
    \text{where } \quad \widehat{C}_{ij} &=~ \frac{1}{m}\sum_{p=1}^m \indicator{f(X_p^{s})=i \text{ and } Y_p^s = j}, \\
    \widehat{\mu}_{i} &=~
    \frac{1}{l} \sum_{p=1}^l \indicator{f(X_p^t)=i},
\end{align*}
where $\curlybrack{(X_i^s,Y_i^s)}_{i=1}^m$ is a labeled dataset from the source distribution and $\curlybrack{(X_i^t)}_{i=1}^l$ is unlabeled data from the target distribution. BBSE-hard described above can be trivially modified to the whole probability distribution output of $f$ which is referred to as BBSE-soft procedure. Under aforementioned assumptions, \citet{lipton2018label} establish results with respect to consistency of BBSE and corresponding convergence rates.

A well-known alternative approach to directly estimate the importance weights which performs well in practice is MLLS~\citep{saerens2002adjusting} and its recent variations that combine it with preceding calibration on the source domain~\citep{alexandari2020mle}. We refer the reader to~\citet{garg2020unified} for the theoretical analysis of MLLS and a detailed overview of the results for the importance weights estimation under label shift. For all simulations in this work we use BBSE-soft procedure motivated simply by its satisfactory empirical performance throughout all of the simulations we performed. Our modular approach to UQ allows to replace BBSE with any alternative choice.

\section{Conformal classification}

Below, Section~\ref{appsubsec:tie_break} includes details about the tie-breaking rules for the oracle prediction sets, Section~\ref{appsubsec:randomization} includes a discussion regarding the role of randomization for conformal classification, Section~\ref{appsubsec:conformal_proofs} includes all necessary proofs for Sections~\ref{subsec:exch_conformal} and \ref{subsec:conf_label_shift} and Section~\ref{appsubsec:conformal_real_data} includes details about the simulation on a real dataset mentioned in Section~\ref{subsec:conf_label_shift}.

\subsection{Tie-breaking rules for the oracle prediction set}\label{appsubsec:tie_break}

In practice, when an estimator $\widehat{\pi}_y(x)$ is used in place of $\pi_y(x)$, one does not expect ties to be present but for completeness it is important to consider such scenario in the oracle setting. First, note that for any $\alpha\in(0,1)$, the oracle prediction set clearly never include labels $y\in\calY:\pi_y(x)=0$. Now, presence of ties can lead to a conservative prediction set for some $x\in\calX$ if there is a subset of class labels $S(x)\subseteq \calY$ of size $L=\abs{S(x)}>1$, such that $\forall y, y'\in S(x): \pi_y(x) = \pi_{y'}(x)>0$ and
\begin{equation*}
\begin{cases}
      \Prob \roundbrack{Y\in C^{\mathrm{oracle}}_\alpha(X)\backslash S(X) \mid X=x} < 1-\alpha,\\
 \Prob \roundbrack{Y\in C^{\mathrm{oracle}}_\alpha(X)\mid X=x} \geq (1-\alpha).
\end{cases}
\end{equation*}
In the oracle case ties can be broken arbitrarily in order to preserve the conditional coverage. One option is to break ties randomly, i.e. one can fix a random permutation of labels in $S(x)$: $\widetilde{y}_{i_1},\dots,\widetilde{y}_{i_l}$, and output a smaller oracle prediction set:
\begin{equation*}
    C^{\mathrm{oracle, new}}_\alpha(X):=\roundbrack{C^{\mathrm{oracle}}_\alpha(X)\backslash S(X)} \cup \curlybrack{\widetilde{y}_{i_1},\dots,\widetilde{y}_{i_{l^\star}}},
\end{equation*}
where $l^\star$ is the smallest index in $\curlybrack{1,\dots,l}$ such that
\begin{equation*}
\begin{aligned}
\Prob \roundbrack{Y\in C^{\mathrm{oracle}}_\alpha(X)\backslash S(X) \mid X=x} + \sum_{k=1}^{l^\star}\pi_{i_k}(x)\geq 1-\alpha.
\end{aligned}
\end{equation*}
\subsection{Note on randomization and conditional coverage}\label{appsubsec:randomization}
As the number of works on conformal classification has seen a recent spurt, it is important to understand what exactly might be the benefits of using one nested sequence over another. For example, \citet{angelopoulos2021classification} state in their Appendix B that ``randomization is of little practical importance, since... output by the randomized procedure will differ from that of the non-randomized procedure by at most one element''. However, we do not quite agree with their sentiment about it being of little practical importance for the following reason. While their observation is indeed accurate in the oracle setting, there is a noticeable difference in the empirical conditional coverage when the nested sequences are conformalized in practice (non-oracle setting). Roughly speaking, randomized scores better handle the heterogeneity of the conditional distribution of the response variable across the sample space. Note that this type of randomization has a different role from that of a randomized conformal p-value~\cite{vovk2005algorithmic} which aims to improve possibly conservative marginal coverage. We believe that the reasoning below complements the one given in~\citet{romano2020classification} and, in particular, might help an unfamiliar reader to gain some useful insights (as well as arguably having simpler notation). For completeness, we start with an example of randomization in action. Consider a binary classification problem: $\calY=\curlybrack{0,1}$, and fix target miscoverage level $\alpha=0.05$. Now, assume that for some $x\in\calX$:
\begin{itemize}
    \item $\pi_0(x)=0.99$, $\pi_1(x)=0.01$. Then with probability $95/99$, we have $\tilde{C}^{\mathrm{oracle}}_\alpha(x,u)=\curlybrack{0}$ and $\tilde{C}^{\mathrm{oracle}}_\alpha(x,u)=\curlybrack{\emptyset}$ otherwise. 
    \item $\pi_0(x)=0.9$, $\pi_1(x)=0.1$. Then with probability $1/2$, $\tilde{C}^{\mathrm{oracle}}(x,u)=\curlybrack{0,1}$ and $\tilde{C}^{\mathrm{oracle}}(x,u)=\curlybrack{0}$ otherwise.
\end{itemize}

First, consider the marginal coverage of conformal prediction sets in the ``null'' case when $\widehat{\pi}\equiv \pi$. The marginal coverage guarantee of conformal prediction sets is due to Lemma~\ref{lem:quanile_lemma} which states a classic result for quantiles of exchangeable random variables and is tight when these variables are almost surely distinct. In the non-randomized setting for any point $(X,Y)$, the corresponding non-conformity score are given by $\rho_{Y}(X;\pi)$. Such form might suggest that the marginal coverage could be conservative due to possible ties as whenever the predicted most likely label appears to be the correct one, it holds that $\rho_{Y}(X;\pi)=0$. However, if ties among non-conformity scores are present, they would typically occur only between zero-valued scores, and thus in a reasonable classification setup one should expect the marginal coverage to be tight even for non-randomized nested sequence as the calibrated threshold would typically be nonzero.

Next, before reasoning about conditional coverage of conformal sets, recall that the conditional distribution of the response is discrete in classification setting, and thus even in the null case it is hard to reason meaningfully about the distribution of non-conformity scores $\rho_{Y}(X;\pi)$. However, \citet{romano2020classification} noticed that if randomization~\eqref{eq:oracle_rand_ps} is used, then it becomes possible to do at least in the null case. If $\widehat{\pi}\equiv \pi$, it is trivial to see the distribution of corresponding non-conformity scores $\rho_Y(X;\pi)+U\cdot \pi(X)$ is uniform conditional on $X$. Then, as the authors conjecture, it is intuitive that conformal prediction sets would recover the oracle ones under some consistency assumptions for $\widehat{\pi}$.

However, randomization is also performed when the prediction set is a singleton containing the most likely label only, and thus might yield non-interpretable and non-actionable empty prediction sets being purely the consequence of deploying randomization. Thus one might consider abstaining from dropping a label from the prediction set whenever it forms a singleton and perform randomization if and only if the oracle prediction set contains more than one label. While that decision can be embedded into  either prediction step only or calibration step as well, we state explicitly that it should be done at the prediction step only for the aforementioned reasons.

Consider the binary toy example from Section~\ref{subsec:calib_label_shift} with focus on the source distribution only. As the true class-posterior probability $\pi_1^P(x)$ is known, we construct the non-randomized oracle prediction set $C^{\mathrm{oracle}}$ and compare it visually with the randomized version $\widetilde{C}^{\mathrm{oracle}}$ on Figures~\ref{subfig:oracle_non_randomized} and \ref{subfig:oracle_randomized} where randomization demonstrates desired behavior.

Consequently, we consider conformal prediction sets based on non-randomized sequence:
\begin{equation}\label{eq:non_rand_conf_pred_set}
\begin{aligned}
    \calF_{\tau^\star} \roundbrack{x,u;\widehat{\pi}} & = \curlybrack{y\in\calY:  r'(x,y)\leq \tau^\star},\\
    \tau^\star & = \quant_{1-\alpha} \roundbrack{\curlybrack{r'_i}_{i\in\calI_2}\cup \curlybrack{1}},\\
    r'(x,y) & = \rho_{y}(x;\widehat{\pi}),
\end{aligned}
\end{equation}
and two randomized sequences where Scheme 1 performs randomization for all labels and was introduced before for conformal prediction sets~\eqref{eq:exchan_conf_pred_set} and Scheme 2 (added for completeness of comparison) performs randomization for  all labels except the most likely one:
\begin{equation}\label{eq:rand_conf_pred_set_alternative}
\begin{aligned}
    \calF_{\tau^\star} \roundbrack{x,u;\widehat{\pi}} & = \curlybrack{y\in\calY:  r''(x,y)\leq \tau^\star},\\
    \tau^\star & = \quant_{1-\alpha} \roundbrack{\curlybrack{r''_i}_{i\in\calI_2}\cup \curlybrack{1}},
\end{aligned}
\end{equation}
where
\begin{equation*}
    r''(x,y) = \indicator{\rho_y(x;\widehat{\pi})>0}\cdot \roundbrack{\rho_y(x;\widehat{\pi})+u\cdot \widehat{\pi}_y(x)}.
\end{equation*}
We again use the Bayes-optimal classifier $\pi_y(x)$, and thus ignore the results that are due to estimation and focus purely on effects that are due to conformalization. For a single data draw we illustrate the resulting conformal prediction sets on Figures~\ref{subfig:conf_toy_example_nonrand}, \ref{subfig:conf_toy_example_rand_1} and \ref{subfig:conf_toy_example_rand_2}. While at first sight it might seem that non-randomized nested sequences is superior in terms of yielding prediction sets with smaller cardinality, it should be taken with a grain of salt. We repeatedly draw calibration and test data and track marginal characteristics for those sets. As expected, all three resulting prediction sets inherit $1-\alpha$ (marginal) coverage guarantee as confirmed on Figure~\ref{subfig:conf_toy_example_marg_coverage}. Moreover, Figure~\ref{subfig:conf_toy_example_marg_size} indeed confirms that randomization could yield larger prediction sets for not perfectly separable data. But Figure~\ref{subfig:conf_toy_example_tau} confirms that randomization proposed by~\citet{romano2020classification} (Scheme 1) demonstrates superior conditional coverage since for this example the true $\pi_y(x)$ is used, and thus the oracle prediction sets are recovered if $\tau^\star=1-\alpha$. Figure~\ref{subfig:conf_toy_example_tau_change_cal_set_size} confirms that oracle prediction sets are not recovered even when the size of the calibration set is increased.

\begin{figure*}
    \centering
    \begin{subfigure}{0.45\textwidth}
        \centering
        \includegraphics[width=\textwidth]{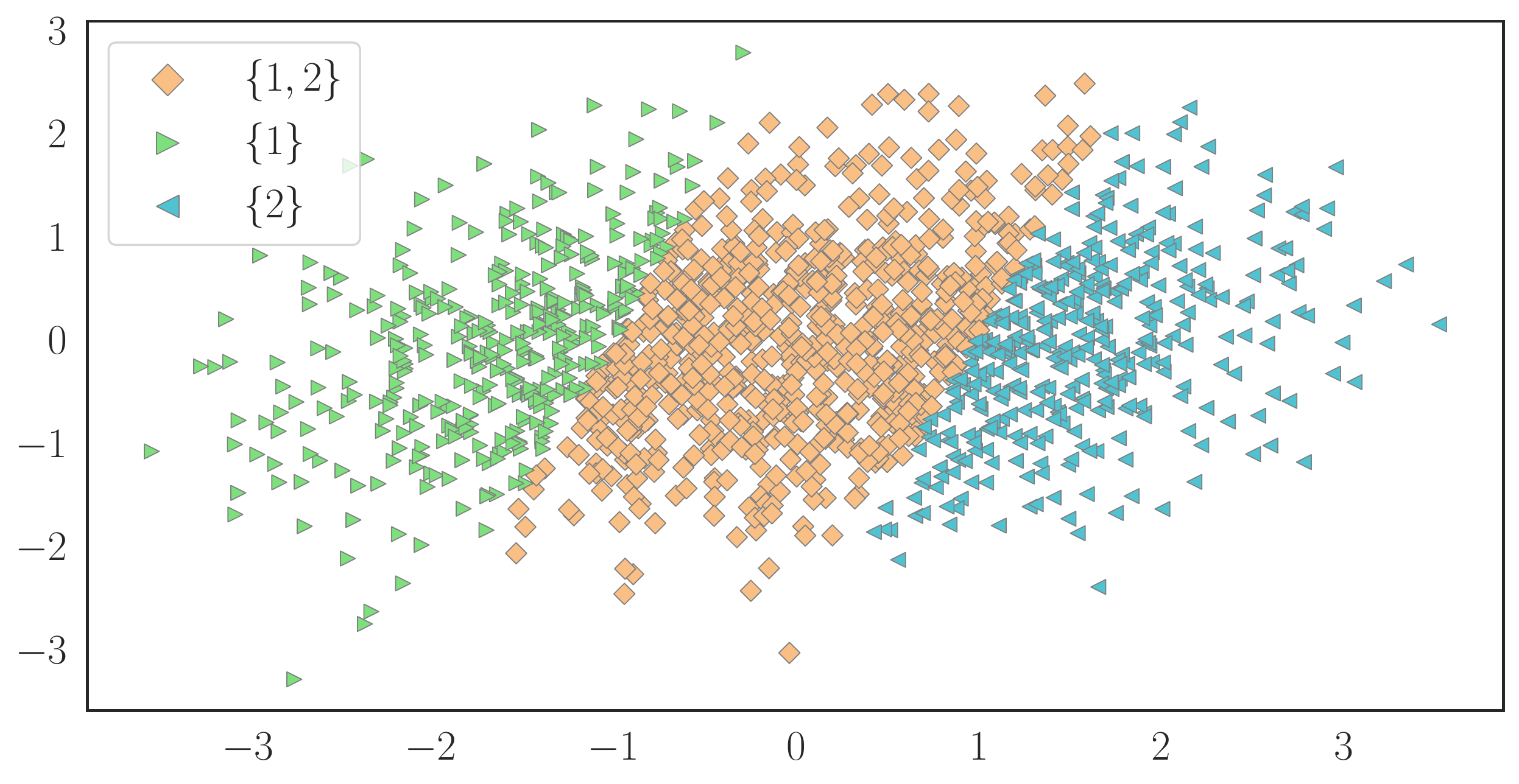}
    \caption{}
    \label{subfig:oracle_non_randomized}
    \end{subfigure}%
    ~ 
    \begin{subfigure}{0.45\textwidth}
        \centering
        \includegraphics[width=\textwidth]{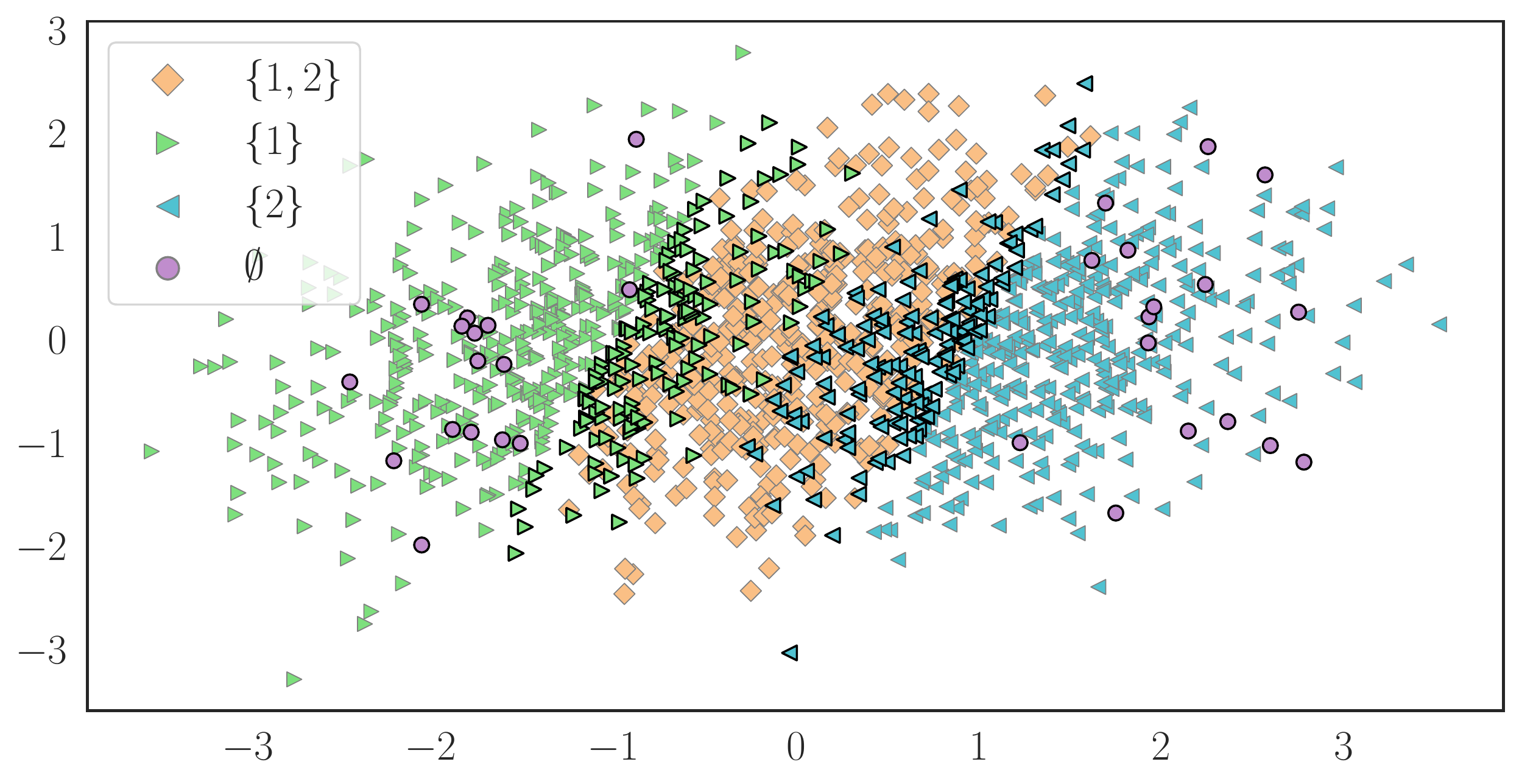}
    \caption{}
    \label{subfig:oracle_randomized}
    \end{subfigure}
    ~
    \begin{subfigure}{0.45\textwidth}
        \centering
        \includegraphics[width=\textwidth]{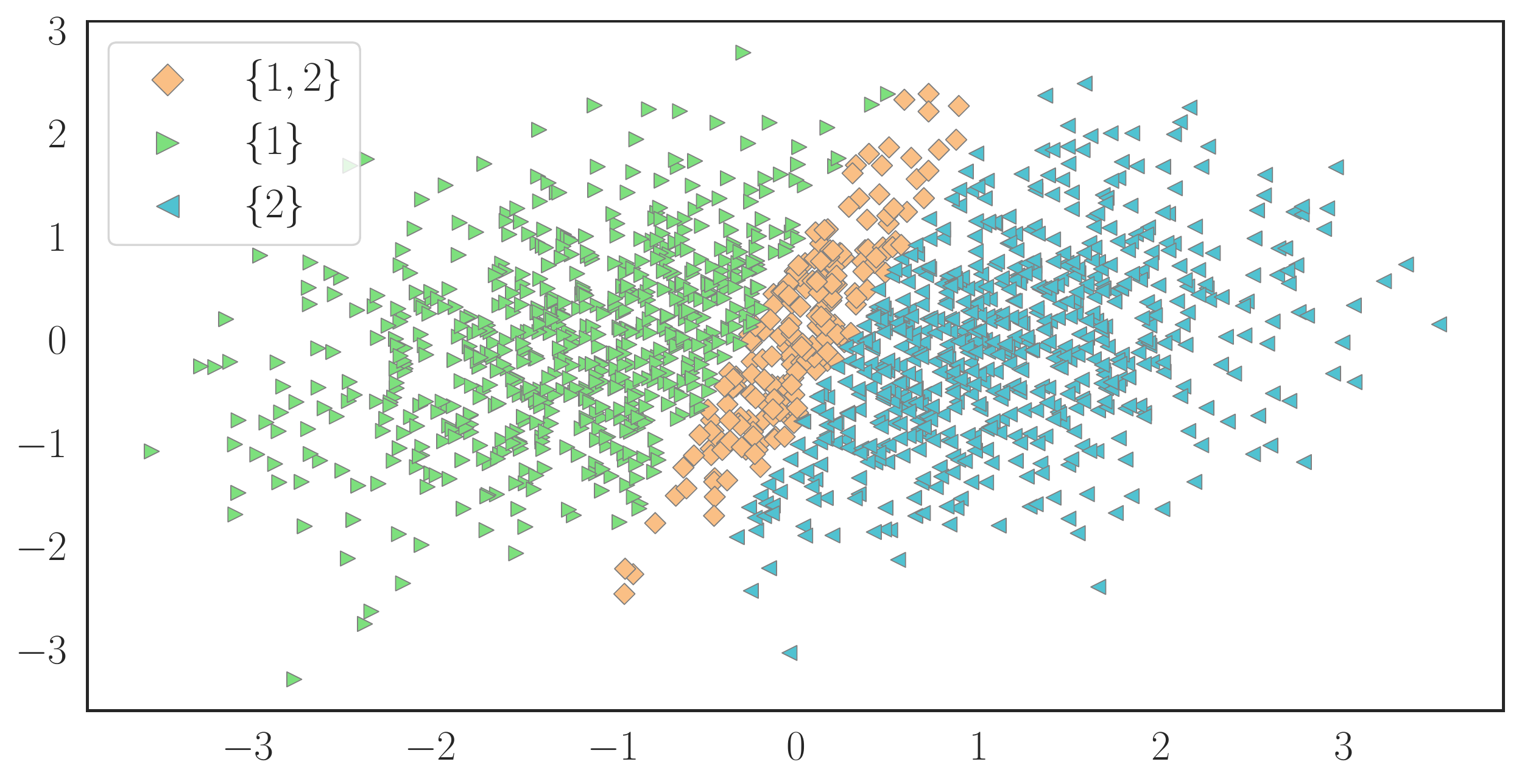}
    \caption{}
    \label{subfig:conf_toy_example_nonrand}
    \end{subfigure}%
    ~ 
    \begin{subfigure}{0.45\textwidth}
        \centering
        \includegraphics[width=\textwidth]{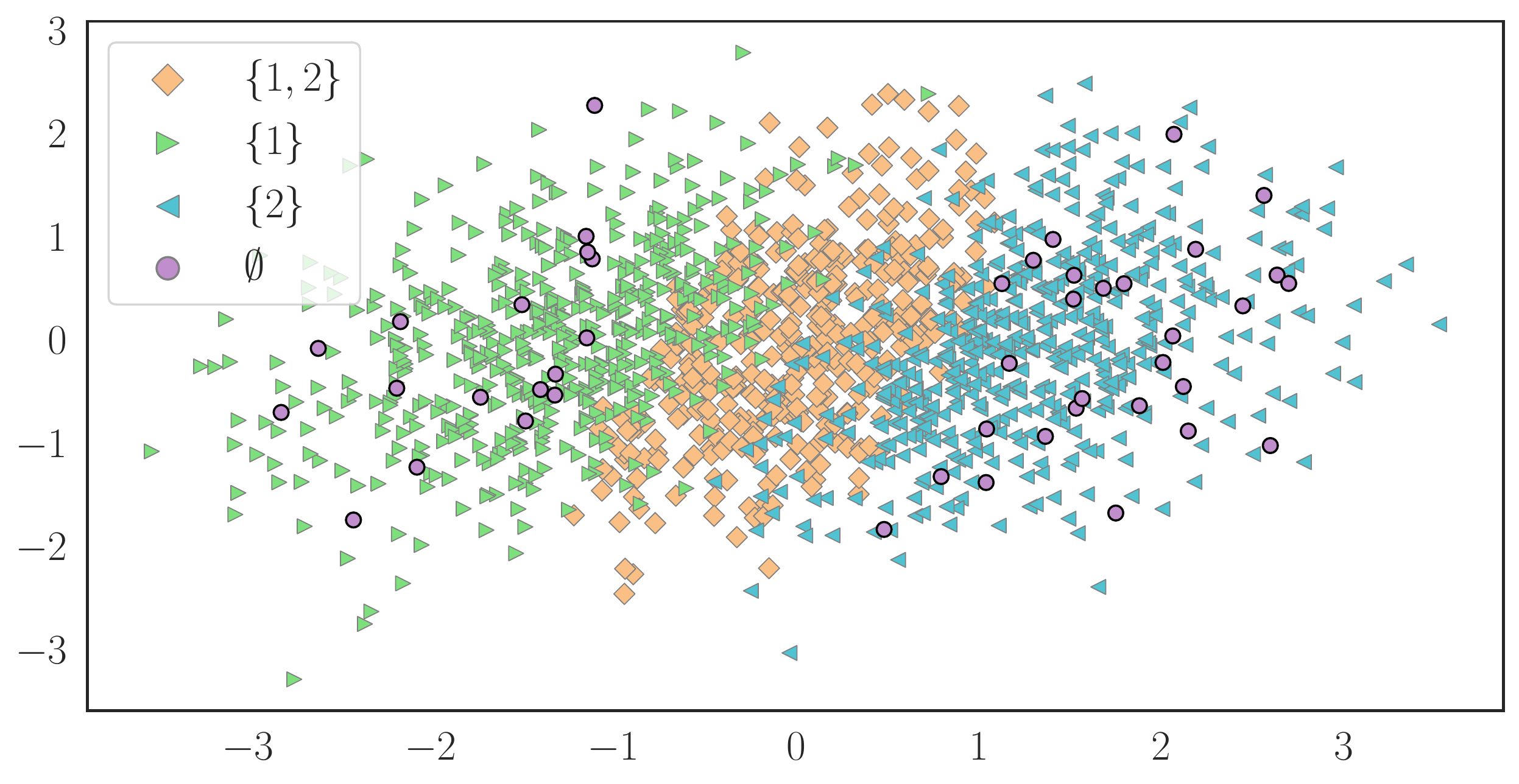}
    \caption{}
    \label{subfig:conf_toy_example_rand_1}
    \end{subfigure}
    ~
    \begin{subfigure}{0.45\textwidth}
        \centering
        \includegraphics[width=\textwidth]{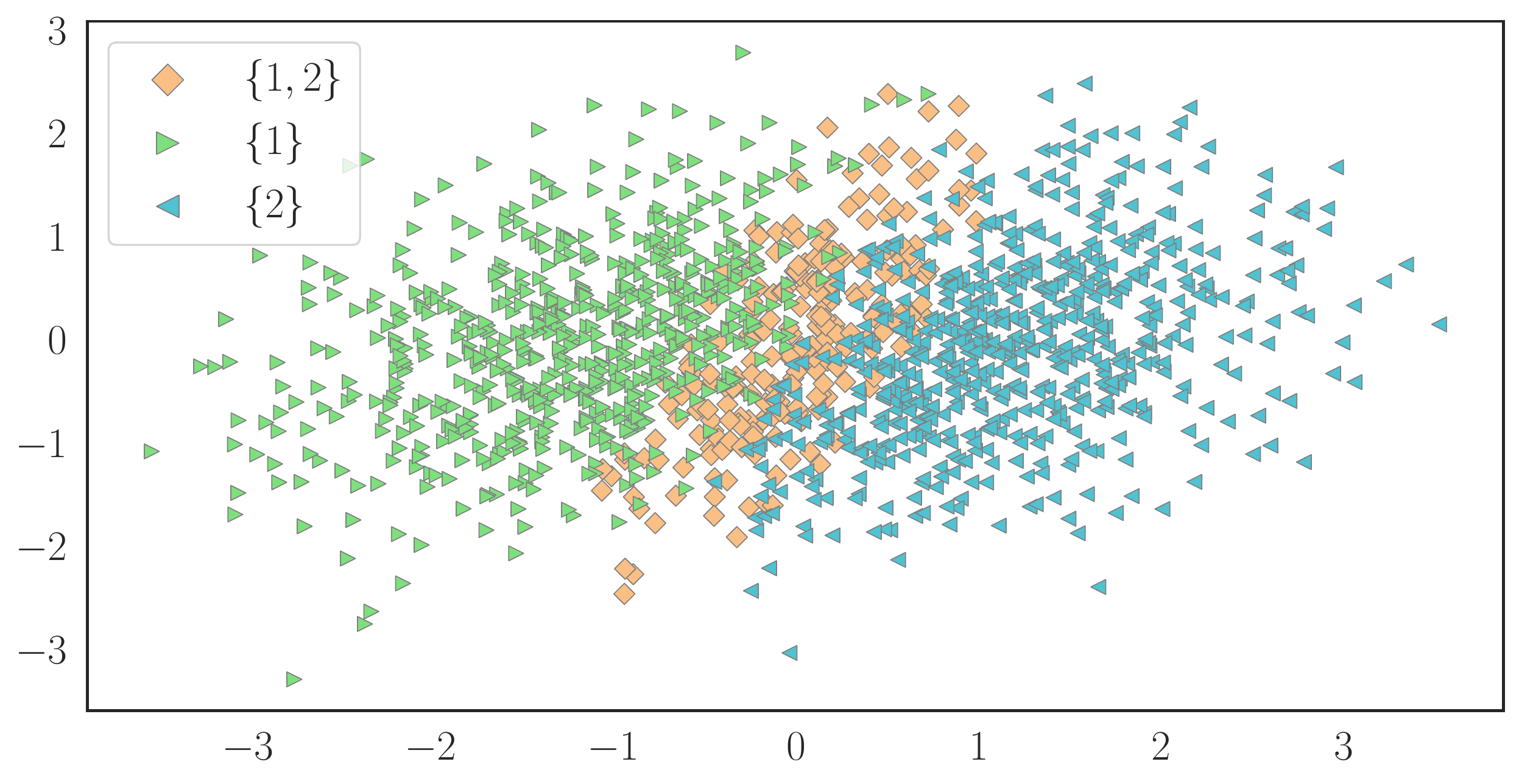}
    \caption{}
    \label{subfig:conf_toy_example_rand_2}
    \end{subfigure}
    \caption{Prediction sets corresponding to (\subref{subfig:oracle_non_randomized}) the non-randomized oracle from~\eqref{eq:oracle_nonrand_pred_set_equiv_def}; (\subref{subfig:oracle_randomized}) the randomized oracle from~\eqref{eq:oracle_rand_ps}; (\subref{subfig:conf_toy_example_nonrand}) the non-randomized conformal method~\eqref{eq:non_rand_conf_pred_set}; (\subref{subfig:conf_toy_example_rand_1}) the randomized conformal (scheme 1) method~\eqref{eq:exchan_conf_pred_set}; (\subref{subfig:conf_toy_example_rand_2}) the randomized conformal (scheme 2) method~\eqref{eq:rand_conf_pred_set_alternative}. Notice that randomization acts differently in the oracle and conformal settings. While for the oracle setting randomization as per scheme 2 corresponds to recoloring the purple points to either green (leftmost color, class $0$) or blue (rightmost color, class $1$) depending on the most likely label, for the conformal setting two schemes yield conceptually different prediction sets. Presented visualizations might be misleading regarding the role of randomization for conformal classification as they suggest the non-randomized conformal method is the optimal one. See Figure~\ref{fig:conf_toy_example_binary_marg_char} and Section~\ref{appsubsec:randomization} for more details.}
    \label{fig:conf_toy_example_binary_diff_rand_schemes}
\end{figure*}

\begin{figure*}
    \centering
    \begin{subfigure}{0.45\textwidth}
        \centering
       \includegraphics[width=\textwidth]{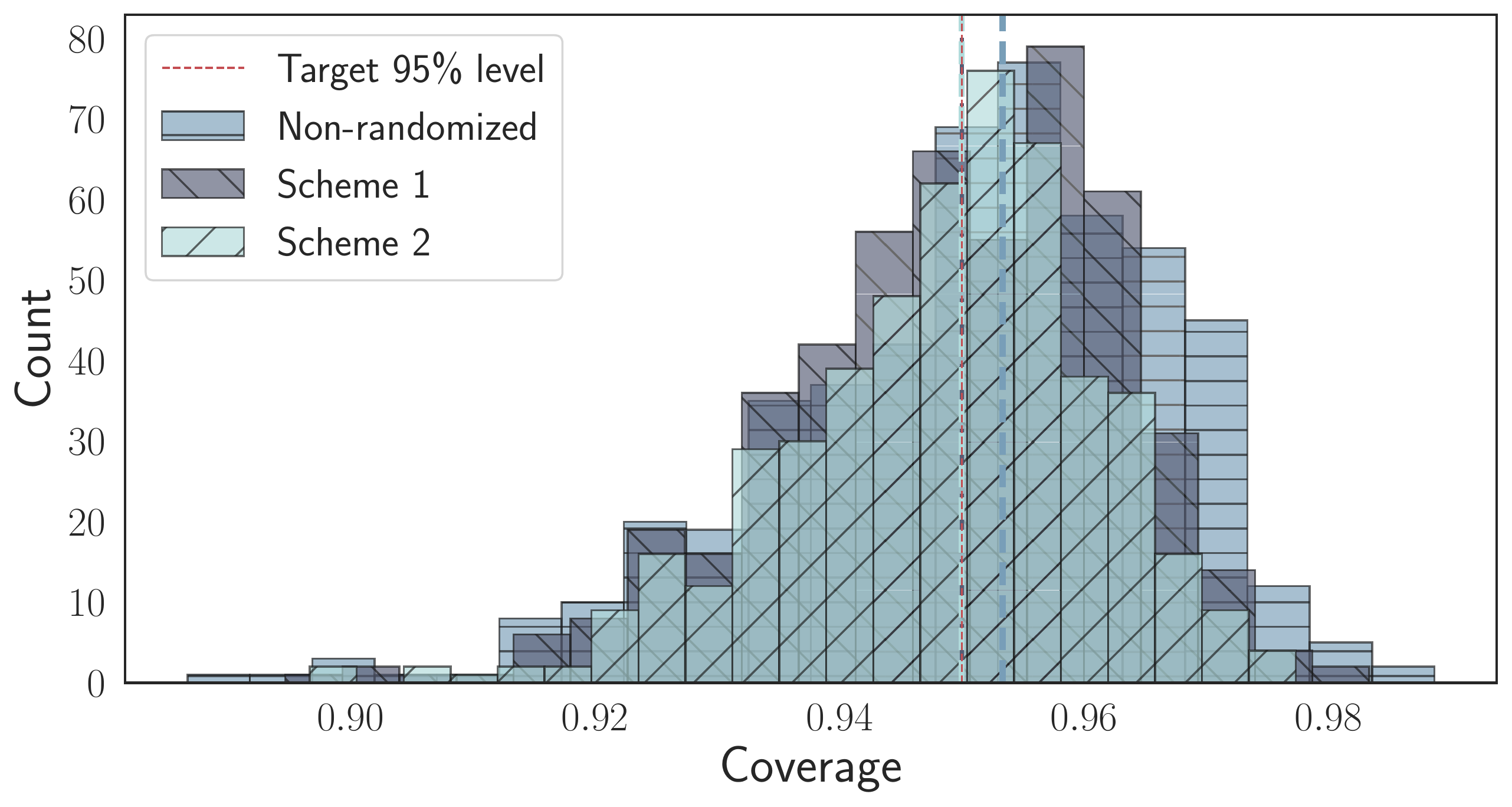}
    \caption{}
    \label{subfig:conf_toy_example_marg_coverage}
    \end{subfigure}%
    ~ 
    \begin{subfigure}{0.45\textwidth}
        \centering
     \includegraphics[width=\textwidth]{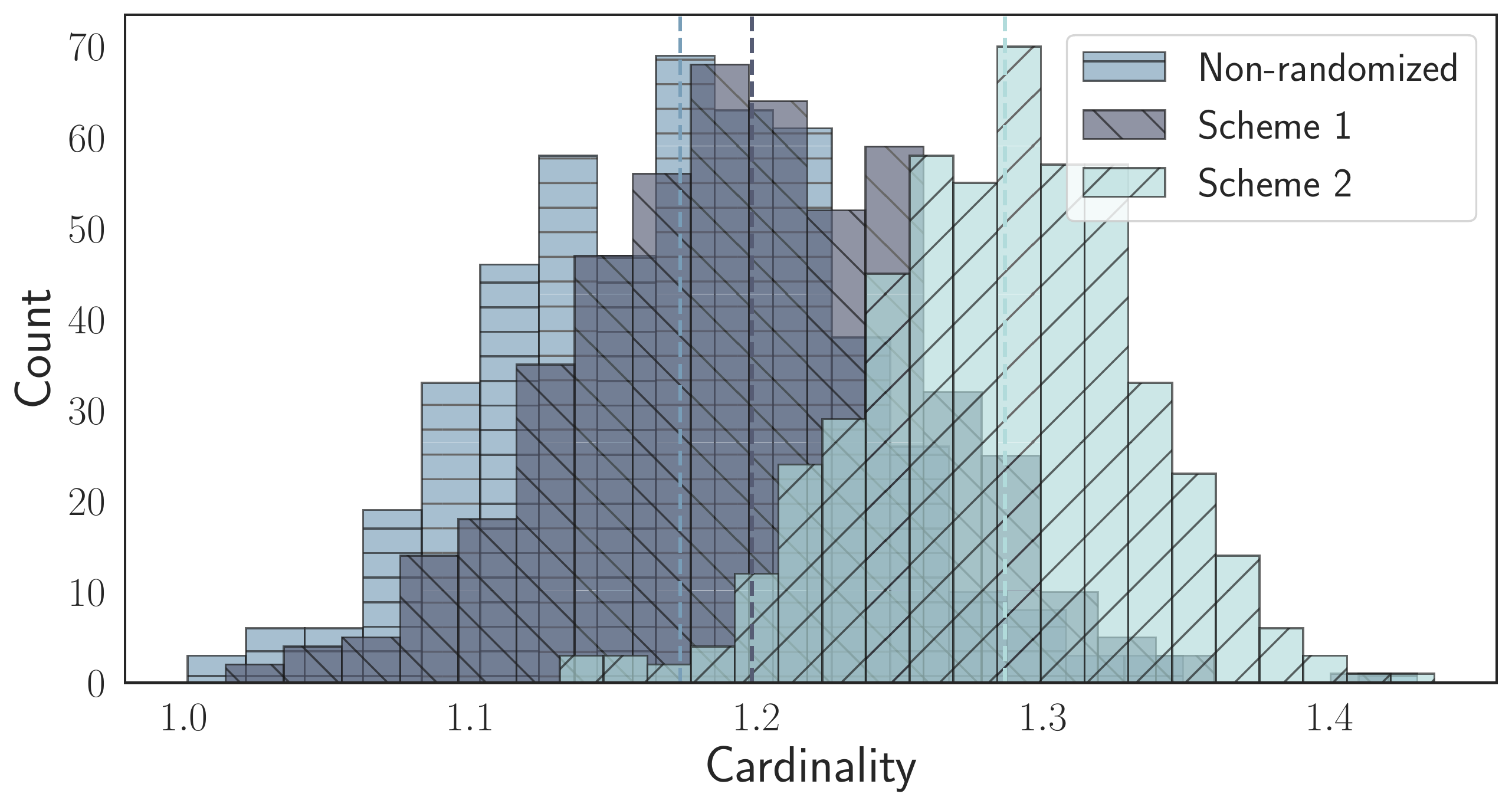}
    \caption{}
    \label{subfig:conf_toy_example_marg_size}
    \end{subfigure}
    ~ 
    \begin{subfigure}{0.45\textwidth}
        \centering
       \includegraphics[width=\textwidth]{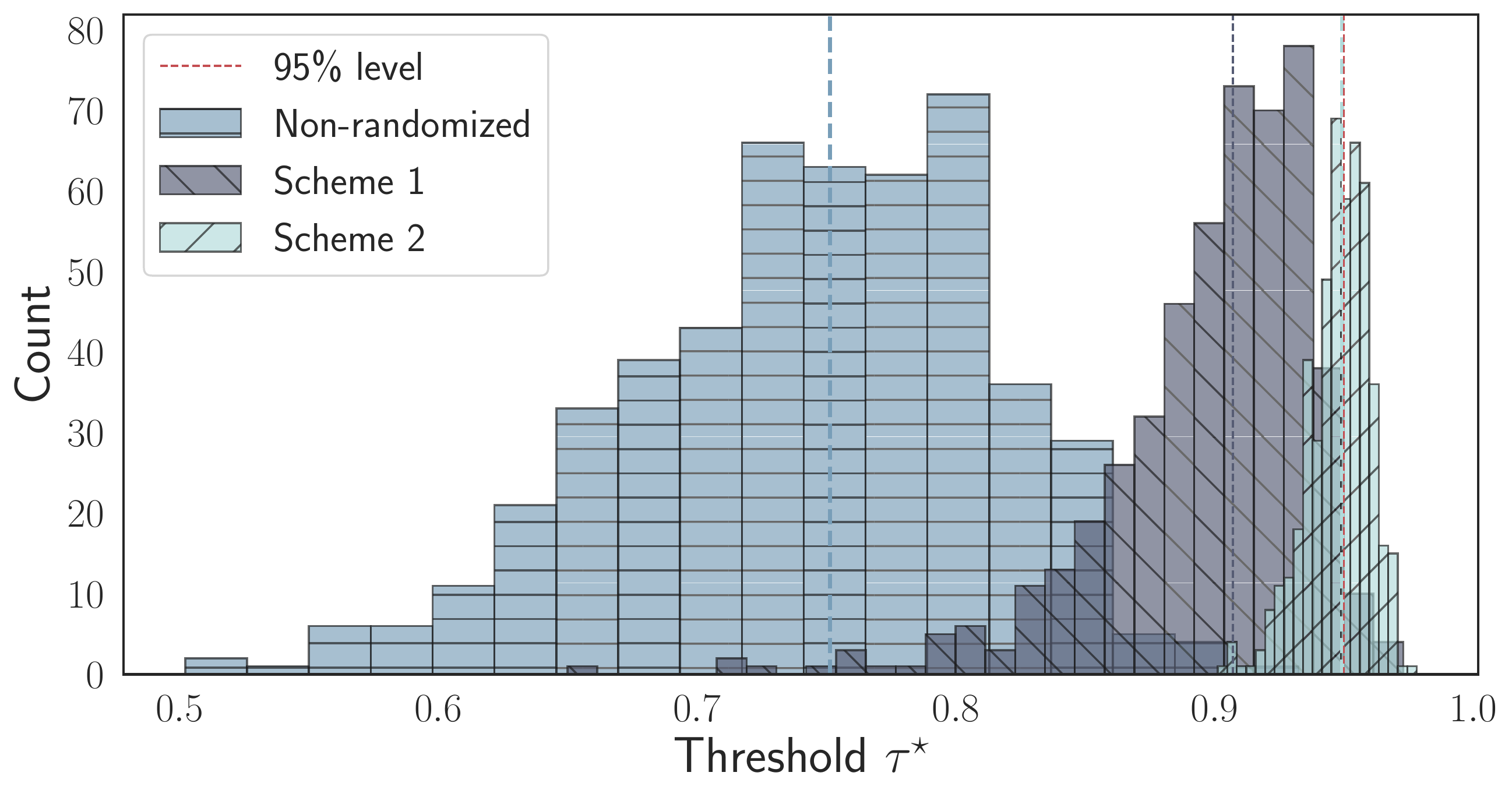}
    \caption{}
    \label{subfig:conf_toy_example_tau}
    \end{subfigure}
    ~ 
    \begin{subfigure}{0.45\textwidth}
        \centering
       \includegraphics[width=\textwidth]{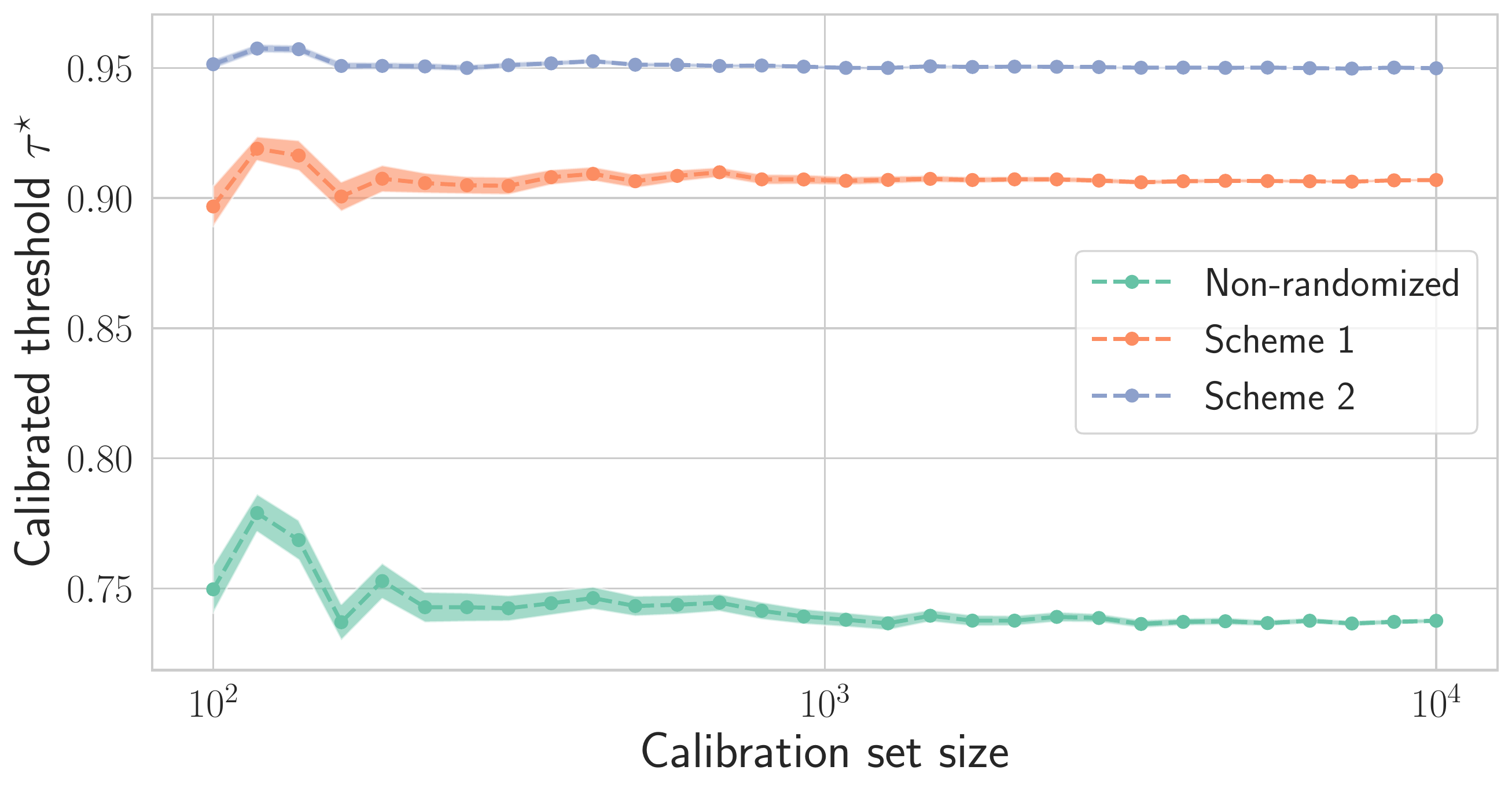}
    \caption{}
    \label{subfig:conf_toy_example_tau_change_cal_set_size}
    \end{subfigure}
\caption{Characteristics of conformal prediction sets for the simulation in Section~\ref{appsubsec:randomization}: (\subref{subfig:conf_toy_example_marg_coverage}) average marginal coverage, (\subref{subfig:conf_toy_example_marg_size}) average cardinality, (\subref{subfig:conf_toy_example_tau}) learned cut-off thresholds in each setting (appending empty prediction sets with the most-likely label does not impact the threshold), (\subref{subfig:conf_toy_example_tau_change_cal_set_size}) learned cut-off thresholds in each setting when increasing the size of the calibration  set. Key takeaways include: (i) marginal coverage requirement is met irrespective of whether conformal method performs randomization or not, (ii) the fact that randomization yields larger prediction sets, and thus is inferior is misleading, (iii) as in considered the example the conformal method recovers the oracle if learned threshold $\tau^\star=0.95$, only randomized (scheme 1) one does it, (iv) the cut-off thresholds do not depend much on the size of the calibration dataset.}
    \label{fig:conf_toy_example_binary_marg_char}
\end{figure*}

\subsection{Proofs}\label{appsubsec:conformal_proofs}

\begin{proof}[Proof of Theorem~\ref{thm:guarantee_exch}]

By the definition of the conformal prediction set, $Y_{n+1}\in \calF_{\tau^\star}(X_{n+1}, U_{n+1};\widehat{\pi})$ if and only if:
\begin{equation*}
    r(X_{n+1},Y_{n+1}, U_{n+1};\widehat{\pi})\leq \quant_{1-\alpha} \roundbrack{\curlybrack{r_i}_{i\in\calI_2}\cup \curlybrack{1}}.
\end{equation*}
As the non-conformity scores $\curlybrack{r_i}_{i=1}^{n+1}$ are exchangeable random variables for any fixed $\widehat{\pi}$, Lemma~\ref{lem:quanile_lemma} implies the desired result conditional on $\curlybrack{(X_i,Y_i)}_{i\in\calI_1}$. Finally, when randomization is performed, the scores are uniformly distributed, and thus Lemma~\ref{lem:quanile_lemma} implies that the marginal coverage is nearly tight.
\end{proof}

\begin{proof}[Proof of Theorem~\ref{thm:oracle_imp_weights_ps}]

First, recall the definition of weighted exchangeability~\citep{tibs2019conf}.
\begin{definition}[Weighted exchangeability]
Random variables $Z_1,\dots,Z_n$ are said to be \emph{weighted exchangeable}, with weight functions $\omega_1,\dots,\omega_n$, if the density $f$ of their joint distribution can be factorized as:
\begin{equation*}
    f(z_1,\dots,z_n) = \prod_{i=1}^n \omega_i(z_i)\cdot g(z_1,\dots,z_n),
\end{equation*}
where $g$ is any function that that invariant to permutations of its arguments, i.e., $g(z_{\sigma(1)},\dots,z_{\sigma(n)})$ for any permutation $\sigma$ of $1,\dots,n.$
\end{definition}
Independent draws are always weighted exchangeable and it is easy to see that under label shift setting $Z_i=(X_i,Y_i,U_i)$, $i = 1,\dots, n+1$ are weighted exchangeable with $\omega_i\equiv 1$, $i=1,\dots,n$ and $\omega_{n+1}((x,y)) = q(y)/p(y)$, for any pair $(x,y)\in\calX\times\calY$. Let $r_{n+1}:=r\roundbrack{X_{n+1},Y_{n+1}, U_{n+1};\widehat{\pi}}$. By construction $Y_{n+1}\in  \calF^{(w)}_{\tau^\star} \roundbrack{X_{n+1}, U_{n+1};\widehat{\pi}}$ if and only if:
\begin{equation*}
 r_{n+1} \leq \quant_{1-\alpha}\roundbrack{\sum_{i=1}^{n} \tilde{p}_i^w(Y_{n+1})\delta_{r_i}+ \tilde{p}_{n+1}^w(Y_{n+1})\delta_1}.
\end{equation*}
Under label shift assumption, weights~\eqref{eq:weight_coef} do simplify as
\begin{equation*}
\begin{aligned}
    p_i^w(Z_1,\dots,Z_{n+1}) &= \frac{\sum_{\sigma:\sigma(n+1)=i}w_{n+1}(Z_{i})}{\sum_{\sigma}w_{n+1}(Z_{\sigma(n+1)})}\\
    &= \frac{w(Y_{i})}{\sum_{j=1}^{n}w(Y_j)+w(Y_{n+1})}\\
    & = \tilde{p}_i^w(Y_{n+1}),
\end{aligned}
\end{equation*}
for $i=1,\dots,n+1$ matching the ones stated in~\eqref{eq:threshold_weighted_case}. The result follows by invoking Lemma~\ref{lem:weight_quant_lemma}. As $\widehat{\pi}$ is fixed at the calibration step being pre-computed on a separate part of the dataset split, the result is conditional on $\curlybrack{(X_i,Y_i)}_{i\in\calI_1}$.
\end{proof}

\begin{proof}[Proof of Corollary~\ref{cor:asympt_est_imp_weights}]
As for the other results, here it is also conditional on the training data, and thus we omit writing $\curlybrack{(X_i,Y_i)}_{i\in\calI_1}$ for succinctness and we use $r_{n+1} =r\roundbrack{X_{n+1},Y_{n+1},U_{n+1};\widehat{\pi}}$ to denote the radius for the test point. Choose an arbitrary $\varepsilon>0$. We have:
\begin{align}
     & \Prob\roundbrack{Y_{n+1}\notin \calF^{(\widehat{w}_k)}_{\tau^\star} \roundbrack{X_{n+1}, U_{n+1};\widehat{\pi}}} \nonumber \\
    = \quad & \Prob\roundbrack{ r_{n+1}>\tau^\star_{\widehat{w}_k}(Y_{n+1})} \label{eq:est_weight_proof}\\
    =\quad &  \Prob\roundbrack{ \curlybrack{r_{n+1}>\tau^\star_{\widehat{w}_k}(Y_{n+1})} \cap \curlybrack{r_{n+1}+\varepsilon>\tau^\star_{w}(Y_{n+1})}} \nonumber\\
    + \quad & \Prob\roundbrack{ \curlybrack{r_{n+1}>\tau^\star_{\widehat{w}_k}(Y_{n+1}}) \cap \curlybrack{r_{n+1}+\varepsilon\leq\tau^\star_{w}(Y_{n+1})}}. \nonumber
\end{align}
We have that:
\begin{equation*}
\begin{aligned}
    & \Prob\roundbrack{ r_{n+1}\geq\tau^\star_{w}(Y_{n+1})}
    = \Prob\roundbrack{ r_{n+1}>\tau^\star_{w}(Y_{n+1})} <\alpha,
\end{aligned}
\end{equation*}
where equality is due to the fact that $r_{n+1}$ in the randomized scheme has a continuous distribution and inequality is due to Theorem~\ref{thm:oracle_imp_weights_ps}. For the first term in~\eqref{eq:est_weight_proof} we have:
\begin{equation*}
\begin{aligned}
     & \Prob\roundbrack{ \curlybrack{r_{n+1}>\tau^\star_{\widehat{w}_k}(Y_{n+1}}) \cap \curlybrack{r_{n+1}+\varepsilon>\tau^\star_{w}(Y_{n+1})}} \\
    = \quad & \Prob\roundbrack{ \curlybrack{r_{n+1}>\tau^\star_{\widehat{w}_k}(Y_{n+1}}) \cap \curlybrack{r_{n+1}>\tau^\star_{w}(Y_{n+1})-\varepsilon}}\\
   \leq \quad & \Prob\roundbrack{r_{n+1}>\tau^\star_{w}(Y_{n+1})-\varepsilon},
\end{aligned}
\end{equation*}
and for the second term we have that:
\begin{equation*}
\begin{aligned}
      \Prob\roundbrack{ \curlybrack{r_{n+1}>\tau^\star_{\widehat{w}_k}(Y_{n+1})} \cap \curlybrack{r_{n+1}\leq\tau^\star_{w}(Y_{n+1})-\varepsilon}} 
   \leq \Prob\roundbrack{\abs{\tau^\star_{\widehat{w}_k}(Y_{n+1})-\tau^\star_{w}(Y_{n+1})}\geq \varepsilon}.
\end{aligned}
\end{equation*}
Note that $\varepsilon$ was chosen arbitrarily, so we can let $\varepsilon\rightarrow 0$. By the continuous mapping theorem, consistency of $\widehat{w}_k$ implies that of $\tau^\star_{\widehat{w}_k}(y)$, $y\in\calY$. Thus,
\begin{equation*}
    \lim_{k\rightarrow\infty} \Prob\roundbrack{Y_{n+1}\in \calF^{(\widehat{w}_k)}_{\tau^\star} \roundbrack{X_{n+1}, U_{n+1};\widehat{\pi}}} \geq 1-\alpha,
\end{equation*}
which concludes the proof of the Corollary.
\end{proof}

\subsection{Simulation on real data}\label{appsubsec:conformal_real_data}

For the simulation in Section~\ref{subsec:conf_label_shift} we use \texttt{wine quality} dataset~\citep{data_wine} to illustrate the performance of the conformal prediction sets when label shift is (not) taken into account. We focus on white wine dataset only, which has 4898 instances with 11 features and construct a 3-class classification problem by keeping classes 5,6,7 only to avoid complications arising due to high imbalance in the dataset (less than $10\%$ of the data points were removed). Other important aspects include
\begin{enumerate}
    \item \textbf{Data Split:} First, the original dataset $\calD$ is split into two disjoint and approximately equal sets $\calD_1$ and $\calD_2$. Then label shift is simulated via resampling according to considered class proportions yielding $\widetilde{\calD}_1$ and $\widetilde{\calD}_2$ of the same size. Finally, the former dataset is split at random into sets for training ($\approx 1000$ instances), calibration ($\approx 100$ instances) and importance weights estimation ($\approx 700$ instances) and the latter is split is split at random into importance weight estimation ($\approx 1000$ instances; recall that only labels from the target are used) and test ($\approx 1600$ instances) sets. 
    \item \textbf{Model:} We use a standard Feed Forward Neural Network with 3 hidden layers with (128,64,32) neurons and $\ell_2$-regularization in each as an underlying model. We use Adam optimizer with default parameters, set the maximum number of training epochs to 500 and deploy Early Stopping with patience for 25 epochs.
    \item \textbf{Estimating label shift:} We use BBSE-soft~\citep{lipton2018label} for estimating importance weights.
\end{enumerate}

\subsection{Marginal (standard) conformal versus label-conditional conformal}\label{appsubsec:lcc_marg_conf}

Various procedures of performing label-conditional conformal prediction have been proposed in a series of works~\citep{vovk2005algorithmic,vovk2016criteria,sadinle2019label,guan2019prediction}. Those are based on a slight modification of the standard conformal p-value used to determine whether there is enough evidence to exclude given label from the prediction set. Roughly speaking, for each candidate label $y$ instead of looking whether a pair $(X_{n+1},y)$ conforms well to the whole collection of points $\widetilde{\calD}=\curlybrack{(X_i,Y_i)}_{i\in\calI}$, one considers only the subcollection that shares the same label $y$. Since the standard exchangeability argument immediately implies validity, the difference then lies in a particular choice for the underlying (non-)conformity score. For example, one could design a score that aims to minimize expected size of the prediction set~\cite{sadinle2019label,guan2019prediction}.

We now apply label-conditional split-conformal framework to the setting discussed in this work and focus on the case of not well-separated data. Consider, for example, the data simulation pipeline from Section~\ref{subsec:conf_label_shift}. First, we fix $\alpha_y=\alpha=0.1$ for all $y\in\calY$ and illustrate the difference between label-conditional conformal~\eqref{eq:pred_sets_label_cond} and standard conformal~\eqref{eq:exchan_conf_pred_set} prediction sets with the same randomized non-conformity scores~\eqref{eq:conf_score_eqiv} for a fair comparison on Figure~\ref{fig:cond_vs_uncond_conformal}. In both cases a shallow MLP (two layers with 100 hidden units in each) is used as an underlying predictor. In this particular example a stronger requirement of conditional validity forces many prediction sets to be larger and to contain the least populated class $1$. 

Then we perform 1000 simulations and compare label-conditional conformal against marginal conformal in two settings (in all cases prediction sets are forced to contain the most likely label for a fair comparison). First, we set the calibration set size to be $\approx 350$ data points and compare two procedures depending on whether class proportions change, and in the former case we perform reweighting of the non-conformity scores as described in Section~\ref{subsec:conf_label_shift}. On Figure~\ref{subfig:larg_cal_source} we observe that when class proportions do not change label-conditional conformal yields larger prediction sets as opposed to standard marginal conformal due to a stronger coverage requirement. However, when class proportions change, after performing the reweighting with the true label likelihood ratios, both procedures output prediction sets of similar size on average as illustrated on Figure~\ref{subfig:larg_cal_target}. Motivated by reasons related to the practical limits of data resources when keeping a sufficiently large held-out set per label could become prohibitive, we also consider a setting when the calibration set contains $\approx 100$ data points (total). Smaller calibration set size results in losses of statistical power when testing whether a given label should be included into the prediction set, and thus, might yield larger prediction sets as observed on Figure~\ref{subfig:small_cal_card}.

To summarize, label-conditional conformal is a complementary (and a powerful) technique to label-shifted conformal that is inherently robust to changes in class proportions. It does not require importance weights, and thus can yield exact finite-sample guarantees. Still, it has certain limitations: (a) it might be potentially a bit conservative in certain areas of the sample space where classes overlap, (b) it requires further splitting of the calibration set that could have negative impact, especially when the number of classes $K$ is large, a common setting for the modern datasets.

\begin{figure*}
    \centering
    \begin{subfigure}{0.45\textwidth}
        \centering
        \includegraphics[width=\textwidth]{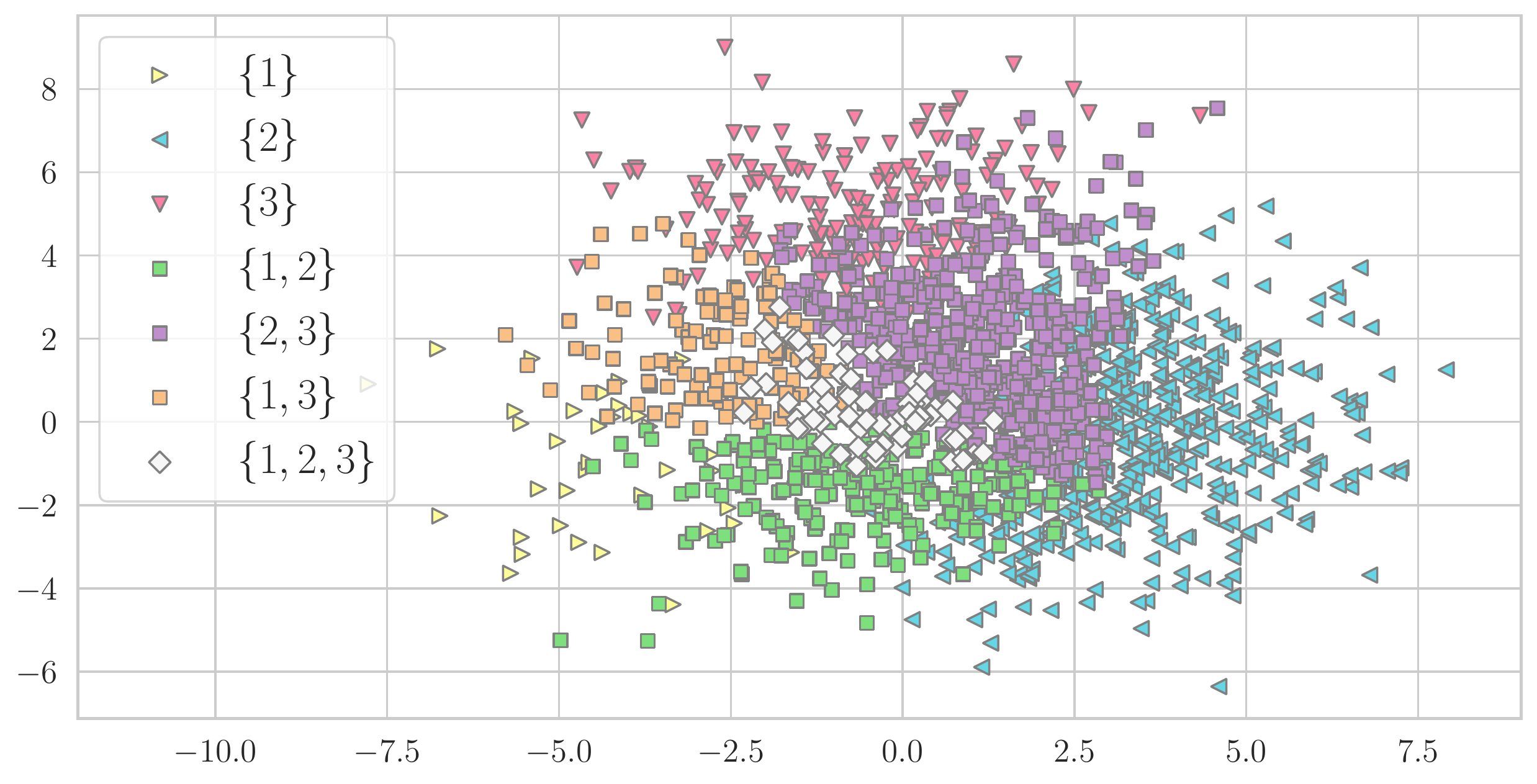}
    \caption{}
    \end{subfigure}%
    ~ 
    \begin{subfigure}{0.45\textwidth}
        \centering
        \includegraphics[width=\textwidth]{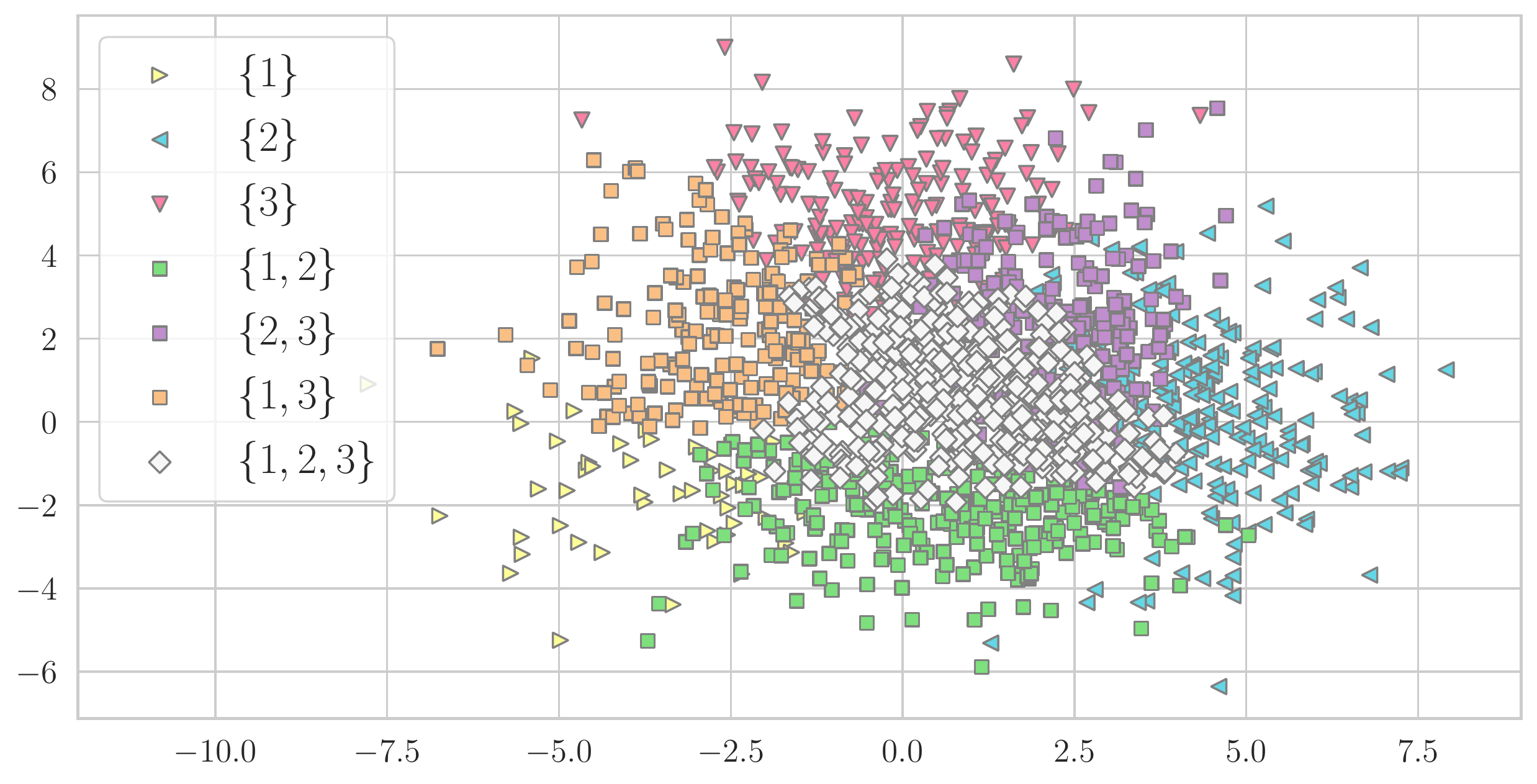}
    \caption{}
    \end{subfigure}
    \caption{(a) Conformal prediction sets with marginal coverage guarantee, (b) Conformal prediction sets with class-specific coverage guarantee. Stronger coverage comes at the price of larger the prediction sets in certain areas.}
    \label{fig:cond_vs_uncond_conformal}
\end{figure*}

\begin{figure*}
    \centering
    \begin{subfigure}{0.45\textwidth}
        \centering
        \includegraphics[width=\textwidth]{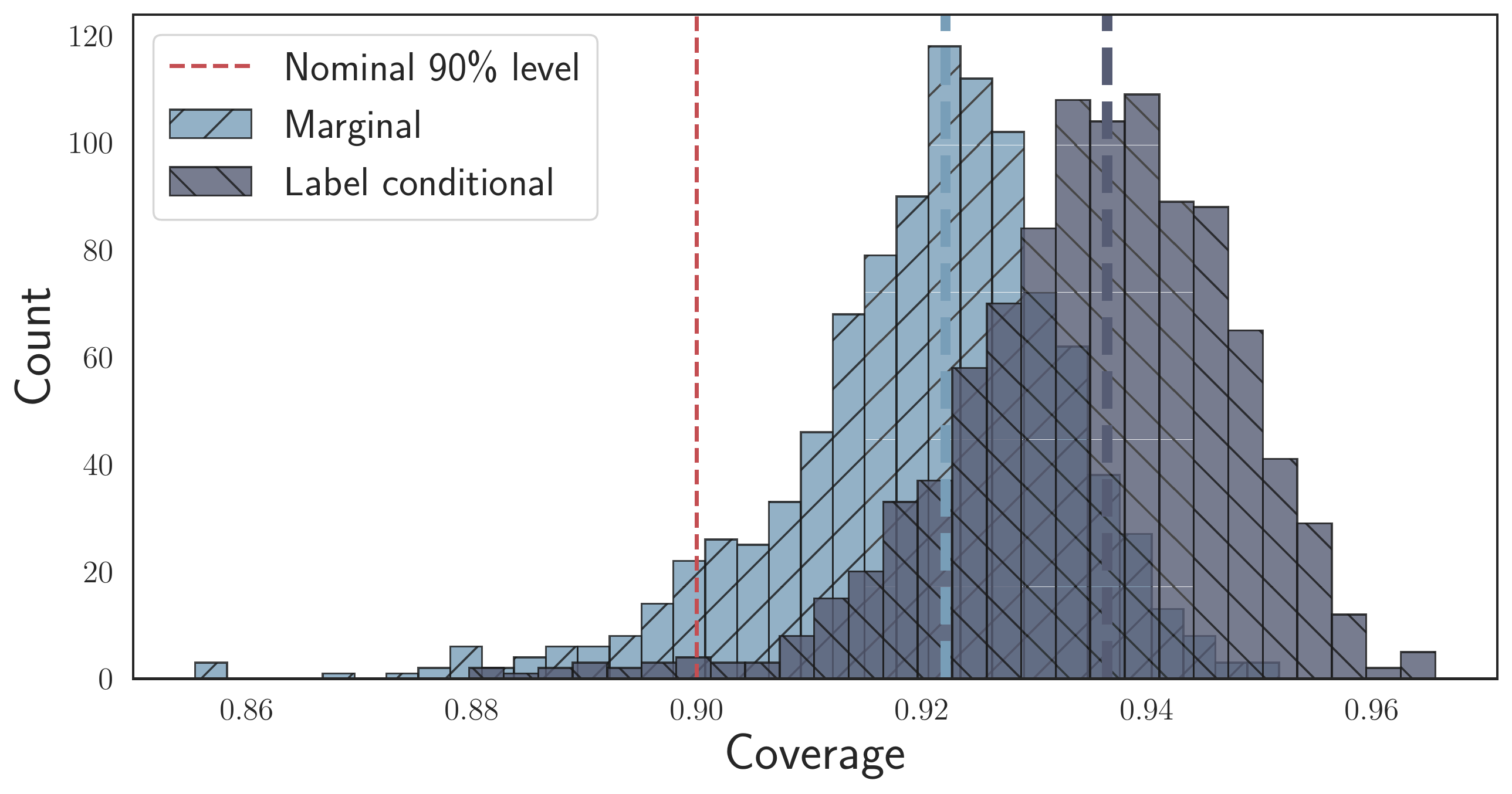}
    \caption{}
    \end{subfigure}%
    ~ 
    \begin{subfigure}{0.45\textwidth}
        \centering
        \includegraphics[width=\textwidth]{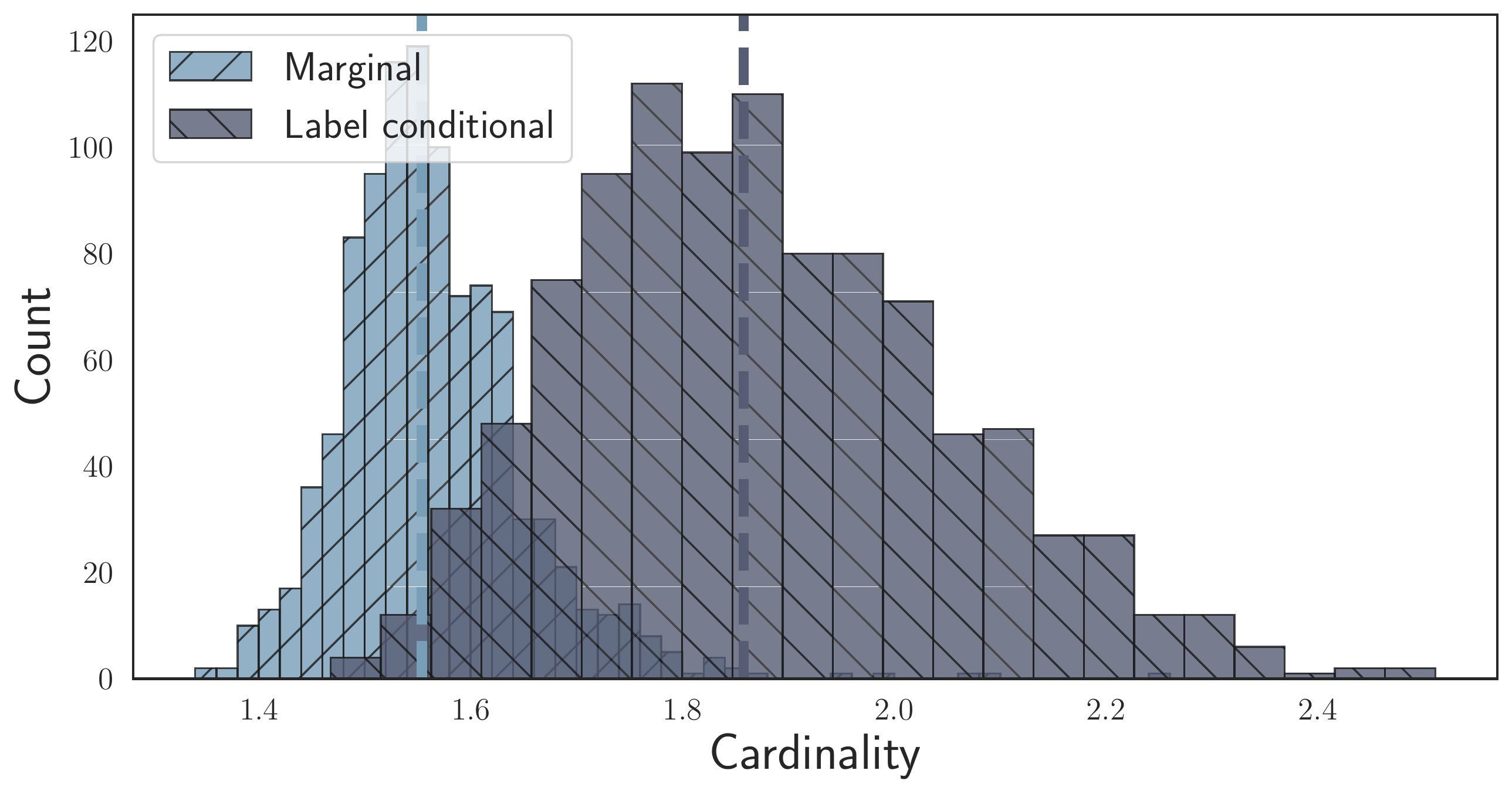}
    \caption{}
    \label{subfig:larg_cal_source}
    \end{subfigure}
    ~ 
    \begin{subfigure}{0.45\textwidth}
        \centering
        \includegraphics[width=\textwidth]{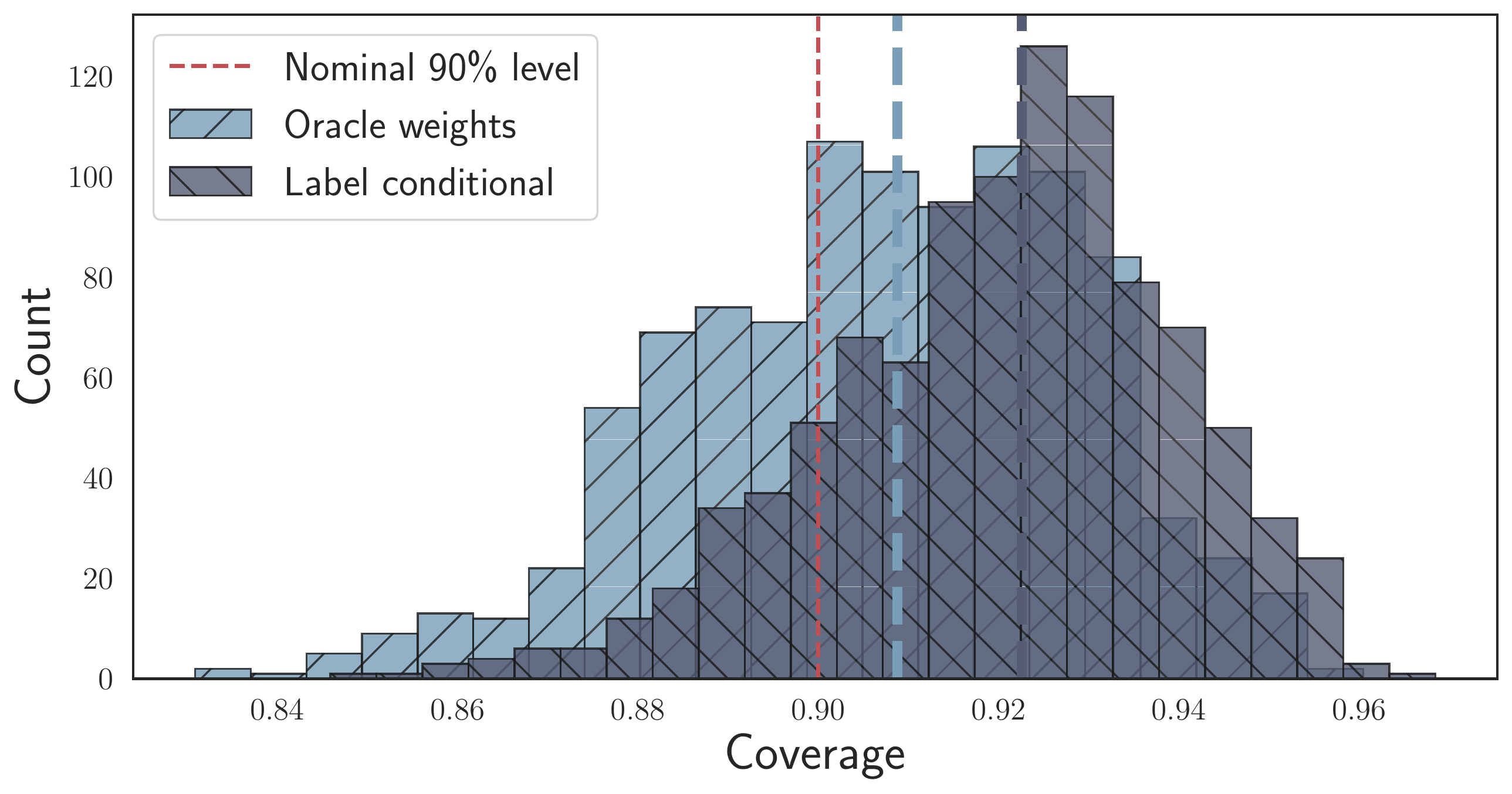}
    \caption{}
    \end{subfigure}%
    ~ 
    \begin{subfigure}{0.45\textwidth}
        \centering
        \includegraphics[width=\textwidth]{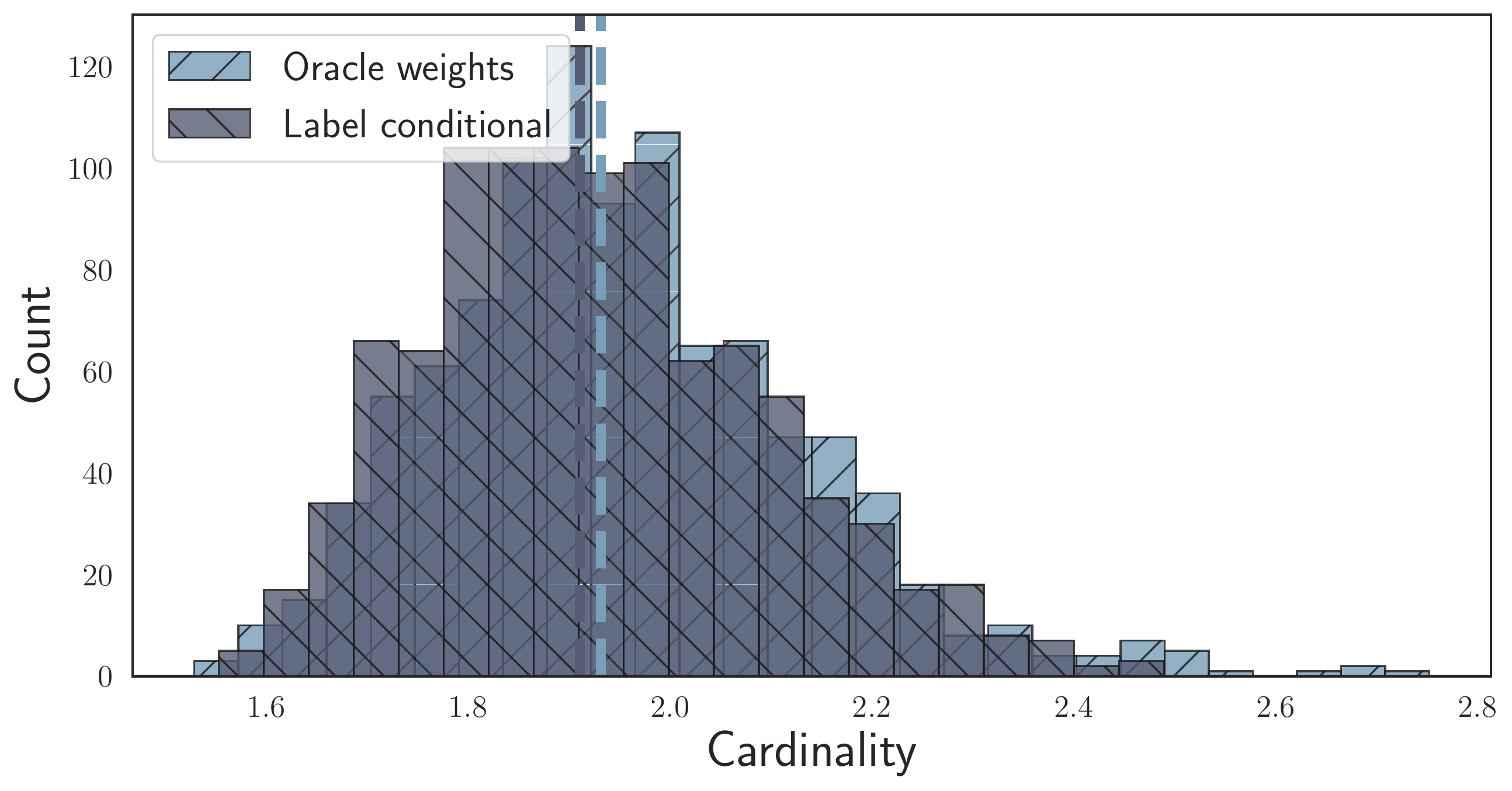}
    \caption{}
    \label{subfig:larg_cal_target}
    \end{subfigure}
    ~ 
    \begin{subfigure}{0.45\textwidth}
        \centering
        \includegraphics[width=\textwidth]{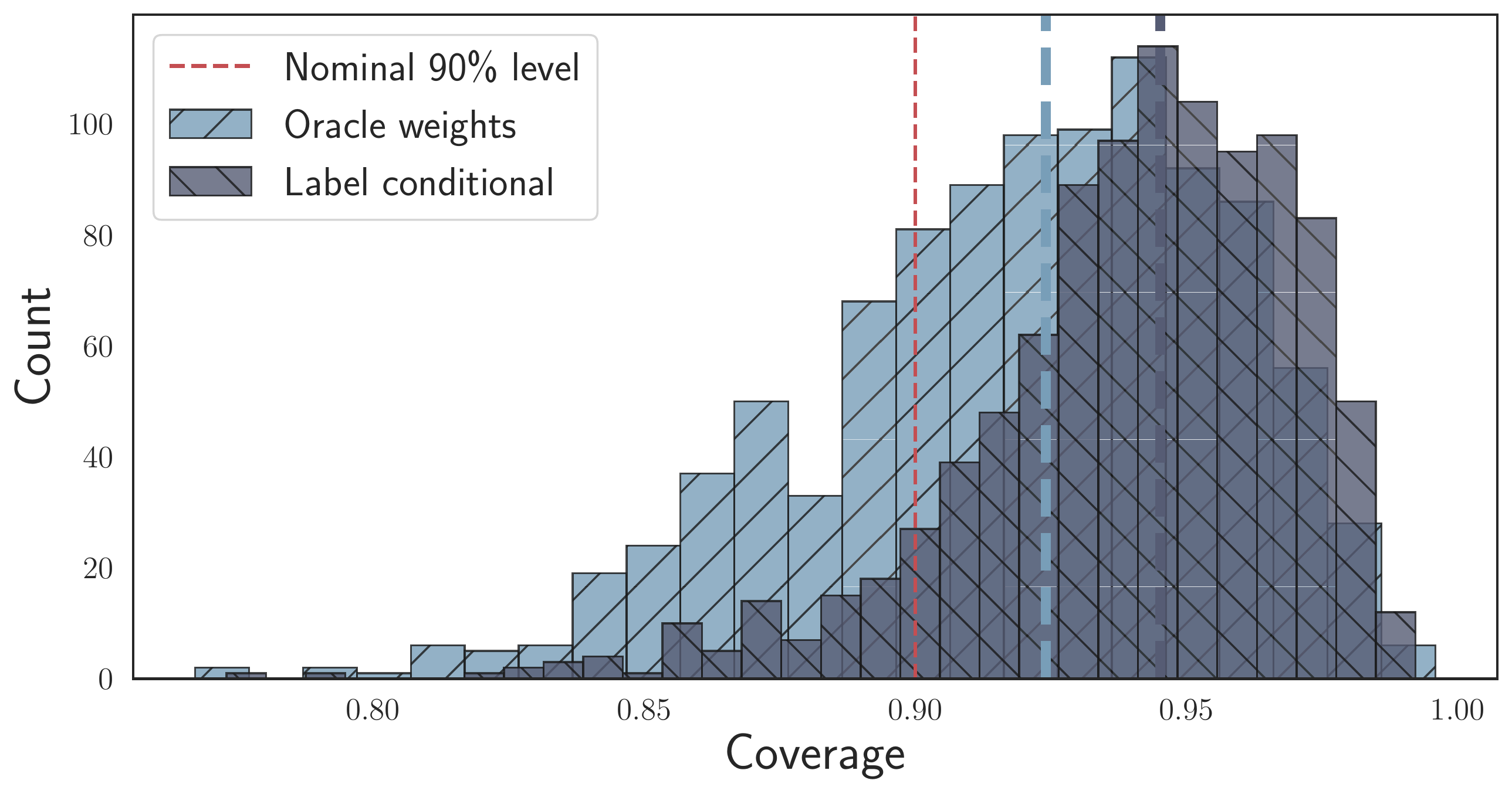}
    \caption{}
    \end{subfigure}%
    ~ 
    \begin{subfigure}{0.45\textwidth}
        \centering
        \includegraphics[width=\textwidth]{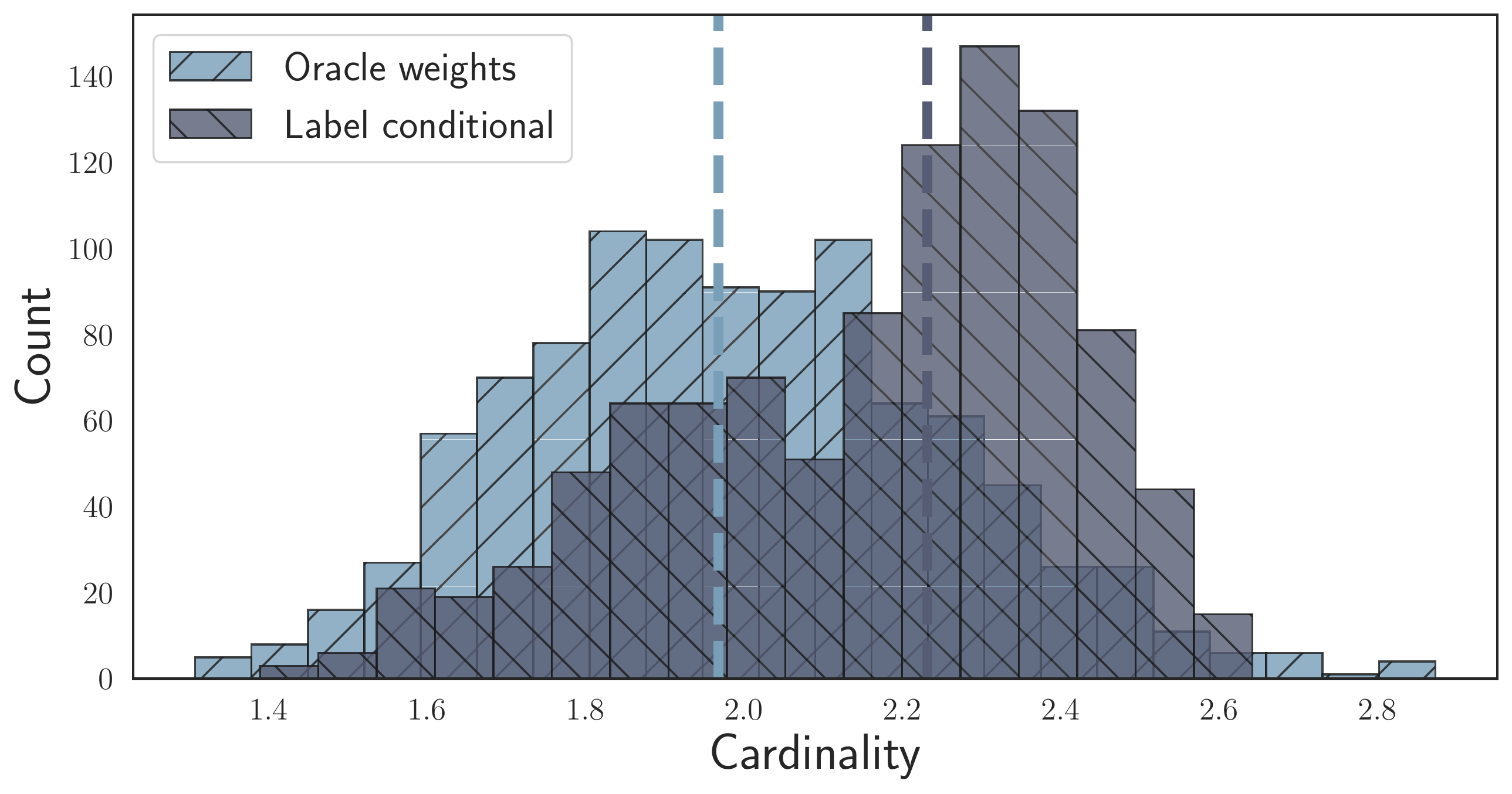}
    \caption{}
    \label{subfig:small_cal_card}
    \end{subfigure}
    \caption{Empirical coverage and average cardinality of conformal prediction sets: (a-b) source distribution and $\approx 350$ calibration data points total, (c-d) target distribution and $\approx 350$ calibration data points total, (e-f) target distribution and $\approx 100$ calibration data points total. Complete comparison of the results is given in Section~\ref{appsubsec:lcc_marg_conf}.}
    \label{fig:cond_vs_uncond_conf_multiple}
\end{figure*}

\section{Calibration}

Section~\ref{appsubsec:calibration_proofs} includes all proofs for Sections~\ref{subsec:calib_iid} and \ref{subsec:calib_label_shift} and Section~\ref{appsubsec:calib_real_data} includes details about the simulation on a real dataset mentioned in Section~\ref{subsec:calib_label_shift}.

\subsection{Proofs}\label{appsubsec:calibration_proofs}

\begin{proof}[Proof of Theorem~\ref{thm:calib_guar_source}]

Recall that $g:\calX\to \calM$ denotes the bin-mapping function. Let $E$ be the event that $\roundbrack{g(X_1),\dots, g(X_n)} = \roundbrack{g(x_1),\dots, g(x_n)}$. On this event, the number of calibration points $N_{m}$ within each bin $B_m$ is known and for each bin labels are i.i.d. with corresponding class probabilities given by $\pi_{y,m}^{P} = \Prob\roundbrack{Y=y\mid f(X)\in B_m}$ for all $y\in \calY$. Thus, a vector corresponding of label frequencies has multinomial distribution with parameters $N_{m}$ and $\curlybrack{\pi_{y,m}^{P}}_{y\in \calY}$. Theorem~\ref{thm:bret_huber_carol} yields that conditional on $E$
\begin{equation*}
    \sum_{y=1}^K\abs{\widehat{\pi}_{y,m}^{P}-\pi_{y,m}^{P}} \geq \frac{2}{\sqrt{N_m}} \sqrt{\frac{1}{2}\ln \roundbrack{\frac{M2^K}{\alpha}}} ,
\end{equation*}
with probability at most $\alpha/M$. Invoking union bound, we get that, conditional on $E$, with probability at least $1-\alpha$,
\begin{equation*}
    \sum_{y=1}^K\abs{\widehat{\pi}_{y,m}^{P}-\pi_{y,m}^{P}}\leq \frac{2}{\sqrt{N_m}} \sqrt{\frac{1}{2}\ln \roundbrack{\frac{M2^K}{\alpha}}},
\end{equation*}
simultaneously for all $m\in\calM$. Since it is true for any $E$, we can marginalize to obtain the first assertion of the Proposition. The second assertion simply represents a consideration of the case when multiple bins happen to have the same calibrated output which is needed to state the desired calibration guarantee. Let
\begin{equation*}
    \varepsilon^\star = \sup_{m\in\calM} \varepsilon_m
\end{equation*}
denote the worst-case bound. Note that $\varepsilon^\star$ is in fact random and to be fully rigorous we, first, perform next steps conditional on $E$ and then marginalize to obtain the assertion. Now, for any $y\in\calY$:
\begin{align}
     & \abs{\Prob\roundbrack{Y=y\mid h(X)} - h_y(X)} \nonumber\\
     = \quad & \abs{\Exp{}{\indicator{Y=y} \mid h(X)}- h_y(X)} \label{eq:calib_prop_proof}\\
     \overset{(a)}{=} \quad &  \abs{\Exp{}{\indicator{Y=y} \mid h(X)}-\Exp{}{h_y(X)\mid h(X)}} \nonumber\\
     \overset{(b)}{=} \quad & \abs{\Exp{}{\Exp{}{\indicator{Y=y} \mid g(X)} \mid h(X)}-\Exp{}{h_y(X)\mid h(X)}} \nonumber \\
     \overset{(c)}{=} \quad & \abs{\Exp{}{\squarebrack{\pi_{y,g(X)}^P- h_y(X)} \mid h(X)}} \nonumber \\
     \overset{(d)}{\leq} \quad & \Exp{}{\abs{\pi_{y,g(X)}^P- \widehat{\pi}_{y,g(X)}} \mid h(X)}, \nonumber
\end{align}
where $(a),(b)$ are due to the tower rule ($h$ is a function of $g$), $(c)$ is due to linearity of conditional expectation and due to definition of $\pi_{y,m}^P$ and, finally, $(d)$ is due to Jensen's inequality. Consider the event:
\begin{equation*}
    E_1: \quad \norm{1}{\widehat{\pi}^{P}_{m}-\pi^{P}_{m}} \leq \varepsilon_m,
\end{equation*}
simultaneously for all $m\in\calM$. Note that the first assertion of the Proposition states event $E_1$ happens with probability at least $1-\alpha$ for chosen $\varepsilon_m$: $\Prob(E_1)\geq 1-\alpha$. Let $E_2$ be the following event:
\begin{equation*}
\begin{aligned}
 E_2: \quad \sum_{y=1}^K\abs{\Prob\roundbrack{Y=y\mid h(X)} - h_{y}(X)} \leq \varepsilon^\star.
\end{aligned}
\end{equation*}
Summing up over labels $y\in\calY$, \eqref{eq:calib_prop_proof} yields that on $E_1$ it holds with probability 1:
\begin{equation*}
\begin{aligned}
 & \sum_{y=1}^K\abs{\Prob\roundbrack{Y=y\mid h(X)} - h_{y}(X)} \\
 \leq\quad & \Exp{}{\norm{1}{\pi_{g(X)}^P- \widehat{\pi}_{g(X)}} \mid h(X)}\\
    \leq \quad & \Exp{}{\varepsilon^\star \mid h(X)}
    = \varepsilon^\star,
\end{aligned}
\end{equation*}
since $\varepsilon^\star$ is a constant. We get that $E_1\subseteq E_2$, and thus $\Prob(E_2)\geq \Prob(E_1)$, and the assertion of the Proposition follows.
\end{proof}
\begin{proof}[Proof of Proposition~\ref{prop:calibration_label_shift}]
The Proposition is a straightforward combination of the Bayes rule and label shift assumption. Given a predictor $f$, for any class label $y\in \calY$ and any bin $B_m$, $m\in\calM=\curlybrack{1,\dots, M}$ one can equivalently represent conditional probabilities with respect to the target distribution as:
\begin{equation*}
    \begin{aligned}
         & \Prob_{Q}\roundbrack{Y=y \mid f(X)\in B_m} \\
        \overset{(a)}{=} \quad &\Prob_{Q}\roundbrack{f(X)\in B_m\mid Y=y}\cdot \frac{\Prob_{Q}\roundbrack{Y=y}}{\Prob_{Q}\roundbrack{f(X)\in B_m}} \\
        \overset{(b)}{=} \quad & \Prob_{P}\roundbrack{f(X)\in B_m\mid Y=y}\cdot \frac{\Prob_{Q}\roundbrack{Y=y}}{\Prob_{Q}\roundbrack{f(X)\in B_m}}\\ 
        \overset{(c)}{=} \quad & \Prob_{P}\roundbrack{Y=y\mid f(X)\in B_m}
        \cdot  \frac{\Prob_{Q}\roundbrack{Y=y}}{\Prob_{P}\roundbrack{Y=y}}\cdot \frac{\Prob_{P}\roundbrack{f(X)\in B_m}}{\Prob_{Q}\roundbrack{f(X)\in B_m}} \\
        =\quad & \Prob_{P}\roundbrack{Y=y\mid X\in B_m}\cdot w(y) \cdot V_m,
    \end{aligned}
\end{equation*}
where $w(y)$ is the importance weight of label $y$ and $V_m$ is the `relative volume' of bin $B_m$. Steps (a), (c) are due to the Bayes rule, (b) is due to label shift assumption. Normalization: $\sum_{k=1}^K\Prob_{Q}\roundbrack{Y=k\mid f(X)\in B_m}=1$, implies that:
\begin{equation*}
\begin{aligned}
V_m & = \frac{1}{\sum_{k=1}^K\pi^P_{k,m}\cdot w(k)}.
\end{aligned}
\end{equation*}
Thus for all bins $m\in\calM$ and labels $y\in\calY$ it holds:
\begin{equation*}
    \pi^Q_{y,m} = \frac{\pi^P_{y,m} \cdot w(y)}{\sum_{k=1}^K\pi^P_{k,m}\cdot w(k)},
\end{equation*}
which concludes the proof of the Proposition. 
\end{proof}


\begin{proof}[Proof of Theorem~\ref{thm:calib_est_imp_weights}]


By triangle inequality, one obtains that for any bin $m\in\calM$:
\begin{align}
    \sum_{y=1}^K\abs{\widehat{\pi}^{(\widehat{w})}_{y,m}-\pi^{Q}_{y,m}} 
    \leq  \sum_{y=1}^K\abs{\widehat{\pi}^{{(w)}}_{y,m}-\pi^{Q}_{y,m}} + \sum_{y=1}^K\abs{\widehat{\pi}^{(\widehat{w})}_{y,m}-\widehat{\pi}^{(w)}_{y,m}}.\label{eq:est_imp_w_triangle_ineq}
\end{align}
Consider the first term in~\eqref{eq:est_imp_w_triangle_ineq}. For any $y\in\calY$:
\begin{equation*}
\begin{aligned}
    & \abs{\widehat{\pi}^{(w)}_{y,m}-\pi^{Q}_{y,m}} \\
    = \quad & \abs{\frac{w(y)\cdot \widehat{\pi}^{P}_{y,m}}{\sum_{k=1}^Kw(k)\cdot \widehat{\pi}^P_{k,m}} - \frac{w(y)\cdot \pi^{P}_{y,m}}{\sum_{l=1}^Kw(l)\cdot \pi^P_{l,m}}} \\
 =\quad &\abs{\frac{\widehat{\pi}^{P}_{y,m}}{\sum_{k=1}^Kw(k)\cdot \widehat{\pi}^P_{k,m}} - \frac{\pi^{P}_{y,m}}{\sum_{l=1}^Kw(l)\cdot \pi^P_{l,m}}} \cdot w(y)\\
     =\quad & \abs{\frac{\widehat{\pi}^{P}_{y,m}}{\sum_{k=1}^Kw(k)\cdot \widehat{\pi}^P_{k,m}} - \frac{\pi^{P}_{y,m}-\widehat{\pi}^{P}_{y,m}+\widehat{\pi}^{P}_{y,m}}{\sum_{l=1}^Kw(l)\cdot \pi^P_{l,m}}} \cdot w(y) \\
         \overset{(a)}{\leq} \quad & \abs{\frac{1}{\sum_{k=1}^Kw(k)\cdot \widehat{\pi}^P_{k,m}} - \frac{1}{\sum_{l=1}^Kw(l)\cdot \pi^P_{l,m}}}\cdot \widehat{\pi}^{P}_{y,m}\cdot w(y)
         +  w(y)\cdot \abs{\frac{\pi^{P}_{y,m}-\widehat{\pi}^{P}_{y,m}}{\sum_{l=1}^Kw(l)\cdot \pi^P_{l,m}}},
\end{aligned}   
\end{equation*}
where $(a)$ is due to triangle inequality. We infer that:
\begin{align*}
    & \sum_{y=1}^K\abs{\widehat{\pi}^{(w)}_{y,m}-\pi^{Q}_{y,m}} \\
    \leq\quad & \abs{1 - \frac{\sum_{k=1}^Kw(k)\cdot \widehat{\pi}^P_{k,m}}{\sum_{l=1}^Kw(l)\cdot \pi^P_{l,m}}}  
     +  \frac{\sum_{y=1}^K w(y) \abs{\pi^{P}_{y,m}-\widehat{\pi}^{P}_{y,m}}}{\sum_{l=1}^Kw(l)\cdot \pi^P_{l,m}} \\
     =\quad & \frac{\abs{ \sum_{k=1}^Kw(k)\cdot \roundbrack{\widehat{\pi}^P_{k,m}-\pi^P_{l,m}}}}{\sum_{l=1}^Kw(l)\cdot \pi^P_{l,m}} 
      +  \frac{\sum_{y=1}^K w(y) \abs{\pi^{P}_{y,m}-\widehat{\pi}^{P}_{y,m}}}{\sum_{l=1}^Kw(l)\cdot \pi^P_{l,m}}\\
     \overset{(a)}{\leq} \quad & 2\cdot \frac{\sum_{y=1}^K w(y) \abs{\pi^{P}_{y,m}-\widehat{\pi}^{P}_{y,m}}}{\sum_{l=1}^Kw(l)\cdot \pi^P_{l,m}} \\
     \overset{(b)}{\leq} \quad & 2\cdot \frac{\roundbrack{\sup_k w(k)} \cdot \sum_{y=1}^K\abs{\pi^{P}_{y,m}-\widehat{\pi}^{P}_{y,m}}}{\sum_{l=1}^Kw(l)\cdot \pi^P_{l,m}},
\end{align*}
where $(a)$ is due to triangle inequality and $(b)$ is due to H\"older's inequality. Observe that for any $m\in\calM$:
\begin{equation*}\begin{aligned}
   \frac{1}{\sum_{k=1}^Kw(k)\cdot \pi^P_{k,m}}
 \leq \frac{1}{\roundbrack{\inf\limits_{k:w(k)\neq 0}w(k)}\cdot \sum_{l=1}^K \pi^P_{l,m}}
 = \frac{1}{\inf\limits_{k:w(k)\neq 0}w(k)},
 \end{aligned}
\end{equation*}
as $\sum_{l=1}^K \pi^P_{l,m} = 1$, $\forall m\in\calM$. Hence, for any $m\in\calM$,
\begin{align}
      \sum_{y=1}^K\abs{\widehat{\pi}^{(w)}_{y,m}-\pi^{Q}_{y,m}} 
    \leq  2\cdot \frac{\sup_k w(k)}{\inf\limits_{k:w(k)\neq 0}w(k)}\cdot \sum_{y=1}^K\abs{\pi^{P}_{y,m}-\widehat{\pi}^{P}_{y,m}}. \label{eq:first_term_bound}
\end{align}
Now, consider the second term in~\eqref{eq:est_imp_w_triangle_ineq}. Observe that:
\begin{equation*}
\begin{aligned}
     & \abs{\widehat{\pi}^{(\widehat{w})}_{y,m}-\widehat{\pi}^{(w)}_{y,m}} \\
     =\quad & \abs{\frac{\widehat{w}(y)\cdot \widehat{\pi}^{P}_{y,m}}{\sum_{k=1}^K\widehat{w}(k)\cdot \widehat{\pi}^P_{k,m}} - \frac{w(y)\cdot \widehat{\pi}^{P}_{y,m}}{\sum_{l=1}^Kw(l)\cdot \widehat{\pi}^P_{l,m}}} \\
 =\quad &  \abs{\frac{\widehat{w}(y)}{\sum_{k=1}^K\widehat{w}(k)\cdot \widehat{\pi}^P_{k,m}} - \frac{w(y)}{\sum_{l=1}^Kw(l)\cdot \widehat{\pi}^P_{l,m}}} \cdot \widehat{\pi}^{P}_{y,m}\\
 =\quad &  \abs{\frac{\widehat{w}(y)}{\sum_{k=1}^K\widehat{w}(k)\cdot \widehat{\pi}^P_{k,m}} - \frac{w(y)-\widehat{w}(y)+\widehat{w}(y)}{\sum_{l=1}^Kw(l)\cdot \widehat{\pi}^P_{l,m}}} \cdot \widehat{\pi}^{P}_{y,m}\\
 \overset{(a)}{\leq} \quad &  \abs{\frac{1}{\sum_{k=1}^K\widehat{w}(k)\cdot \widehat{\pi}^P_{k,m}} - \frac{1}{\sum_{l=1}^Kw(l)\cdot \widehat{\pi}^P_{l,m}}}\cdot \widehat{\pi}^{P}_{y,m}\cdot\widehat{w}(y) 
      +  \frac{\widehat{\pi}^{P}_{y,m}\cdot\abs{ w(y)-\widehat{w}(y)}}{\sum_{l=1}^Kw(l)\cdot \widehat{\pi}^P_{l,m}},
\end{aligned}   
\end{equation*}
where $(a)$ is due to triangle inequality. Thus, 
\begin{equation*}
\begin{aligned}
     & \sum_{y=1}^K\abs{\widehat{\pi}^{(\widehat{w})}_{y,m}-\widehat{\pi}^{(w)}_{y,m}} \\
     \leq\quad &  \abs{\frac{1}{\sum_{k=1}^K\widehat{w}(k)\cdot \widehat{\pi}^P_{k,m}} - \frac{1}{\sum_{l=1}^Kw(l)\cdot \widehat{\pi}^P_{l,m}}}\cdot \sum_{y=1}^K \widehat{\pi}^{P}_{y,m}\cdot\widehat{w}(y)
      +  \frac{\sum_{y=1}^K\widehat{\pi}^{P}_{y,m}\cdot\abs{ w(y)-\widehat{w}(y)}}{\sum_{l=1}^Kw(l)\cdot \widehat{\pi}^P_{l,m}}\\
     =\quad &  \abs{1 - \frac{ \sum_{y=1}^K \widehat{w}(y)\cdot \widehat{\pi}^{P}_{y,m}}{\sum_{l=1}^Kw(l)\cdot \widehat{\pi}^P_{l,m}}}
      +   \frac{\sum_{y=1}^K\widehat{\pi}^{P}_{y,m}\cdot\abs{ w(y)-\widehat{w}(y)}}{\sum_{l=1}^Kw(l)\cdot \widehat{\pi}^P_{l,m}}\\
     = \quad &    \frac{\abs{ \sum_{y=1}^K \roundbrack{w(y)-\widehat{w}(y)}\cdot \widehat{\pi}^{P}_{y,m}}}{\sum_{l=1}^Kw(l)\cdot \widehat{\pi}^P_{l,m}} 
     +   \frac{\sum_{y=1}^K\widehat{\pi}^{P}_{y,m}\cdot\abs{ w(y)-\widehat{w}(y)}}{\sum_{l=1}^Kw(l)\cdot \widehat{\pi}^P_{l,m}}\\
     \leq \quad &  \frac{2 \norm{\infty}{\widehat{w}-w}}{\sum_{l=1}^Kw(l)\cdot \widehat{\pi}^P_{l,m}},\\
\end{aligned}
\end{equation*}
since $\sum_{k=1}^K \widehat{\pi}^P_{k,m} = 1$, $\forall m\in\calM$. Similarly, for any $m\in\calM$:
\begin{equation*}
\begin{aligned}
     \frac{1}{\sum_{k=1}^Kw(k)\cdot \widehat{\pi}^P_{k,m}}
    \leq  \frac{1}{\roundbrack{\inf_{l:w(l)\neq 0}w(l)}\cdot \sum_{k=1}^K \widehat{\pi}^P_{k,m}} 
    =  \frac{1}{\inf_{l:w(l)\neq 0}w(l)}.
\end{aligned}
\end{equation*}
Thus, we get that for any $m\in\calM$:
\begin{equation}
\label{eq:second_term}
    \sum_{y=1}^K\abs{\widehat{\pi}^{(\widehat{w})}_{y,m}-\widehat{\pi}^{(w)}_{y,m}} \leq \frac{2 \norm{\infty}{\widehat{w}-w}}{\inf_{l:w(l)\neq 0}w(l)}.
\end{equation}
Combining bounds~\eqref{eq:first_term_bound} and~\eqref{eq:second_term} with the bound~\eqref{eq:est_imp_w_triangle_ineq}, we obtain that for any $m\in\calM$:
\begin{equation*}
\begin{aligned}
     \sum_{y=1}^K\abs{\widehat{\pi}^{(\widehat{w})}_{y,m}-\pi^{Q}_{y,m}}
    \leq  2\kappa\cdot \sum_{y=1}^K\abs{\widehat{\pi}^{P}_{y,m}-\pi^{P}_{y,m}}
    +  \frac{2 \norm{\infty}{\widehat{w}-w}}{\inf_{l:w(l)\neq 0}w(l)},
\end{aligned}
\end{equation*}
which concludes the proof of the Theorem.
\end{proof}

\subsection{Simulation on real data}\label{appsubsec:calib_real_data}

For the simulation mentioned in Section~\ref{subsec:calib_label_shift} we use \texttt{wine quality} dataset~\citep{data_wine}. The original dataset contains ratings for white wines and we reduce it to a binary classification problem by treating wine as good if the corresponding rating is at least 7 on a 10-point scale. Logistic regression is used as an underlying predictor and for each pass the original dataset $\calD$ is, first, split into two disjoint and approximately equal sets $\calD_1$ and $\calD_2$. Label shift is simulated via resampling of $\widetilde{\calD}_1$ with class proportions $p=(0.8,0.2)$ and $\widetilde{\calD}_2$ with class proportions $(0.5,0.5)$. Final splitting resulted in $\approx  1350$ instances used for both training and calibration, $\approx 700$ and $\approx 400$  instances used for importance weights estimation on the source and the target respectively and $\approx 1100$ instances used for the test. Uniform-mass binning with 10 bins was used for calibration purposes. For 4 random data splits the resulting reliability curves are presented on Figure~\ref{fig:calib_real_data} illustrating that calibration with proper reweighting leads to approximate calibration on the target domain and uncorrected fails to do so.

\begin{figure*}
    \centering
    \begin{subfigure}{0.45\textwidth}
        \centering
        \includegraphics[width=\textwidth]{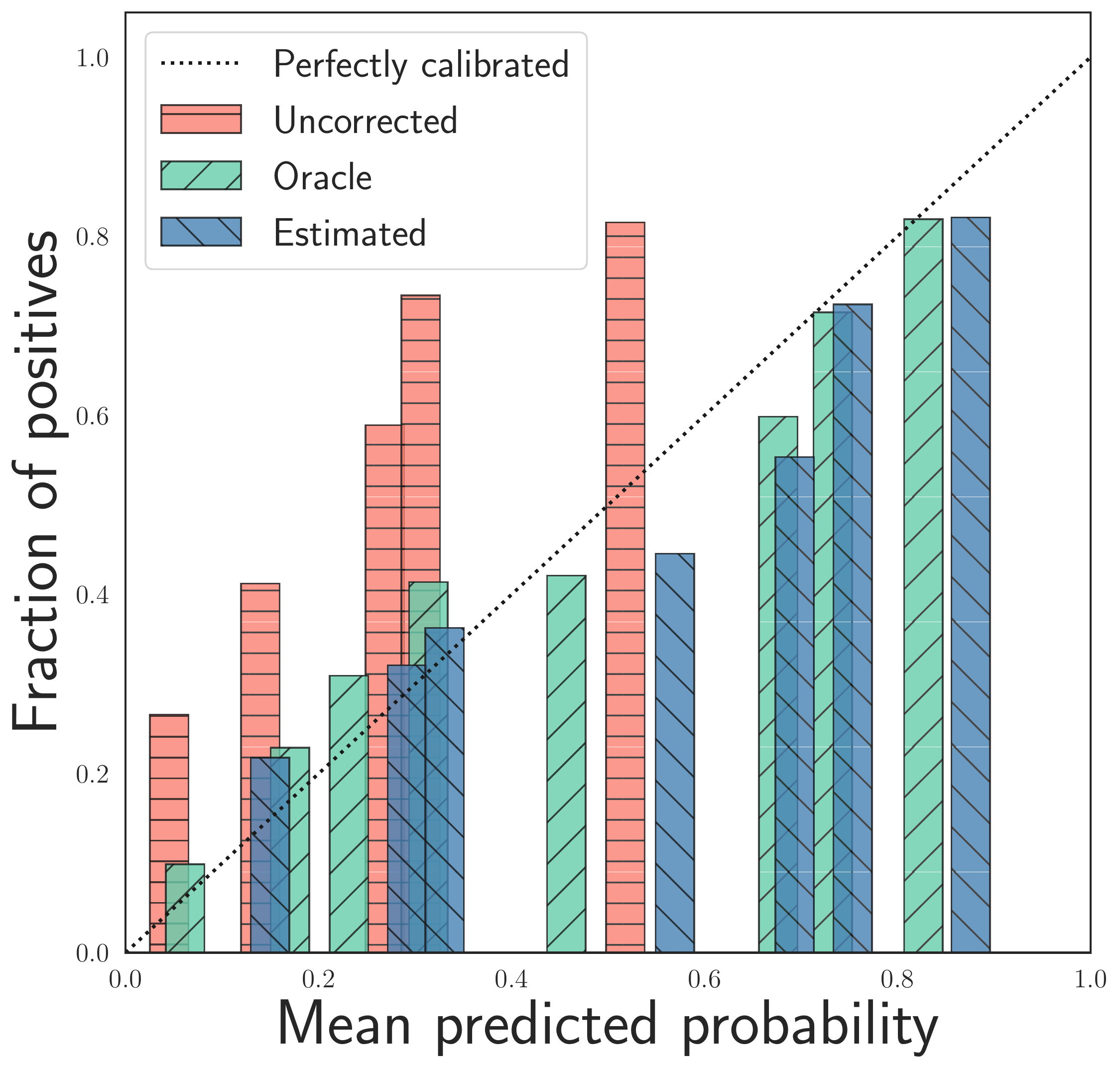}
    \caption{}
    \end{subfigure}%
    ~ 
    \begin{subfigure}{0.45\textwidth}
        \centering
        \includegraphics[width=\textwidth]{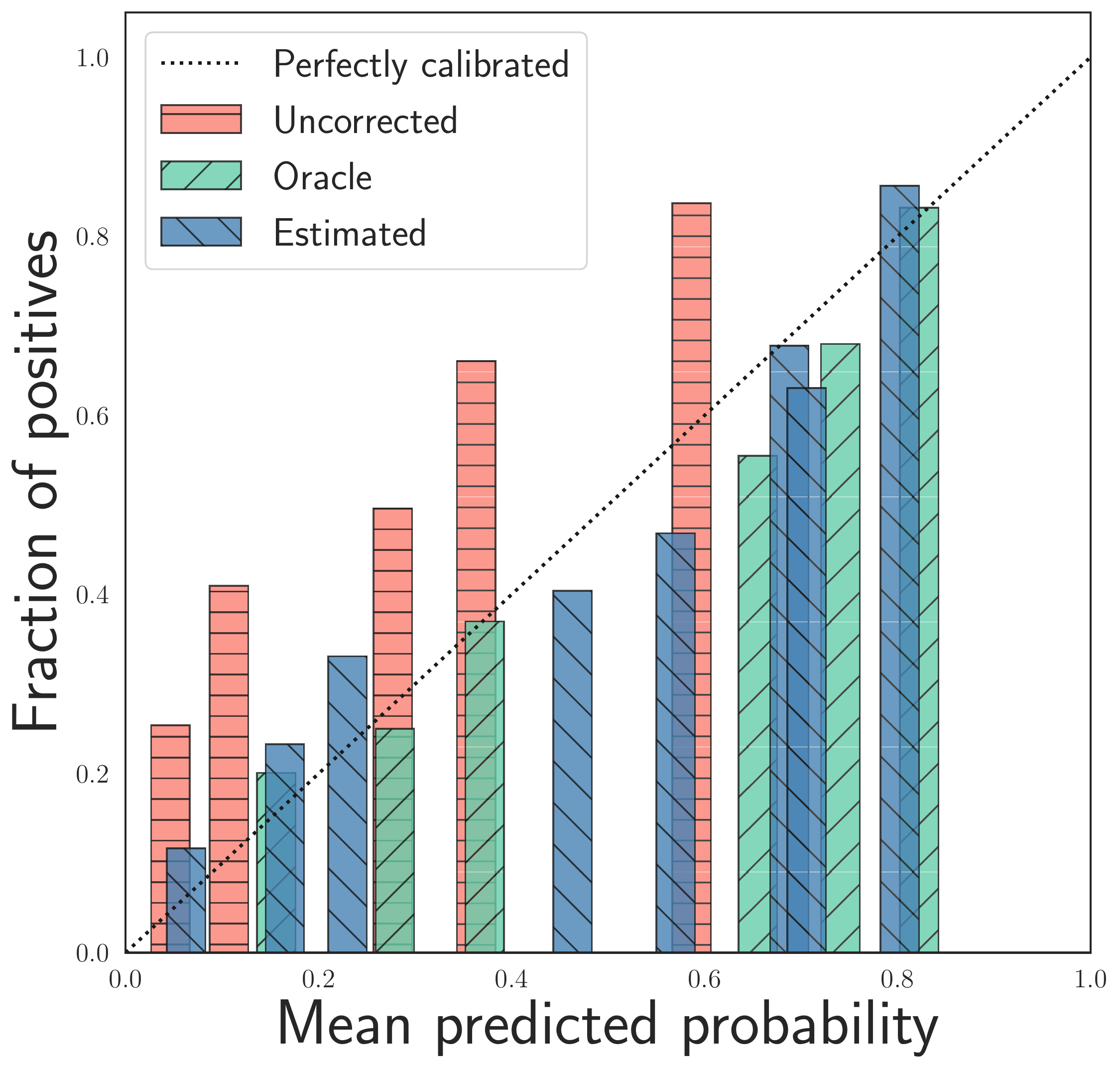}
    \caption{}
    \end{subfigure}
    ~ 
    \begin{subfigure}{0.45\textwidth}
        \centering
        \includegraphics[width=\textwidth]{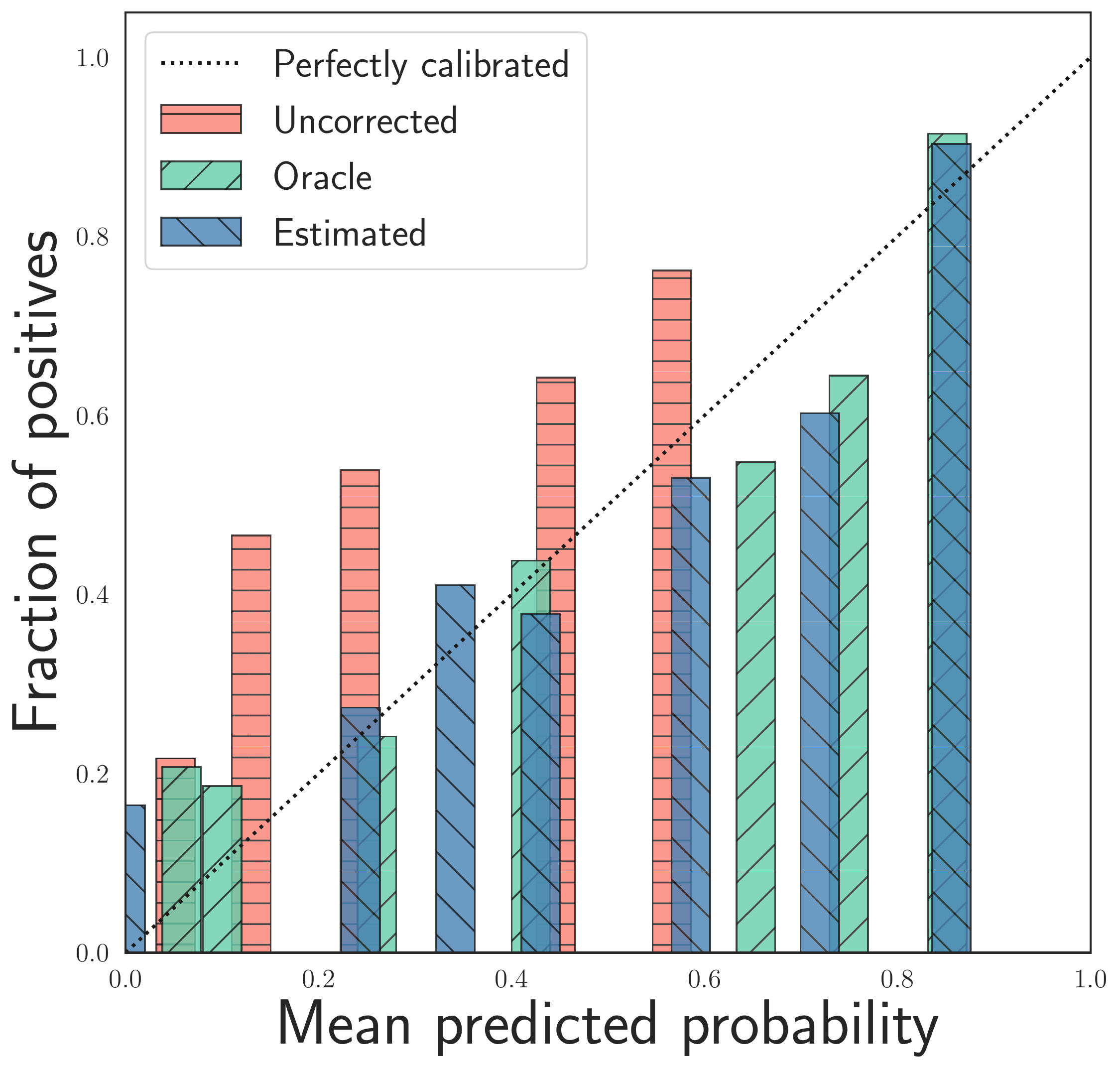}
    \caption{}
    \end{subfigure}
    ~ 
    \begin{subfigure}{0.45\textwidth}
        \centering
        \includegraphics[width=\textwidth]{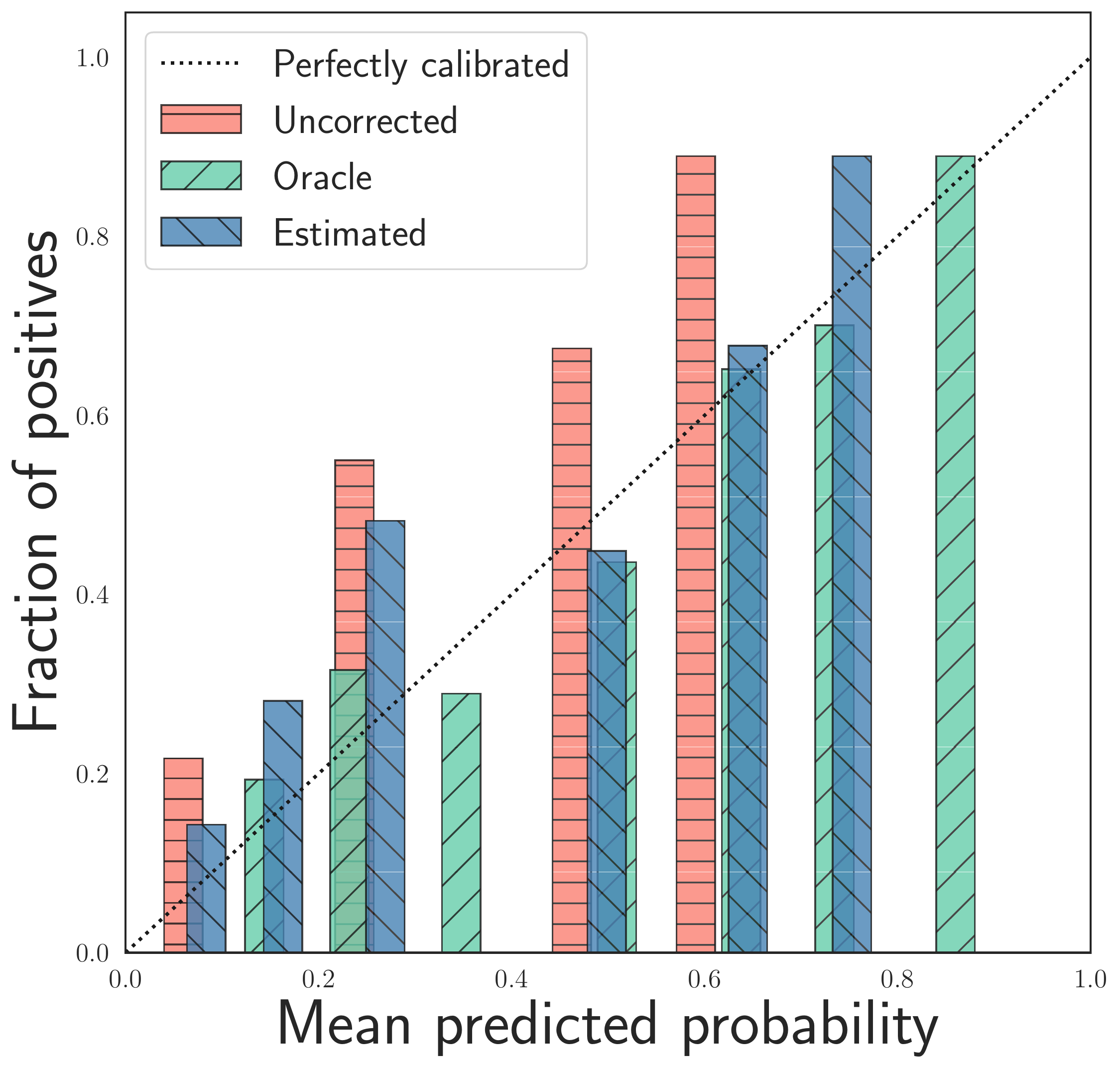}
    \caption{}
    \end{subfigure}
    \caption{Reliability curves for the simulation on the \texttt{wine quality} dataset obtained for several data splits. Notice that the bars indicating calibration using oracle and estimated importance weights  are quite similar to each other, but most importantly that both are very close to the ideal diagonal line (perfect calibration). In contrast, the uncorrected bars are poorly calibrated, demonstrating both the need for handling label shift and the relative success of our procedures in doing so. See Section~\ref{appsubsec:calib_real_data} for details.}
    \label{fig:calib_real_data}
\end{figure*}

\section{Auxiliary results}
Note Lemma~\ref{lem:quanile_lemma} and Lemma~\ref{lem:weight_quant_lemma} were originally formulated for possibly unbounded non-conformity scores. It is easy to see that we can safely replace point masses $\delta_\infty$ by $\delta_1$ in the conformal classification setting considered in this work.

\begin{theorem}[Bretagnolle-Huber-Carol inequality~\citep{vandervaart1996weak}]
\label{thm:bret_huber_carol}
If the random vector $(N_1,\dots,N_k)$ is multinomially distributed with parameters $n$ and $(p_1$, $\dots$, $p_k)$, then
\begin{equation*}
    \Prob\roundbrack{\sum_{i=1}^k\abs{N_i-np_i}\geq 2\sqrt{n}\lambda}\leq 2^ke^{-2\lambda^2},\quad \lambda>0.
\end{equation*}
\end{theorem}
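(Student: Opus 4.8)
The plan is to reduce this $\ell_1$ concentration statement to a union bound over sign patterns of one-sided Hoeffding bounds. The starting observation is the variational identity
$\sum_{i=1}^k \abs{N_i - np_i} = \max_{\epsilon \in \{-1,+1\}^k} \sum_{i=1}^k \epsilon_i (N_i - np_i)$,
since the maximizing sign vector simply takes $\epsilon_i = \mathrm{sgn}(N_i - np_i)$. This converts the absolute-value sum, which is awkward to handle directly, into a maximum of $2^k$ linear statistics, each of which concentrates by a standard bounded-differences argument.

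First I would realize the multinomial vector through i.i.d. categorical draws: let $X_1, \dots, X_n$ be i.i.d. with $\Prob(X_j = i) = p_i$, so that $N_i = \sum_{j=1}^n \indicator{X_j = i}$. Then for any fixed sign vector $\epsilon$ one computes $\sum_{i=1}^k \epsilon_i (N_i - np_i) = \sum_{j=1}^n g_\epsilon(X_j)$, where $g_\epsilon(x) = \epsilon_x - \sum_{i=1}^k \epsilon_i p_i$. Each summand $g_\epsilon(X_j)$ has mean zero (the constant $\sum_i \epsilon_i p_i$ is exactly its expectation) and takes values in an interval of length $2$, since $\epsilon_{X_j} \in \{-1,+1\}$ and we only subtract a fixed constant. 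Thus $\sum_j g_\epsilon(X_j)$ is a centered sum of independent random variables each with range $2$.

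Next I would apply Hoeffding's inequality to $\sum_{j=1}^n g_\epsilon(X_j)$. With each range equal to $2$, the denominator in Hoeffding's exponent is $\sum_{j=1}^n 2^2 = 4n$, giving $\Prob(\sum_{j=1}^n g_\epsilon(X_j) \geq t) \leq \expText{-t^2/(2n)}$. Substituting $t = 2\sqrt{n}\lambda$ yields the clean per-pattern bound $\expText{-2\lambda^2}$. Finally, combining the variational identity with a union bound over the $2^k$ sign vectors gives $\Prob(\sum_{i=1}^k \abs{N_i - np_i} \geq 2\sqrt{n}\lambda) \leq \sum_{\epsilon} \expText{-2\lambda^2} = 2^k \expText{-2\lambda^2}$, which is exactly the claimed inequality.

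The only genuinely clever step is the variational representation of the $\ell_1$ deviation together with the i.i.d. realization of the counts; once those are in place, the argument is a textbook Hoeffding-plus-union-bound computation, so I do not anticipate any real obstacle. The one point that warrants care is the range bound: each $g_\epsilon$ varies only over an interval of length $2$ rather than $4$, and an overly pessimistic range would spoil the constant in the exponent, producing $2^k \expText{-\lambda^2}$ instead of the stated $2^k \expText{-2\lambda^2}$.
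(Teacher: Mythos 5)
Your proof is correct, and in fact the paper offers no proof to compare against: Theorem~\ref{thm:bret_huber_carol} is stated as an imported auxiliary result, cited to \citet{vandervaart1996weak}, and used as a black box in the proof of Theorem~\ref{thm:calib_guar_source}. Your argument is the standard one for this inequality, and every step checks out: the variational identity $\sum_{i=1}^k \abs{N_i - np_i} = \max_{\epsilon \in \{-1,+1\}^k} \sum_{i=1}^k \epsilon_i (N_i - np_i)$ is exact; the categorical realization gives $\sum_{i=1}^k \epsilon_i(N_i - np_i) = \sum_{j=1}^n g_\epsilon(X_j)$ with $g_\epsilon$ centered and of range $2$; Hoeffding then gives $\expText{-2t^2/(4n)} = \expText{-2\lambda^2}$ at $t = 2\sqrt{n}\lambda$; and the union bound over the $2^k$ sign patterns yields the stated constant. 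Your closing caution is also well placed: the range-$2$ (rather than range-$4$) bound is exactly what preserves the factor $2$ in the exponent. A minor remark: an equivalent and marginally more economical variant exploits $\sum_{i=1}^k (N_i - np_i) = 0$ to write the $\ell_1$ deviation as $2\max_{A \subseteq \{1,\dots,k\}} \sum_{i \in A}(N_i - np_i)$, reducing each event to a one-sided deviation of a Binomial$(n, \sum_{i\in A} p_i)$ count with per-summand range $1$; this is the form in which the inequality is usually proved in the empirical-process literature, but it is the same argument as yours up to the bijection between sign vectors and subsets.
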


\begin{lemma}[Lemma 1~\citep{tibs2019conf}]\label{lem:quanile_lemma}
Assume $Z_1,\dots,Z_{m+1}$ are exchangeable random variables supported on $[0,1]$. Then for any $\beta\in(0,1)$,
\begin{equation*}
    \Prob\roundbrack{Z_{m+1}\leq \quant_\beta \roundbrack{Z_{1:m}\cup \curlybrack{1}}}\geq \beta. \footnote{In this case, $\quant_\beta \roundbrack{Z_{1:m}\cup \curlybrack{1}}$ can be equivalently defined as the $\lceil \beta(m+1) \rceil$-th smallest element of the set $\curlybrack{Z_i}_{i=1}^m$ if $\beta\leq \frac{m}{m+1}$, and as $1$ otherwise.}
\end{equation*}
Moreover, if $Z_i$, $i=1,\dots,m+1$ are almost surely distinct, then the above probability is upper bounded by $\beta + \tfrac1{m+1}$.
\end{lemma}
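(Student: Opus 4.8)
The plan is to reduce the statement to a simple fact about the rank of $Z_{m+1}$ within the exchangeable sample, using the explicit description of the empirical quantile recalled in the footnote. Write $k := \ceil{\beta(m+1)}$. First I would dispose of the easy regime $\beta > \tfrac{m}{m+1}$, where $k = m+1$ and the quantile equals $1$; since $Z_{m+1}\leq 1$ almost surely, the event $\curlybrack{Z_{m+1}\leq \quant_\beta(Z_{1:m}\cup\curlybrack{1})}$ holds with probability $1$, and as $\beta + \tfrac{1}{m+1} > 1$ in this regime, both the lower and upper bounds are immediate.

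For the main regime $\beta \le \tfrac{m}{m+1}$, so that $k \le m$, the footnote identifies $\quant_\beta(Z_{1:m}\cup\curlybrack{1})$ with $Z_{(k)}$, the $k$-th smallest value among $Z_1,\dots,Z_m$ (the appended constant $1$ is maximal and therefore does not enter the $k$-th order statistic). Hence the target event becomes $\curlybrack{Z_{m+1}\leq Z_{(k)}}$, which I would rephrase as a statement about the rank of $Z_{m+1}$: it holds precisely when at most $k-1$ of $Z_1,\dots,Z_m$ lie strictly below $Z_{m+1}$, i.e.\ when the rank of $Z_{m+1}$ within the full collection $\curlybrack{Z_1,\dots,Z_{m+1}}$ is at most $k$.

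The core step is then to invoke exchangeability. When the $Z_i$ are almost surely distinct, the rank of $Z_{m+1}$ among the $m+1$ variables is uniform on $\curlybrack{1,\dots,m+1}$, so the event has probability exactly $k/(m+1)$; the chain $\beta \le k/(m+1) < \beta + \tfrac{1}{m+1}$, which follows from $\beta(m+1) \le \ceil{\beta(m+1)} < \beta(m+1)+1$, then delivers both the lower bound and the matching upper bound. Without the distinctness hypothesis I would argue only the lower bound: counting weakly-smaller elements, exchangeability still forces the probability that the rank of $Z_{m+1}$ is at most $k$ to be at least $k/(m+1)\ge\beta$, with ties only helping the event to occur.

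The step I expect to require the most care is the translation between the quantile inequality $\curlybrack{Z_{m+1}\le Z_{(k)}}$ and the clean rank event, since ties --- both among the $Z_i$ and between some $Z_i$ and the appended value $1$ --- must be handled so that the inequality directions are preserved. The distinctness hypothesis is exactly what removes this ambiguity and simultaneously upgrades the one-sided estimate $k/(m+1)\ge\beta$ to the two-sided bound.
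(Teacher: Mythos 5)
Your argument is correct, and it is essentially the standard proof of this result: the paper itself does not prove the lemma but imports it verbatim from \citet{tibs2019conf}, where the argument is exactly this reduction to the rank of $Z_{m+1}$ among $m+1$ exchangeable variables (uniform on $\curlybrack{1,\dots,m+1}$ under a.s.\ distinctness, giving probability exactly $\lceil \beta(m+1)\rceil/(m+1)\in[\beta,\beta+\tfrac{1}{m+1})$). The only step you leave slightly informal is the lower bound in the presence of ties; it is worth recording that with the strict-rank definition $R_j = 1+\#\curlybrack{i\neq j: Z_i<Z_j}$ one always has $\#\curlybrack{j: R_j\le k}\ge k$ pointwise, so exchangeability and averaging over $j$ give $\Prob(R_{m+1}\le k)\ge k/(m+1)\ge\beta$ --- which is precisely your ``ties only help'' claim made rigorous.
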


\begin{lemma}[Lemma 3~\citep{tibs2019conf}]\label{lem:weight_quant_lemma}
Let $Z_{i}$, $i=1,\dots,n+1$ be weighted exchangeable random variables with weight functions $w_1$, $\dots$, $w_{n+1}$ and supported on $[0,1]$. Let $V_i=S\roundbrack{Z_i,Z_{-i}}$, where $Z_{-i}=Z_{1:(n+1)}\backslash \{Z_i\}$, $i=1,\dots,n+1$ and $S$ is an arbitrary score function. Define
\begin{equation}\label{eq:weight_coef}
    p_i^w(z_1,\dots,z_{n+1}) = \frac{\sum_{\sigma:\sigma(n+1)=i}\prod_{j=1}^{n+1}w_j(z_{\sigma(j)})}{\sum_{\sigma}\prod_{j=1}^{n+1}w_j(z_{\sigma(j)})},
\end{equation}
for $i=1,\dots,n+1$, where summations are taken over permutations $\sigma$ of $1,\dots,n+1$. Then for any $\beta\in (0,1)$,
\begin{equation*}
\begin{aligned}
      \Prob\roundbrack{V_{n+1} \leq  \quant_\beta\roundbrack{G_n}}\geq 1-\beta,
\end{aligned}
\end{equation*}
where the distribution $G_n$ is defined as
\begin{equation*}
    G_n :=  \sum_{i=1}^n p_i^w(Z_1,\dots,Z_{n+1})\delta_{V_i}+ p_{n+1}^w(Z_1,\dots,Z_{n+1})\delta_1.
\end{equation*}

\end{lemma}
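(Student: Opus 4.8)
The plan is to prove the lemma by conditioning on the \emph{unordered} collection of the variables and reducing the claim to an elementary statement about quantiles of a fixed discrete distribution. Let $\mathcal{E}$ denote the $\sigma$-field generated by the multiset $\curlybrack{Z_1,\dots,Z_{n+1}}$ (equivalently, by the multiset of values together with the induced multiset of scores $V_i=S\roundbrack{Z_i,Z_{-i}}$). The entire argument is carried out conditionally on $\mathcal{E}$ and then integrated out.

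First I would pin down the conditional law of the test value. Using the weighted-exchangeability factorization $f(z_1,\dots,z_{n+1})=\prod_{j=1}^{n+1}\omega_j(z_j)\cdot g(z_1,\dots,z_{n+1})$ with $g$ permutation-symmetric, I condition on the event that the unordered collection equals a fixed multiset $\curlybrack{z_1,\dots,z_{n+1}}$. Since $g$ takes the same value on every permutation of its arguments, it cancels between numerator and denominator, and the conditional probability that position $n+1$ carries value $z_i$ becomes
\begin{equation*}
    \Prob\roundbrack{Z_{n+1}=z_i \mid \mathcal{E}} = \frac{\sum_{\sigma:\sigma(n+1)=i}\prod_{j=1}^{n+1}\omega_j(z_{\sigma(j)})}{\sum_{\sigma}\prod_{j=1}^{n+1}\omega_j(z_{\sigma(j)})} = p_i^w(z_1,\dots,z_{n+1}).
\end{equation*}
Because each score $V_i=S\roundbrack{Z_i,Z_{-i}}$ is a deterministic function of the value at position $i$ together with the (now fixed) multiset of the remaining values, this transfers verbatim to scores: conditional on $\mathcal{E}$, the test score $V_{n+1}$ equals $v_i:=S\roundbrack{z_i,\curlybrack{z_1,\dots,z_{n+1}}\setminus\curlybrack{z_i}}$ with probability $p_i^w$.

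The second step is a deterministic quantile argument. Conditional on $\mathcal{E}$, introduce the \emph{oracle} distribution $\widetilde G:=\sum_{i=1}^{n+1}p_i^w\,\delta_{V_i}$. The crucial observation is that, although the labelling of positions is random, $\widetilde G$ is invariant under it and equals $\sum_{i=1}^{n+1}p_i^w\,\delta_{v_i}$, which is exactly the conditional law of $V_{n+1}$ found above; hence, by the definition $\quant_\beta(F)=\inf\curlybrack{z:F(z)\geq\beta}$,
\begin{equation*}
    \Prob\roundbrack{V_{n+1}\leq \quant_\beta\roundbrack{\widetilde G}\mid \mathcal{E}}\geq \beta.
\end{equation*}
To pass from the unobservable $\widetilde G$ to the observable $G_n$, I use that all scores lie in $[0,1]$: replacing $\delta_{V_{n+1}}$ by $\delta_1$ moves the mass $p_{n+1}^w$ rightward, so the CDF of $G_n$ is pointwise no larger than that of $\widetilde G$, giving $\quant_\beta(G_n)\geq \quant_\beta(\widetilde G)$ for every realization. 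Therefore $\curlybrack{V_{n+1}\leq \quant_\beta(\widetilde G)}\subseteq\curlybrack{V_{n+1}\leq \quant_\beta(G_n)}$, and the previous display yields $\Prob(V_{n+1}\leq \quant_\beta(G_n)\mid \mathcal{E})\geq\beta$; integrating over $\mathcal{E}$ completes the proof.

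I expect the main obstacle to be the bookkeeping in the conditioning step: one must verify carefully that conditioning on the unordered collection really does produce the weighted-permutation law with weights $p_i^w$, and in doing so handle both the atomless (continuous-score) regime relevant to the randomized scores of the application and the possibility of ties, where distinct values sharing a common score require their masses to be aggregated. A secondary subtlety is that $G_n$ is itself random given $\mathcal{E}$ (the mass placed on $\delta_1$ depends on which value landed at the test position), so the domination $\quant_\beta(G_n)\geq \quant_\beta(\widetilde G)$ must be argued to hold for \emph{every} realization rather than merely in distribution.
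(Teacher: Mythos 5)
The paper does not actually prove this lemma itself --- it is imported verbatim from \citet{tibs2019conf} (their Lemma 3) --- and your argument is precisely the standard proof from that reference: condition on the unordered multiset, identify the conditional law of $V_{n+1}$ as $\sum_{i} p_i^w\,\delta_{v_i}$ via cancellation of the permutation-symmetric factor $g$, and observe that replacing $\delta_{V_{n+1}}$ by $\delta_1$ only moves mass rightward and hence only raises the $\beta$-quantile, realization by realization. Note that you in fact establish the bound $\geq \beta$, which is the correct form (and the one actually invoked in the proof of Theorem~\ref{thm:oracle_imp_weights_ps} with $\beta=1-\alpha$); the ``$\geq 1-\beta$'' in the statement as reproduced here appears to be a typo.
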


\end{document}